\theoremstyle{plain}
\newtheorem{lemma}{Lemma}
\newtheorem{theorem}{Theorem}
\newtheorem{assumption}{Assumption}
\newtheorem{remark}{Remark}
\title{Ordered Local Momentum \\ for Asynchronous Distributed Learning under Arbitrary Delays}
\author{
    Chang-Wei Shi\textsuperscript{\rm 1},
    Shi-Shang Wang\textsuperscript{\rm 1},
    Wu-Jun Li\textsuperscript{\rm 1,\rm 2}\thanks{Corresponding author.} \\
}
\begin{document}

\def\0{{\bf 0}}
\def\w{{\bf w}}
\def\u{{\bf u}}
\def\v{{\bf v}}
\def\g{{\bf g}}
\def\h{{\bf h}}
\def\y{{\bf y}}
\def\RB{{\mathbb R}}
\def\EB{{\mathbb E}}
\def\DM{{\mathcal D}}

\maketitle

\begin{abstract}
Momentum SGD~(MSGD) serves as a foundational optimizer in training deep models due to momentum's key role in accelerating convergence and enhancing generalization.
Meanwhile, asynchronous distributed learning is crucial for training large-scale deep models, especially when the computing capabilities of the workers in the cluster are heterogeneous. 
To reduce communication frequency, local updates are widely adopted in distributed learning. However, how to implement asynchronous distributed MSGD with local updates remains unexplored.
To solve this problem, we propose a novel method, called \underline{or}dered \underline{lo}cal \underline{mo}mentum~(\mbox{OrLoMo}), for asynchronous distributed learning. 
In OrLoMo, each worker runs MSGD locally. Then the local momentum from each worker will be aggregated by the server in order based on its global iteration index. To the best of our knowledge, \mbox{OrLoMo} is the first method to implement asynchronous distributed MSGD with local updates. We prove the convergence of OrLoMo for non-convex problems under arbitrary delays. Experiments validate that OrLoMo can outperform its synchronous counterpart and other asynchronous methods.
\end{abstract}

\section{Introduction}
Stochastic gradient descent~(SGD)~\cite{Robbins&Monro:1951} and its variants \cite{article, kingma2014adam} are commonly used in training deep models. At each iteration, SGD uses a stochastic gradient as an approximation of the  full gradient to update the model parameter. In practice, momentum SGD~(MSGD)~\cite{article} is widely adopted by incorporating momentum into SGD to accelerate convergence and enhance generalization. Furthermore, momentum has been a core component in many popular adaptive optimizers, such as Adam~\cite{kingma2014adam} and \mbox{AMSgrad}~\cite{reddi2018convergence}.

As both models and datasets in deep learning continue to grow rapidly, the demand of training computation has surpassed the capacity of single devices. Distributed learning~\cite{yang2019survey, verbraeken2020survey} addresses this challenge by distributing computation across multiple workers. Existing distributed learning methods primarily fall into two categories: synchronous distributed learning~(SDL) methods and asynchronous distributed learning~(ADL) methods.
SDL methods~\cite{DBLP:conf/iclr/LinHM0D18, DBLP:conf/nips/VogelsKJ19,DBLP:conf/icml/YuJY19, DBLP:conf/iclr/WangTBR20} require all workers to synchronize at each communication round. Thus, SDL methods face the challenge of idling, as the fast workers must wait for the slow workers. This challenge is especially severe in clusters with heterogeneous computing capabilities.
ADL methods~\cite{DBLP:conf/nips/AgarwalD11, lian2018asynchronous, DBLP:conf/icml/YangL21, shiordered} mitigate this by allowing workers to proceed independently since only information from a subset of workers is required to be aggregated per round. 
Representative ADL methods include asynchronous SGD~(ASGD) and its variants~\cite{DBLP:conf/nips/AgarwalD11, DBLP:conf/nips/LianHLL15, DBLP:conf/icml/ZhengMWCYML17,  DBLP:conf/alt/ArjevaniSS20, pmlr-v267-maranjyan25b}. 
While ADL methods avoid idle time, they introduce convergence challenges due to the delay of updates.
Existing convergence analyses for ADL methods typically rely on additional assumptions about the delays~(e.g., maximum delay bounds), which can't fully capture the actual behavior of ADL methods in practice. The convergence proof for ADL methods under arbitrary delays was a long-standing open problem until recently. The works in \cite{DBLP:conf/nips/KoloskovaSJ22, DBLP:conf/nips/MishchenkoBEW22} establish a convergence guarantee for ASGD under arbitrary delays and prove that ASGD achieves the same convergence rate as synchronous SGD in terms of gradient computations.
The work in \cite{shiordered} proposes a momentum variant of ASGD called OrMo-DA, and provides a convergence proof for OrMo-DA under arbitrary delays.

To reduce communication frequency, local updates are widely adopted in distributed learning. The most representative methods using local updates are local SGD~\cite{lin2019don,stich2019local} and its variants \cite{DBLP:conf/nips/0001DKD19, DBLP:conf/icml/KarimireddyKMRS20, NEURIPS2024_0f986451}. Local SGD is also the key component of FedAvg in federated learning \cite{mcmahan2017communication}.
Local SGD enables each worker to execute multiple SGD iterations instead of one iteration before communicating with the server or the other workers, thereby reducing communication frequency.
Several works in \cite{lin2019don, stich2019local, DBLP:conf/nips/HaddadpourKMC19, DBLP:conf/aaai/YuYZ19} establish the convergence guarantee for local SGD. The work in \cite{DBLP:conf/icml/YuJY19} proposes parallel restarted SGD with momentum~(PRSGDm) as a momentum variant of local SGD and proves its convergence with a linear speedup property. 
Most existing works about local SGD and its variants focus on SDL methods, which face the challenge of idling while waiting for slow workers.
Several works \cite{xie2019asynchronous, DBLP:conf/aistats/NguyenMZYR0H22, DBLP:journals/tmc/WangLJZ25} have studied asynchronous local SGD~(AL-SGD), which implements asynchronous distributed SGD with local updates. While momentum is widely recognized as crucial for deep model training, no research has studied how to implement asynchronous distributed MSGD with local updates. Furthermore, the convergence guarantee of asynchronous distributed MSGD with local updates remains an open problem, particularly under arbitrary delays caused by the heterogeneous computing capabilities in the cluster.

In this paper, we propose a novel method, called \underline{or}dered \underline{lo}cal \underline{mo}mentum~(OrLoMo), for asynchronous distributed learning. Our contributions are outlined as follows:
\begin{itemize}
    \item In OrLoMo, each worker runs MSGD locally. Then the local momentum from each worker will be aggregated by the server in order based on its global iteration index. To the best of our knowledge, OrLoMo is the first method to implement asynchronous distributed MSGD with local updates.
    \item We prove the convergence of OrLoMo for non-convex problems under arbitrary delays. Our convergence analysis doesn't rely on the maximum delay or bounded gradient assumptions which are commonly used in existing analyses of ADL methods.
    \item Experiments validate that OrLoMo can outperform its synchronous counterpart and other asynchronous methods.

\end{itemize}
\section{Preliminary}
In this section, we introduce the problem formulation and then present two classic algorithms, including momentum SGD~(MSGD) and asynchronous local SGD~(AL-SGD).
\subsection{Problem Formulation}
Many machine learning model training tasks can be formulated as the following optimization problem:
\begin{align}\label{equation: object}
	\min_{\w\in \RB^d} F(\w) = \EB_{\xi \sim \DM} f(\w;\xi),
\end{align}
where $\w$ is the $d$-dimensional model parameter, $\xi$ denotes a training instance sampled from distribution $\DM$ and $f(\w;\xi)$ is the loss function evaluated at parameter $\w$ for instance $\xi$. $\nabla f(\w;\xi)$ denotes the stochastic gradient. $\|\cdot\|$ denotes the $L_2$ norm. $[n]$ denotes $\{0,1,\ldots,n-1\}$, where $n \in \mathbb{N}^*$.

In this paper, we focus on the widely used parameter server~(PS) framework~\cite{DBLP:conf/nips/LiASY14}. In the PS framework, the workers are responsible for sampling training instances and computing stochastic gradients, and the server is responsible for storing and updating the model parameters. Without loss of generality, we assume that each worker performs stochastic gradient computation by sampling a single training instance per iteration. The analysis of mini-batch sampling follows a similar approach.

\subsection{Momentum SGD}
The widely used momentum SGD~(MSGD)~\citep{article} can be formulated as follows:
\begin{equation}
\begin{aligned}
\u_{t+1} &= \beta \u_{t} + \gamma \g_t, \\
\w_{t+1} &= \w_t  - \u_{t+1}, \label{eq:mo-update-MSGD}
\end{aligned}
\end{equation}
where $\g_t = \nabla f \left(\w_t;\xi\right), \u_0 = \0, \beta\in [0,1), t \in [T]$ and $T$ denotes the number of iterations.
$\beta$ is the momentum coefficient. $\gamma$ is the learning rate.
 When $\beta = 0$, MSGD degenerates to vanilla SGD. $\u_t$ represents Polyak's momentum, which is a weighted sum of gradients in a strictly ordered manner. Specifically, gradients with smaller iteration indexes are assigned progressively smaller weights. For example, $\u_4 = \beta^3 \gamma \g_0 + \beta^2 \gamma \g_1 + \beta^1 \gamma \g_2 + \beta^0 \gamma \g_3$.

\subsection{Asynchronous Local SGD}
Algorithm~\ref{alg:AsynlocalSGD} outlines the asynchronous local SGD~(AL-SGD) method. AL-SGD serves as the foundational case of FedBuff \cite{DBLP:conf/aistats/NguyenMZYR0H22} without federated learning components, including the buffer, partial client participation, and differential privacy mechanism. 
Our work focuses on distributed learning in data-center settings.
When $S=1$, AL-SGD degenerates to  ASGD~\cite{DBLP:conf/nips/MishchenkoBEW22}.

\begin{algorithm}[!t]
    \caption{AL-SGD}
    \begin{algorithmic}[1]
 \label{alg:AsynlocalSGD}   
    \STATE \textbf{Input}: number of workers $K$, number of global iterations $T$, number of local iterations $S$, global learning rate $\eta_t$, local learning rate $\gamma$;
    \STATE \underline{\textbf{Server:}}  
    \STATE \textbf{Initialization}: global parameter $\w_0$;  
    \STATE Send $\w_{0}$ to all workers;
    \FOR {$t=0$  \textbf{to}  $T-1$}
    \STATE  Receive $\Delta \w_{ite (t,k_t)}^{k_t}$ from some worker $k_t$;
    \STATE $\w_{t+1} = \w_{t}   - \eta_t {\Delta \w}_{ite (t,k_t)}^{k_t}$;
    \STATE Send $\w_{t+1}$ back to worker $k_t$;
    \ENDFOR
    \STATE Notify all workers to stop;
    \STATE \underline{\textbf{Worker $k:$}} $(k \in [K])$  
    \REPEAT
    \STATE Wait until receiving $\w_{t'}$ from the server;
    \STATE Initialize local variables $\tilde{\w}_{t',0}^k=\w_{t'}$;  
    \FOR {$s=0$  \textbf{to}  $S-1$}
    \STATE Randomly sample $\xi \sim \DM$ and then compute the stochastic gradient $\g_{t',s}^{k} = \nabla f(\tilde{\w}_{t',s}^{k}; \xi)$;
    \STATE $\tilde{\w}_{t',s+1}^{k} = \tilde{\w}_{t',s}^{k} - \gamma \g_{t',s}^{k}$;   
    \ENDFOR  
    \STATE $\Delta \w_{t'}^k = \tilde{\w}_{t',0}^k- \tilde{\w}_{t',S}^k$;    
    \STATE Send $\Delta \w_{t'}^k$ to the server;  
    \UNTIL{receive server's notification to stop}
    \end{algorithmic}
    \end{algorithm}

For each worker $k$ ($k \in [K]$), after receiving the latest global parameter from the server, the worker conducts $S$ SGD iterations locally, and then sends the local parameter update $\Delta \w_{t'}^k$ to the server. 
For the server, after receiving the local parameter update ${\Delta \w}_{ite (t,k_t)}^{k_t}$ from some worker $k_t$, the server will update the global parameter using ${\Delta \w}_{ite (t,k_t)}^{k_t}$ and then send the latest global parameter $\w_{t+1}$ back to the worker $k_t$ immediately. 
To distinguish them from worker-side local iterations, the server-side iterations are termed global iterations.
Here, $k_t$ denotes the index of the worker from which the local parameter update is used at global iteration $t$, and $ite(t, k_t)$ denotes the global iteration index of ${\Delta \w}_{ite (t,k_t)}^{k_t}$. The function $ite(t, k)$ denotes the global iteration index of the latest parameter sent to worker $k$ before global iteration $t$. $ite(t,k)$ can be formulated as follows:
\begin{align*}
ite(t+1,k)= \left\{
\begin{aligned}
 &t+1   &\ k=k_t, \\
 &ite(t,k)  &\ k \neq k_t, \\ 
\end{aligned}
\right.
\end{align*}
where $k \in [K]$, $t \in [T]$ and $ite(0,k)=0, \forall k \in [K]$. We define $\tau_t = t - ite(t,k_t)$ as the delay of  ${\Delta \w}_{ite (t,k_t)}^{k_t}$.

In AL-SGD, upon receiving a local parameter update from any worker, the server immediately updates the global parameter and then sends it back to the worker. This asynchronous nature avoids straggler bottlenecks by not waiting for slower workers. In contrast, synchronous local SGD \cite{stich2019local} requires the server to aggregate the information from all workers before performing global parameter updates and broadcasting. Consequently, its training speed is constrained by the slowest worker in the cluster.

\section{Methodology}
In this section, we introduce our proposed method OrLoMo, including the algorithm details and convergence analysis.  
\subsection{OrLoMo}
The details of OrLoMo are presented in Algorithm~\ref{alg:OrLoMo}.

For each worker, upon receiving the latest global parameter $\w_{t'}$ from the server, the local parameter $\tilde{\w}_{t',0}^k$ and local momentum $\tilde{\u}_{t',0}^k$ are initialized as $\tilde{\w}_{t',0}^k=\w_{t'}, \tilde{\u}_{t',0}^k=\0.$ Then, the worker performs $S$ MSGD updates locally.
After completing $S$ local updates, the worker sends both local momentum $\Delta \u_{t'}^k = \tilde{\u}_{t',S}^k$ and local parameter update $\Delta \w_{t'}^k = \tilde{\w}_{t',0}^k- \tilde{\w}_{t',S}^k$ back to the server.  Here, $t'$ denotes the global iteration index of both $\Delta \u_{t'}^k$ and $\Delta \w_{t'}^k$.

\begin{algorithm}[!t]
    \caption{OrLoMo}
    \begin{algorithmic}[1]
 \label{alg:OrLoMo}   
    \STATE \textbf{Input}: number of workers $K$, number of global iterations $T$, number of local iterations $S$, global learning rate $\eta_t$, local learning rate $\gamma$, momentum coefficient $\beta \in [0,1)$;
    \STATE \underline{\textbf{Server:}}  
    \STATE \textbf{Initialization}: global parameter $\w_0$, global momentum $\u_0 = \0$;  
    \STATE Send $\w_{0}$ to all workers;
    \FOR {$t=0$  \textbf{to}  $T-1$}
    \IF{$\lceil \frac{t}{K}\rceil > \lceil \frac{t-1}{K}\rceil$}
    \STATE $\w_{t+\frac{1}{2}} = \w_{t}-\beta \u_{t}$, $\u_{t+\frac{1}{2}}=\beta \u_t$;
    \ELSE 
    \STATE $\w_{t+\frac{1}{2}} = \w_t$, $\u_{t+\frac{1}{2}}= \u_t$; 
    \ENDIF    
    \STATE  Receive $\Delta \u_{ite (t,k_t)}^{k_t}$ and $\Delta \w_{ite (t,k_t)}^{k_t}$ from some worker $k_t$;
    \STATE $\u_{t+1} = \u_{t+\frac{1}{2}} + \beta^{\lceil \frac{t}{K}\rceil - \lceil \frac{ite (t,k_t)}{K}\rceil}\eta_t {\Delta \u}_{ite (t,k_t)}^{k_t}$;
    \STATE $\w_{t+1} = \w_{t+\frac{1}{2}}   - \eta_t{\Delta \w}_{ite (t,k_t)}^{k_t}- \frac{\beta (1-\beta^{\lceil \frac{t}{K}\rceil - \lceil \frac{ite (t,k_t)}{K}\rceil})}{1-\beta} \eta_t{\Delta \u}_{ite (t,k_t)}^{k_t}$;
    \STATE Send $\w_{t+1}$ back to worker $k_t$;
    \ENDFOR
    \STATE Notify all workers to stop;
    \STATE \underline{\textbf{Worker $k:$}} $(k \in [K])$  
    \REPEAT
    \STATE Wait until receiving $\w_{t'}$ from the server;
    \STATE Initialize local variables $\tilde{\u}_{t',0}^k=\0, \tilde{\w}_{t',0}^k=\w_{t'}$;  
    \FOR {$s=0$  \textbf{to}  $S-1$}
    \STATE Randomly sample $\xi \sim \DM$ and then compute the stochastic gradient $\g_{t',s}^{k} = \nabla f(\tilde{\w}_{t',s}^{k}; \xi)$;
    \STATE $\tilde{\u}_{t',s+1}^k = \beta\tilde{\u}_{t',s}^{k} + \gamma \g_{t',s}^{k}$;
    \STATE $\tilde{\w}_{t',s+1}^{k} = \tilde{\w}_{t',s}^{k} - \tilde{\u}_{t',s+1}^k$;   
    \ENDFOR  
    \STATE $\Delta \u_{t'}^k = \tilde{\u}_{t',S}^k$;    
    \STATE $\Delta \w_{t'}^k = \tilde{\w}_{t',0}^k- \tilde{\w}_{t',S}^k$;    
    \STATE Send $\Delta \u_{t'}^k$ and $\Delta \w_{t'}^k$ to the server;  
    \UNTIL{receive server's notification to stop}
    \end{algorithmic}
    \end{algorithm}

All the local momentums computed in OrLoMo can be given by: 
\begin{align*}
\Delta \u_0^0,\Delta \u_0^1, \cdots, \Delta \u_0^{K-1}, \Delta \u_1^{k_0},\Delta \u_2^{k_1},\Delta \u_3^{k_2}, \cdots.
\end{align*}
We group these local momentums in order based on their global iteration indexes. Each group contains $K$ local momentums. The $i$-th group ($i \geq 1$) in OrLoMo is defined as:
\begin{align*}
&\left\{\Delta \u_{(i-1)K+1}^{k_{(i-1)K}},\Delta \u_{(i-1)K+2}^{k_{(i-1)K+1}}, \cdots, \Delta \u_{iK}^{k_{iK-1}}\right\},
\end{align*}
and the $0$-th group is defined as $\left\{\Delta \u_{0}^{0},\Delta \u_{0}^{1}, \cdots, \Delta \u_{0}^{K-1}\right\}$. Note that local momentums typically arrive at the server out of order due to hardware factors~(e.g., workers’ computing capabilities). The core of OrLoMo is that the server aggregates local momentums into the global momentum in order based on their global iteration indexes.

The global momentum in OrLoMo is the weighted sum of local momentums received by the server, scaled by their corresponding global learning rates. For simplicity, we refer to local momentum scaled by its global learning rate as scaled local momentum. The weights of scaled local momentums in the global momentum are determined by their global iteration indexes.
Before global iteration $t$, the global parameter $\w_t$ has been sent to some worker. Thus, the global iteration index of the latest local momentum in $\u_{t+1}$ can be $t$ at most. The weight of the scaled local momentum with global iteration index $t$ is defined to be $\beta^0$ in $\u_{t+1}$, which belongs to the $\lceil \frac{t}{K} \rceil$-th group. The scaled local momentums from the $i$-th group is defined to weight $\beta^{\lceil \frac{t}{K} \rceil-i}$ in $\u_{t+1}$, where $0 \leq i \leq \lceil \frac{t}{K} \rceil$. The scaled local momentums from one group share the same weight in the global momentum. 

At global iteration $t$, the server receives local information from some worker $k_t$, including the local momentum $\Delta \u_{ite(t,k_t)}^{k_t}$ and local parameter update $\Delta \w_{ite(t,k_t)}^{k_t}$. 
We explain the update rules on the server at iteration $t$.
\begin{itemize}
\item $\w_{t+\frac{1}{2}} = \w_{t}-\beta \u_{t}$, $\u_{t+\frac{1}{2}}=\beta \u_t$, when $\lceil \frac{t}{K} \rceil > \lceil \frac{t-1}{K} \rceil$.

Since the global parameter $\w_t$ has been sent to worker $k_{t-1}$, $\Delta \u_{t}^{k_{t-1}}$ may arrive at the server at iteration $t$. If $\lceil \frac{t}{K} \rceil > \lceil \frac{t-1}{K} \rceil$, the global momentum is multiplied by $\beta$ and used to update the global parameter. In this way, the global momentum gets ready to accommodate $\Delta \u_{t}^{k_{t-1}}$, which belongs to the new~($\lceil \frac{t}{K} \rceil$-th) group. 

\item Update the global momentum: $\u_{t+1} = \u_{t+\frac{1}{2}} + \beta^{\lceil \frac{t}{K}\rceil - \lceil \frac{ite (t,k_t)}{K}\rceil}\eta_t{\Delta \u}_{ite (t,k_t)}^{k_t}$.

Since the scaled local momentum from the $\lceil \frac{t}{K} \rceil$-th group weights $\beta^0$ in the global momentum, the weight of the scaled local momentum $\eta_t{\Delta \u}_{ite (t,k_t)}^{k_t}$ (belonging to the $\lceil \frac{ite (t,k_t)}{K}\rceil$-th group) should be $\beta^{\lceil \frac{t}{K} \rceil - \lceil \frac{ite (t,k_t)}{K}\rceil}$. OrLoMo updates the global momentum by adding $\eta_t \Delta{\u}^{k_t}_{ite (t,k_t)}$ with this weight. 

\item Update the global parameter: $\w_{t+1} = \w_{t+\frac{1}{2}}   - \eta_t{\Delta \w}_{ite (t,k_t)}^{k_t}- \frac{\beta (1-\beta^{\lceil \frac{t}{K}\rceil - \lceil \frac{ite (t,k_t)}{K}\rceil})}{1-\beta} \eta_t{\Delta \u}_{ite (t,k_t)}^{k_t}$. 

We analyze the contributions of the newly arrived
${\Delta \u}_{ite (t,k_t)}^{k_t}$ and ${\Delta \w}_{ite (t,k_t)}^{k_t}$ to the global parameter update, separately.
\begin{itemize}
    \item ${\Delta \w}_{ite (t,k_t)}^{k_t}$ is scaled by the global learning rate $\eta_t$ and directly applied to update the global parameter. 
    \item Local momentums from the same group as ${\Delta \u}_{ite (t,k_t)}^{k_t}$ (received by the server at previous iterations) have contributed to the global parameter update with a total coefficient $-(\beta^1+\beta^2+ \cdots+\beta^{\lceil \frac{t}{K} \rceil - \lceil \frac{ite(t, k_t)}{K}\rceil})=-\frac{\beta (1-\beta^{\lceil \frac{t}{K}\rceil - \lceil \frac{ite (t,k_t)}{K}\rceil})}{1-\beta}$, scaled by their corresponding global learning rates. To ensure consistency, the coefficient of ${\Delta \u}_{ite (t,k_t)}^{k_t}$ for the global parameter update is set as $-\frac{\beta (1-\beta^{\lceil \frac{t}{K}\rceil - \lceil \frac{ite (t,k_t)}{K}\rceil})}{1-\beta}$, scaled by its corresponding global learning rate $\eta_t$.
\end{itemize}
\end{itemize}
\begin{figure}[!t] 
  \centering
    \subfigure[The global momentum $\u_{t+1}$. ]{
    \begin{minipage}[b]{1\linewidth} \label{fig:moment1}
      \includegraphics[width=1\linewidth]{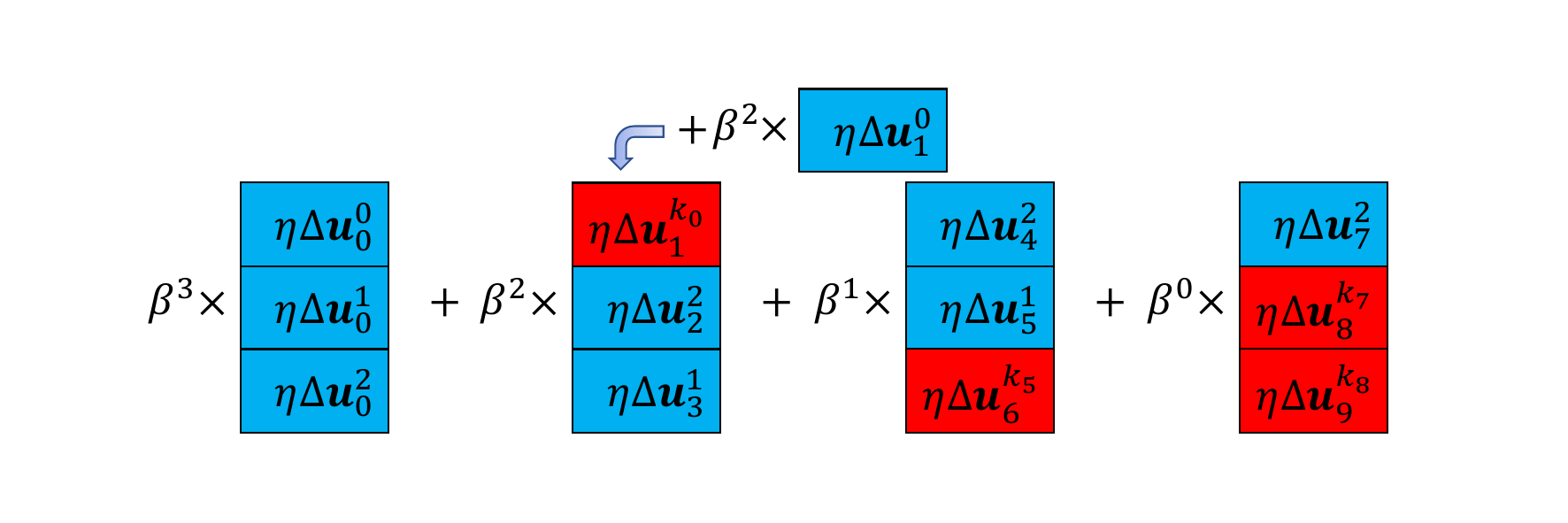}
      \end{minipage}}
  \subfigure[$\w_{t+1}-\w_0$ contributed by the local momentums.]{
    \begin{minipage}[b]{1\linewidth}\label{fig:moment2}
      \includegraphics[width=1\linewidth]{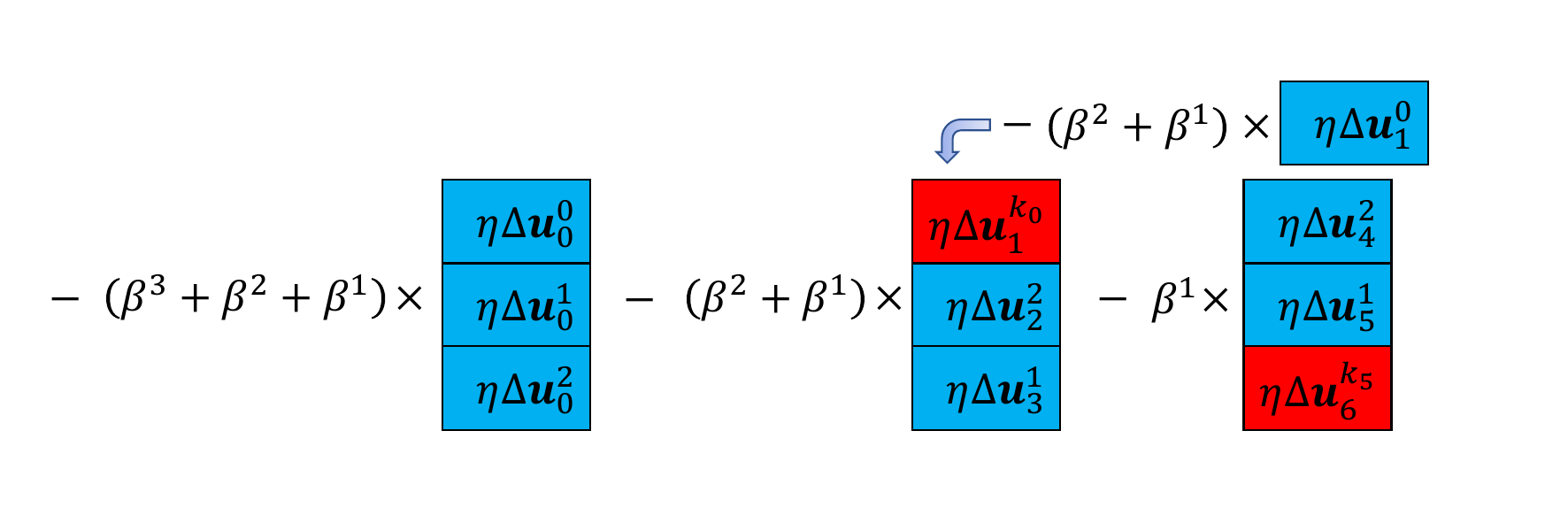}
      \end{minipage}}
  \subfigure[$\w_{t+1}-\w_0$ contributed by the local parameter updates.]{
    \begin{minipage}[b]{1\linewidth}\label{fig:moment3}
      \includegraphics[width=1\linewidth]{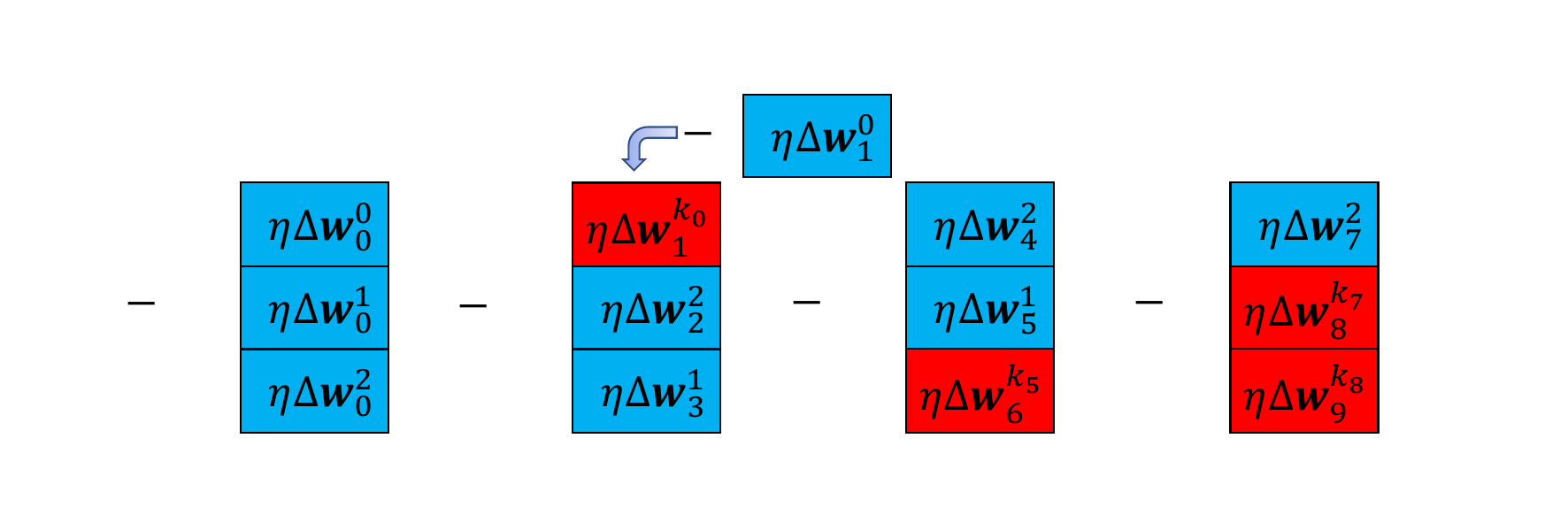}
      \end{minipage}}
\caption{Update demonstration when $t=8, K=3$.} \label{fig: workflow}
\end{figure}   

We use an example shown in Figure \ref{fig: workflow} to illustrate the update process of OrLoMo at iteration $8$ when $K=3$. For simplicity, we consider the global learning rate as constant here, i.e., $\eta_t \equiv \eta$. The local momentums or local parameter updates shown in red indicate those that have not arrived at the server. In this case, $k_0 = 0, k_2=k_4=k_5=1$ and $ k_1=k_3=k_6=k_7=2$. The global iteration indexes of the server's received information at global iterations $0, 1, \cdots, 7$ are 0, 0, 0, 2, 3, 5, 4, 7.
At iteration 8, the local momentum $\Delta \u_1^0$ and the local parameter update $\Delta \w_1^0$ from worker $0$ arrive at the server, i.e., $k_8=0$ and $ite(8,k_8)=1$. The scaled local momentum $\eta \Delta \u_1^0$ is added into the global momentum with the weight of $\beta^{\lceil \frac{8}{3} \rceil-\lceil \frac{1}{3} \rceil}=\beta^2$ as shown in Figure \ref{fig:moment1}. Then we get that $\u_9=\beta^3\times(\eta\Delta \u_0^0+\eta\Delta \u_0^1+\eta\Delta \u_0^2)+\beta^2\times(\eta\Delta \u_1^0+\eta\Delta \u_2^2+\eta\Delta \u_3^1)+\beta^1\times(\eta\Delta \u_4^2+\eta\Delta \u_5^1)+\beta^0\times\eta\Delta \u_7^2$.
The local momentum ${\Delta \u}_1^0$ belongs to the same group as $\Delta \u_2^2$, which was received by the server at iteration 3. $\eta \Delta \u_2^2$ has contributed to the parameter update with the coefficient $-(\beta^2+\beta^1)$. Thus, we set the coefficient of $\eta\Delta \u_1^0$ as $-(\beta^2+\beta^1)$ when updating the global parameter as shown in Figure \ref{fig:moment2}.
The local parameter update ${\Delta \w}_1^0$ is scaled by the learning rate $\eta$ and then directly used to update the global parameter as shown in Figure \ref{fig:moment3}. 

\begin{remark}
When $\beta=0$, OrLoMo degenerates to AL-SGD in Algorithm \ref{alg:AsynlocalSGD}. 
Compared with AL-SGD, OrLoMo requires workers to send not only local parameter updates but also local momentums to the server. This inclusion of local momentums is typical in synchronous local momentum methods, e.g., \cite{DBLP:conf/icml/YuJY19}. Since OrLoMo operates with low communication frequency, the extra overhead doesn't significantly impact the training speed, which is validated in our experiments.
\end{remark}
\begin{remark}
When $S=1$, OrLoMo degenerates to OrMo-DA in \cite{shiordered}.
In OrLoMo, each worker runs $S$ MSGD steps locally.
For comparison, we implement a baseline method named local OrMo-DA in the experiments, where workers execute $S$ SGD steps locally instead. 
Experimental results confirm that local momentum in OrLoMo significantly improves convergence performance, especially when the number of local iterations $S$ is large.
\end{remark}
\subsection{Convergence Analysis}
In this subsection, we prove the convergence of OrLoMo  for non-convex problems under arbitrary delays. The proof details are included in the appendix. 

We adopt the following standard assumptions in distributed learning. Notably, our convergence analysis doesn't require any assumptions about the delays, e.g., the maximum delay $\tau_t = t- ite(t, k_t) \leq \tau_{max}, \forall t \in [T]$, making our results valid for arbitrary delays. Furthermore, unlike existing ADL methods \cite{DBLP:conf/icml/YangL21, wang2024fadas}, we do not require the bounded gradient assumption, i.e., $\EB_{\xi \sim \DM} \|\nabla f(\w; \xi)\|^2 \leq G^2, \forall \w \in \RB^d$.

  \begin{assumption}\label{assu:gradient}
  The stochastic gradient $\nabla f(\w; \xi)$ has unbiased expectation and bounded variance for $\forall \w \in \RB^d$:
    \begin{align*}
     &\EB_{\xi \sim \DM} \left[\nabla f(\w; \xi)\right] = \nabla F(\w),\\
     &\EB_{\xi \sim \DM}\|\nabla f(\w; \xi)-\nabla F(\w)\|^2 \leq \sigma^2.  
    \end{align*}
    \end{assumption}
    Assumption \ref{assu:gradient} indicates that the data across all workers are independent and identically distributed~(i.i.d.). It's commonly satisfied in the data-center scenario~\cite{DBLP:conf/nips/DeanCMCDLMRSTYN12}, where all workers have access to the full dataset.  
  \begin{assumption}\label{assu:smooth}
     $F(\w)$ is $L$-smooth~($L>0$): 
     \begin{align*}
     F(\w) \leq F(\w') &+ \nabla F(\w')^T(\w - \w') + \frac{L}{2}\|\w - \w'\|^2,
     \end{align*} 
     holds for $\forall \w,\w'\in \RB^d$.
  \end{assumption}
  \begin{assumption}\label{assu:lowerbounded}
     $F(\w)\geq F^*, \forall \w \in \mathbb{R}^d$.
  \end{assumption}

Firstly, we define two notations $next(t,k)$ and $\hat{\eta}_{t,k}$. 
$next(t,k)$ denotes the index of the next global iteration at which the server will receive the information from worker $k$ after global iteration $t$ (including $t$). $\hat{\eta}_{t,k}$ is the global learning rate at iteration $next(t,k)$. They are formulated as:
$$next(t,k) = \min\{j \geq t: k_j = k\},$$ 
$$\hat{\eta}_{t,k} =  \eta_{next(t,k)},$$
where $k \in [K]$, $t \in [T]$.

Secondly, we introduce two auxiliary variables $\hat{\u}_t$ and $\hat{\w}_t $ corresponding to $\u_t$ and $\w_t$ respectively, where $t \geq 1$: 
\begin{align*} 
& \hat{\u}_{t+1} = \left\{
\begin{aligned}
& \beta \hat{\u}_{t}+\hat{\eta}_{t,k_{t-1}}{\Delta \u}_{t}^{k_{t-1}} & K \mid \left(t-1\right), \\
& \hat{\u}_{t}+\hat{\eta}_{t,k_{t-1}}{\Delta \u}_{t}^{k_{t-1}} & K \nmid \left(t-1\right),
\end{aligned}
\right. 
\end{align*} for $t \geq 1$ and $\hat{\u}_{1} =  \sum_{k \in [K]}  \hat{\eta}_{0,k}{\Delta \u}_{0}^{k}$, 
\begin{align*} 
& \hat{\w}_{t+1} = \left\{
\begin{aligned}
& \hat{\w}_{t}-\beta \hat{\u}_{t}- {\hat{\eta}}_{t,k_{t-1}}{\Delta \w}_t^{k_{t-1}}& K \mid \left(t-1\right), \\
& \hat{\w}_{t}- {\hat{\eta}}_{t,k_{t-1}}{\Delta \w}_t^{k_{t-1}} & K \nmid \left(t-1\right),
\end{aligned}
\right. 
\end{align*} for $t \geq 1$ and $\hat{\w}_{1} = \w_0 - \sum_{k \in [K]} {\hat{\eta}}_{0,k} {\Delta \w}_0^k$.

For simplicity, we define $\Delta \h_0^k =\gamma \sum_{s=0}^{S-1}\g_{0,s}^{k}$ for $k \in [K]$, and
$\Delta \h_{t}^{k_{t-1}}=\gamma\sum_{s=0}^{S-1}\g_{t,s}^{k_{t-1}}$ for $t \geq 1$. 
It's easy to verify that: $\frac{1}{1-\beta}\Delta \h_0^k = \frac{\beta}{1-\beta}{\Delta \u}_{0}^{k}+{\Delta \w}_0^k$ for $ k \in [K],$ and 
$\frac{1}{1-\beta}\Delta \h_{t}^{k_{t-1}} = \frac{\beta}{1-\beta}{\Delta \u}_{t}^{k_{t-1}}+{\Delta \w}_t^{k_{t-1}}$ for $t \geq 1$.

Then, we define another auxiliary variable $\hat{\y}_t$ corresponding to $\hat{\w}_t$: $\hat{\y}_1 = \w_0-\frac{1}{1-\beta}\sum_{k \in [K]} \hat{\eta}_{0,k}\Delta \h_0^k$, and  
\begin{align} \label{def:yhat}
\hat{\y}_{t+1} = \hat{\y}_t - \frac{\hat{\eta}_{t,k_{t-1}}}{1-\beta}\Delta \h_{t}^{k_{t-1}}, t \geq 1.
\end{align}

Our proof begins with rigorously tracking the differences between these variables in Lemmas~\ref{local-ormo-lemma:ugap}, \ref{local-ormo-lemma:wgap}, and \ref{lemma:ywgapPenalty}.
\begin{lemma}~\label{local-ormo-lemma:ugap}
  For any $t \geq 0$, the difference between $\u_{t+1}$ and $\hat{\u}_{t+1}$ can be expressed as follows:
  \begin{equation}
  \begin{aligned}\label{local-ormo-eq:ugap}
    & \hat{\u}_{t+1} - \u_{t+1} \\
    & =  \sum_{k \in [K], k \neq k_t} \beta^{\lceil \frac{t}{K} \rceil - \lceil \frac{ite(t, k)}{K}\rceil} \hat{\eta}_{ite(t, k),k}{\Delta \u}_{ite(t, k)}^k.
  \end{aligned}
  \end{equation}
\end{lemma}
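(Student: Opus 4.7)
I would prove \eqref{local-ormo-eq:ugap} by induction on $t$. The intuition is that $\hat{\u}_{t+1}$ acts as an ``ideal'' global momentum that aggregates every local momentum with global iteration index at most $t$, whereas $\u_{t+1}$ only aggregates those that the server has actually received by iteration $t$. At the end of iteration $t$, worker $k_t$'s in-flight local momentum has just been processed, but each of the other $K-1$ workers $k \neq k_t$ still has a pending in-flight local momentum $\Delta \u_{ite(t,k)}^{k}$; the RHS of \eqref{local-ormo-eq:ugap} is exactly the weighted sum of these pending terms, with the per-group weight $\beta^{\lceil t/K \rceil - \lceil ite(t,k)/K \rceil}$ and scaling $\hat{\eta}_{ite(t,k),k}$ precisely matching what they would contribute upon arrival.

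\textbf{Base case ($t=0$).} Directly expand $\hat{\u}_{1} = \sum_{k \in [K]} \hat{\eta}_{0,k}\Delta \u_{0}^{k}$ and $\u_{1} = \eta_{0}\Delta \u_{0}^{k_{0}}$; since $next(0,k_{0}) = 0$ yields $\hat{\eta}_{0,k_{0}} = \eta_{0}$, the difference is $\sum_{k \neq k_{0}} \hat{\eta}_{0,k}\Delta \u_{0}^{k}$, which matches the RHS because $\lceil 0/K \rceil = \lceil ite(0,k)/K \rceil = 0$.

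\textbf{Inductive step.} Assume the identity at step $t$. A key observation is that $\lceil (t+1)/K \rceil > \lceil t/K \rceil$ holds iff $K \mid t$, so both $\hat{\u}$ and $\u$ multiply by $\beta$ at step $t+1$ under the same condition. Subtracting their update rules yields
\begin{align*}
\hat{\u}_{t+2} - \u_{t+2} &= c_t\bigl(\hat{\u}_{t+1} - \u_{t+1}\bigr) + \hat{\eta}_{t+1,k_{t}}\Delta \u_{t+1}^{k_{t}} \\
&\quad - \beta^{\lceil (t+1)/K \rceil - \lceil ite(t+1,k_{t+1})/K \rceil}\eta_{t+1}\Delta \u_{ite(t+1,k_{t+1})}^{k_{t+1}},
\end{align*}
where $c_t = \beta$ if $K \mid t$ and $c_t = 1$ otherwise. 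Substituting the IH absorbs $c_t$ into the exponent, promoting each weight $\beta^{\lceil t/K \rceil - \lceil ite(t,k)/K \rceil}$ to $\beta^{\lceil (t+1)/K \rceil - \lceil ite(t,k)/K \rceil}$. Using $ite(t+1,k_{t+1}) = ite(t,k_{t+1})$ when $k_{t+1} \neq k_t$, and $\hat{\eta}_{ite(t+1,k_{t+1}),k_{t+1}} = \eta_{next(ite(t+1,k_{t+1}),k_{t+1})} = \eta_{t+1}$, the subtracted term cancels the $k = k_{t+1}$ summand inherited from the IH. The residual sum over $k \notin \{k_t, k_{t+1}\}$, together with the inserted term $\hat{\eta}_{t+1,k_t}\Delta \u_{t+1}^{k_t}$ (the $k = k_t$ contribution at step $t+1$, with weight $\beta^{0}$ since $ite(t+1,k_t) = t+1$), matches the RHS at step $t+1$.

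\textbf{Main obstacle.} The chief difficulty is the bookkeeping around $ite$, $\hat{\eta}$, and the transition from $k_t$ to $k_{t+1}$ -- particularly the subcase $k_t = k_{t+1}$, where $k_t$ exits the new sum and the cancellation is direct via $ite(t+1,k_{t+1}) = t+1$ -- and verifying that the $\beta$ exponents line up under both parities of $K \mid t$.
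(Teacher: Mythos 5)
Your proposal is correct and follows essentially the same route as the paper's own proof: induction on $t$ with the same base case, the same observation that $\u$ and $\hat{\u}$ acquire the factor $\beta$ under the identical divisibility condition (so the inherited exponents get promoted uniformly), the same use of $\eta_{t+1}=\hat{\eta}_{ite(t+1,k_{t+1}),k_{t+1}}$ and $ite(t+1,k)=ite(t,k)$ for $k\neq k_t$, and the same final case split on whether $k_{t+1}=k_t$. The only difference is the cosmetic index shift ($t'=t+1$) in how the inductive step is stated.
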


\begin{lemma}~\label{local-ormo-lemma:wgap}
  For any $t \geq 0$, the difference between $\w_{t+1}$ and $\hat{\w}_{t+1}$ can be expressed as follows:
  \begin{equation}
  \begin{aligned} \label{local-ormo-eq:wgapPenalty}
    \hat{\w}_{t+1} &- \w_{t+1}=  -\sum_{k \in [K], k \neq k_t} \bigg[{\hat{\eta}}_{ite(t, k),k}{\Delta \w}_{ite(t, k)}^k \bigg.\\
    &\bigg.+ \frac{\beta(1-\beta^{\lceil \frac{t}{K} \rceil - \lceil \frac{ite(t, k)}{K}\rceil})}{1-\beta}{\hat{\eta}}_{ite(t, k),k}{\Delta \u}_{ite(t, k)}^k\bigg].
  \end{aligned}
  \end{equation}
\end{lemma}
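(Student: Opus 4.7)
The plan is induction on $t$, using Lemma~\ref{local-ormo-lemma:ugap} to convert the $\beta$-momentum differences that arise at the boundary of each size-$K$ group. For the base case $t = 0$, since $\u_0 = \0$ the else branch of the server update applies, giving $\w_1 = \w_0 - \eta_0 {\Delta \w}_0^{k_0}$ (the $\Delta \u$ coefficient vanishes because $\lceil 0/K\rceil - \lceil 0/K\rceil = 0$). Comparing with $\hat{\w}_1 = \w_0 - \sum_{k} \hat{\eta}_{0, k} {\Delta \w}_0^k$ and using $\hat{\eta}_{0, k_0} = \eta_0$ (since $next(0, k_0) = 0$), the $k = k_0$ terms cancel and the remaining sum matches~\eqref{local-ormo-eq:wgapPenalty} at $t = 0$ (all $\Delta \u$ coefficients are zero).

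For the inductive step, I would expand $\hat{\w}_{t+2} - \w_{t+2}$ using both recursions as $(\hat{\w}_{t+1} - \w_{t+1})$ plus new contributions, and split on whether $K \mid t$ (Case A) or not (Case B). In Case A, both sides subtract a $\beta$-momentum term, producing $-\beta(\hat{\u}_{t+1} - \u_{t+1})$; substituting Lemma~\ref{local-ormo-lemma:ugap} and combining with the inductive $\Delta \u$ coefficients yields, for each $k \neq k_t$, the algebraic identity
\begin{align*}
\frac{\beta(1 - A)}{1-\beta} + \beta A = \frac{\beta(1 - \beta A)}{1-\beta},
\end{align*}
with $A = \beta^{\lceil t/K\rceil - \lceil ite(t,k)/K\rceil}$. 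Since $\lceil (t+1)/K\rceil = \lceil t/K\rceil + 1$ in Case A and $ite(t+1, k) = ite(t, k)$ for $k \neq k_t$, the exponent shifts correctly to $\lceil (t+1)/K\rceil - \lceil ite(t+1, k)/K\rceil$. Case B is simpler: $\lceil (t+1)/K\rceil = \lceil t/K\rceil$ and no $\beta$ rescaling occurs, so the inductive coefficients carry over verbatim.

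The remaining task is reconciling the sum's index set. For $k \notin \{k_t, k_{t+1}\}$, entries match directly. The extra term $-\hat{\eta}_{t+1, k_t} {\Delta \w}_{t+1}^{k_t}$ from the $\hat{\w}$ recursion becomes the $k = k_t$ entry at $t+1$ (using $ite(t+1, k_t) = t+1$, which kills the $\Delta \u$ weight). When $k_t \neq k_{t+1}$, the $k = k_{t+1}$ entry from the hypothesis cancels the incoming $\eta_{t+1} {\Delta \w}_{ite(t+1, k_{t+1})}^{k_{t+1}}$ (and its $\Delta \u$ companion), leveraging $\hat{\eta}_{ite(t, k_{t+1}), k_{t+1}} = \eta_{t+1}$ via $next(ite(t, k_{t+1}), k_{t+1}) = t+1$; when $k_t = k_{t+1}$, the extra $-\hat{\eta}_{t+1, k_t}$ term cancels directly with the incoming $+\eta_{t+1}$ term. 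The main obstacle is this bookkeeping across the four sub-cases ($K \mid t$ vs.\ not, and $k_t = k_{t+1}$ vs.\ not); no new analytic ideas are required beyond Lemma~\ref{local-ormo-lemma:ugap} and the definitional identities for $ite$, $next$, and $\hat{\eta}$.
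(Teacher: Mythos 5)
Your proposal is correct and follows essentially the same route as the paper's proof: induction on $t$, a case split on whether $K$ divides the relevant index, substitution of Lemma~\ref{local-ormo-lemma:ugap} for the $-\beta(\hat{\u}-\u)$ term with the geometric-sum identity $\frac{\beta(1-A)}{1-\beta}+\beta A=\frac{\beta(1-\beta A)}{1-\beta}$, and the final re-indexing split on $k_t=k_{t+1}$ versus $k_t\neq k_{t+1}$. The only nitpick is in the base case, where the conclusion $\w_1=\w_0-\eta_0\Delta\w_0^{k_0}$ holds because $\u_0=\0$ makes $\w_{\frac{1}{2}}=\w_0$ regardless of which branch fires, not because the else branch is necessarily taken; this does not affect correctness.
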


\begin{lemma} \label{lemma:ywgapPenalty}
  For any $t \geq 1$, the difference between $\hat{\y}_t$ and $\hat{\w}_t$ can be expressed as:
    $\hat{\y}_t - \hat{\w}_t = -\frac{\beta}{1-\beta}\hat{\u}_t.$
\end{lemma}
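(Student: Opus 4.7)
The plan is to proceed by induction on $t$, exploiting the identity $\frac{1}{1-\beta}\Delta \h_{t}^{k_{t-1}} = \frac{\beta}{1-\beta}{\Delta \u}_{t}^{k_{t-1}}+{\Delta \w}_t^{k_{t-1}}$ that is already noted in the excerpt. All three auxiliary sequences $\hat{\u}_t$, $\hat{\w}_t$, $\hat{\y}_t$ are defined by simple recurrences, so the claim amounts to showing that their recurrences are consistent in the sense stated.

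For the base case $t=1$, I would substitute the initial definitions $\hat{\y}_1 = \w_0 - \frac{1}{1-\beta}\sum_{k\in[K]}\hat{\eta}_{0,k}\Delta\h_0^k$ and $\hat{\w}_1 = \w_0 - \sum_{k\in[K]}\hat{\eta}_{0,k}\Delta\w_0^k$, subtract, and apply the identity $\frac{1}{1-\beta}\Delta\h_0^k - \Delta\w_0^k = \frac{\beta}{1-\beta}\Delta\u_0^k$ term by term; the resulting sum is exactly $\frac{\beta}{1-\beta}\hat{\u}_1$ by the definition of $\hat{\u}_1$.

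For the inductive step, assume $\hat{\y}_t - \hat{\w}_t = -\frac{\beta}{1-\beta}\hat{\u}_t$ and split according to whether $K\mid(t-1)$ or $K\nmid(t-1)$. In both cases I would form $\hat{\y}_{t+1} - \hat{\w}_{t+1}$ by subtracting the two recurrences, substitute the inductive hypothesis for $\hat{\y}_t - \hat{\w}_t$, and again use the $\Delta\h$-identity to rewrite $-\frac{\hat{\eta}_{t,k_{t-1}}}{1-\beta}\Delta\h_t^{k_{t-1}} + \hat{\eta}_{t,k_{t-1}}\Delta\w_t^{k_{t-1}}$ as $-\frac{\beta}{1-\beta}\hat{\eta}_{t,k_{t-1}}\Delta\u_t^{k_{t-1}}$. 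In the case $K\mid(t-1)$, the extra $+\beta\hat{\u}_t$ term from $\hat{\w}_{t+1}$ combines with $-\frac{\beta}{1-\beta}\hat{\u}_t$ to give $-\frac{\beta^2}{1-\beta}\hat{\u}_t$, so that the right-hand side collapses to $-\frac{\beta}{1-\beta}(\beta\hat{\u}_t + \hat{\eta}_{t,k_{t-1}}\Delta\u_t^{k_{t-1}}) = -\frac{\beta}{1-\beta}\hat{\u}_{t+1}$. In the case $K\nmid(t-1)$, the $\hat{\u}_t$ coefficient is already $-\frac{\beta}{1-\beta}$, so factoring matches the recurrence $\hat{\u}_{t+1} = \hat{\u}_t + \hat{\eta}_{t,k_{t-1}}\Delta\u_t^{k_{t-1}}$ directly.

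I do not anticipate any substantial obstacle; the argument is bookkeeping. The one place where a careful check is needed is the $K\mid(t-1)$ branch, where the arithmetic $-\frac{\beta}{1-\beta}+\beta = -\frac{\beta^2}{1-\beta}$ must line up so that the factor $\beta$ in $\hat{\u}_{t+1} = \beta\hat{\u}_t + \hat{\eta}_{t,k_{t-1}}\Delta\u_t^{k_{t-1}}$ emerges with exactly the prefactor $-\frac{\beta}{1-\beta}$. This is why the $\beta\hat{\u}_t$ correction is deliberately inserted into the $\hat{\w}_{t+1}$ recurrence whenever a new group begins on the server, and verifying this cancellation is the crux of the lemma.
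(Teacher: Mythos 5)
Your proposal is correct and follows essentially the same route as the paper's proof: induction on $t$, with the base case handled by the identity $\frac{1}{1-\beta}\Delta \h_0^k - \Delta \w_0^k = \frac{\beta}{1-\beta}\Delta \u_0^k$ and the inductive step split on whether $K \mid (t-1)$, where the cancellation $-\frac{\beta}{1-\beta}\hat{\u}_t + \beta\hat{\u}_t = -\frac{\beta^2}{1-\beta}\hat{\u}_t$ produces exactly the $\beta\hat{\u}_t$ term in the recurrence for $\hat{\u}_{t+1}$. You have also correctly identified the crux, namely that the $\beta\hat{\u}_t$ correction in the $\hat{\w}_{t+1}$ recurrence is what makes the two branches of the $\hat{\u}_{t+1}$ recurrence line up.
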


\begin{lemma}   With Assumption~\ref{assu:gradient}, the gap between the local and global parameters can be bounded for $t \geq 1$:
      \begin{align*}
      \sum_{s=0}^{S-1}\mathbb{E}\left\|\tilde{\w}_{t,s}^{k_{t-1}} - \w_t\right\|^2 &\leq \frac{\gamma^2S(S-1)\sigma^2}{(1-\beta)^2}\\
      +&\frac{\gamma^2S(S-1)}{(1-\beta)^2}
      \sum_{s=0}^{S-1}\mathbb{E}\left\| \nabla F(\tilde{\w}_{t,s}^{k_{t-1}})\right\|^2.
      \end{align*}
      \end{lemma}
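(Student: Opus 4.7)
The plan is to derive a closed form for $\tilde{\w}_{t,s}^{k_{t-1}}-\w_t$ by unrolling the local MSGD recursion on the worker, and then control the resulting weighted sum of stochastic gradients by splitting each one into its conditional mean (the full gradient) plus martingale-difference noise. First I would iterate the momentum recursion: since $\tilde{\u}_{t,0}^{k_{t-1}}=\0$ and $\tilde{\u}_{t,j}^{k_{t-1}}=\beta\tilde{\u}_{t,j-1}^{k_{t-1}}+\gamma\g_{t,j-1}^{k_{t-1}}$, one obtains $\tilde{\u}_{t,j}^{k_{t-1}}=\gamma\sum_{i=0}^{j-1}\beta^{j-1-i}\g_{t,i}^{k_{t-1}}$. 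Because $\tilde{\w}_{t,s}^{k_{t-1}}-\w_t=-\sum_{j=1}^{s}\tilde{\u}_{t,j}^{k_{t-1}}$, swapping summations and using $\sum_{j=i+1}^{s}\beta^{j-1-i}=(1-\beta^{s-i})/(1-\beta)$ gives
\begin{equation*}
\tilde{\w}_{t,s}^{k_{t-1}} - \w_t = -\frac{\gamma}{1-\beta}\sum_{i=0}^{s-1}(1-\beta^{s-i})\g_{t,i}^{k_{t-1}},
\end{equation*}
with coefficients $a_i:=1-\beta^{s-i}\in[0,1)$.

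Next I would decompose $\g_{t,i}^{k_{t-1}}=\nabla F(\tilde{\w}_{t,i}^{k_{t-1}})+\zeta_i$, where, by Assumption~\ref{assu:gradient}, $\zeta_i$ is a martingale difference with respect to the filtration generated by the worker's first $i$ samples and satisfies $\EB\|\zeta_i\|^2\leq\sigma^2$. Applying $\|x+y\|^2\leq 2\|x\|^2+2\|y\|^2$ separates the weighted gradient sum into a deterministic-mean piece and a noise piece. The mean piece is bounded by Cauchy-Schwarz: $\EB\|\sum a_i\nabla F(\tilde{\w}_{t,i}^{k_{t-1}})\|^2\leq s\sum_{i=0}^{s-1}a_i^2\,\EB\|\nabla F(\tilde{\w}_{t,i}^{k_{t-1}})\|^2\leq s\sum_{i=0}^{s-1}\EB\|\nabla F(\tilde{\w}_{t,i}^{k_{t-1}})\|^2$, using $a_i^2\leq 1$. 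For the noise piece, the martingale-difference property makes cross expectations vanish, so $\EB\|\sum a_i\zeta_i\|^2=\sum a_i^2\EB\|\zeta_i\|^2\leq s\sigma^2$. Combining the two pieces yields the per-step bound
\begin{equation*}
\EB\|\tilde{\w}_{t,s}^{k_{t-1}}-\w_t\|^2 \leq \frac{2\gamma^2 s}{(1-\beta)^2}\bigg[\sigma^2 + \sum_{i=0}^{s-1}\EB\|\nabla F(\tilde{\w}_{t,i}^{k_{t-1}})\|^2\bigg].
\end{equation*}

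Finally I would sum over $s=0,\dots,S-1$. The variance contribution is $\frac{2\gamma^2\sigma^2}{(1-\beta)^2}\sum_{s=0}^{S-1}s=\frac{\gamma^2 S(S-1)\sigma^2}{(1-\beta)^2}$, matching the claim exactly. The gradient contribution is $\frac{2\gamma^2}{(1-\beta)^2}\sum_{s=0}^{S-1}s\sum_{i=0}^{s-1}\EB\|\nabla F(\tilde{\w}_{t,i}^{k_{t-1}})\|^2$, and swapping the order of summation with $\sum_{s=i+1}^{S-1}s\leq S(S-1)/2$ yields the desired $\frac{\gamma^2 S(S-1)}{(1-\beta)^2}\sum_{i=0}^{S-1}\EB\|\nabla F(\tilde{\w}_{t,i}^{k_{t-1}})\|^2$. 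The main delicacy is preserving the coefficient $S(S-1)$ on the variance term: a naive uniform bound $\EB\|\g_{t,i}^{k_{t-1}}\|^2\leq\sigma^2+\EB\|\nabla F(\tilde{\w}_{t,i}^{k_{t-1}})\|^2$ applied before the inner Cauchy-Schwarz would inflate the $\sigma^2$ coefficient by an extra factor of $S$. Exploiting the martingale-difference structure of the noise, so that only $\sum a_i^2\leq s$ (and not $s\cdot s$) appears on the variance side, is what keeps the two coefficients equal and matches the statement.
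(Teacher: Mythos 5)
Your proposal is correct and follows essentially the same route as the paper's proof: unroll the local recursion to get $\tilde{\w}_{t,s}^{k_{t-1}}-\w_t=-\frac{\gamma}{1-\beta}\sum_{i=0}^{s-1}(1-\beta^{s-i})\g_{t,i}^{k_{t-1}}$, split into the noise part (handled by martingale orthogonality, giving the factor $s$ rather than $s^2$ on the variance) and the mean part (handled by Cauchy--Schwarz), then sum over $s$. The "delicacy" you flag is exactly what the paper exploits implicitly, so the two arguments coincide.
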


\begin{theorem} \label{thm:OrLoMo}
  With Assumptions~\ref{assu:gradient},~\ref{assu:smooth} and~\ref{assu:lowerbounded}, letting $16LS\gamma \leq (1-\beta)^2$ and
\begin{equation} \label{OrLoMo-penalty} 
\eta_t = 
\begin{cases} 
\frac{1}{K} & \tau_t \leq 2K, \\ 
\frac{1}{\tau_t} & \tau_t > 2K, 
\end{cases} 
\end{equation}
OrLoMo in Algorithm \ref{alg:OrLoMo} has the following convergence rate:
\begin{align*}
\mathbb{E}\left\|\nabla F(\bar{\w}_T)\right\|^2 & \leq \frac{4K(1-\beta)\left(F(\w_0)- F^*\right)}{\gamma ST} \\
&~~~~+\frac{4\gamma L\sigma^2}{K(1-\beta)^2}
+\frac{\gamma^2 L^2(S-1)\sigma^2}{(1-\beta)^2}, 
  \end{align*}
where $ \mathbb{E}\left\|\nabla F(\bar{\w}_T)\right\|^2 =  \frac{\sum_{k \in [K]}\hat{\eta}_{0,k}\left\|\nabla F(\w_0)\right\|^2}{\sum_{k \in [K]}\hat{\eta}_{0,k}+\sum_{t=1}^{T-1}\hat{\eta}_{t,k_{t-1}}}+\frac{\sum_{t=1}^{T-1}\hat{\eta}_{t,k_{t-1}}\mathbb{E}\left\|\nabla F(\w_t)\right\|^2}{\sum_{k \in [K]}\hat{\eta}_{0,k}+\sum_{t=1}^{T-1}\hat{\eta}_{t,k_{t-1}}}$.
\end{theorem}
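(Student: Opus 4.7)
The plan is to use the auxiliary descent variable $\hat{\y}_t$ as a surrogate that evolves like a momentum-free SGD iterate under delays. From \eqref{def:yhat}, the recursion $\hat{\y}_{t+1} = \hat{\y}_t - \frac{\hat{\eta}_{t,k_{t-1}}}{1-\beta}\Delta \h_t^{k_{t-1}}$ removes all momentum bookkeeping, since $\Delta \h_t^{k_{t-1}} = \gamma\sum_{s=0}^{S-1} \g_{t,s}^{k_{t-1}}$. First I would apply $L$-smoothness to $F(\hat{\y}_{t+1})$ along this recursion, take expectation conditioned on the iterates up to time $t$, and split $\nabla F(\hat{\y}_t)^{T}\sum_s \g_{t,s}^{k_{t-1}}$ by inserting $\nabla F(\tilde{\w}_{t,s}^{k_{t-1}})$ inside the inner product. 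Assumption~\ref{assu:gradient} replaces each $\g_{t,s}^{k_{t-1}}$ by $\nabla F(\tilde{\w}_{t,s}^{k_{t-1}})$ plus a zero-mean noise whose second moment contributes an $S\sigma^2$ term in the $\|\hat{\y}_{t+1}-\hat{\y}_t\|^2$ quadratic and vanishes under the cross term.

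The second step is to re-express the resulting inner product in terms of $\nabla F(\w_t)$, which is the quantity appearing in the convergence metric. Using the elementary identity $\langle a,b\rangle = \tfrac12(\|a\|^2+\|b\|^2-\|a-b\|^2)$ together with $L$-smoothness $\|\nabla F(\hat{\y}_t) - \nabla F(\tilde{\w}_{t,s}^{k_{t-1}})\|^2 \leq 2L^2\|\hat{\y}_t - \w_t\|^2 + 2L^2\|\tilde{\w}_{t,s}^{k_{t-1}} - \w_t\|^2$, I reduce the analysis to controlling these two gaps. The local gap is already bounded by Lemma~4. The global gap $\hat{\y}_t - \w_t$ is bounded by combining Lemmas~\ref{local-ormo-lemma:ugap}, \ref{local-ormo-lemma:wgap}, and \ref{lemma:ywgapPenalty}, which together represent $\hat{\y}_t - \w_t$ as a weighted sum, over in-flight messages from workers $k \neq k_{t-1}$, of scaled local momentums and local parameter updates carrying geometric weights $\beta^{\lceil t/K\rceil - \lceil ite(t,k)/K\rceil}$ and $\frac{\beta(1-\beta^{\cdots})}{1-\beta}$.

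The main obstacle is bounding these delay-indexed sums without a maximum-delay assumption. The delay-adaptive choice \eqref{OrLoMo-penalty} is exactly what makes this work: $\eta_t \leq 1/K$ always, while $\tau_t \eta_t \leq 2$ always. I would rearrange $\sum_t \sum_{k \neq k_{t-1}} \beta^{\lceil t/K\rceil - \lceil ite(t,k)/K\rceil}\hat{\eta}_{ite(t,k),k}\|\cdot\|^2$ worker-wise, collapsing the geometric sum $\sum_j \beta^j \leq 1/(1-\beta)$ across the groups and using the telescoping fact that, between any two consecutive server receipts from worker $k$, the gap is at most the delay $\tau$ at the second receipt, so $\hat{\eta}_{ite(t,k),k}$ times the number of global iterations it serves is at most $2$. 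This converts the double sum into $O(\sum_t \eta_t^2 / (1-\beta)^{O(1)})$ multiplied by the corresponding $\|\Delta\u\|^2$, $\|\Delta\w\|^2$ norms, which in turn unfold into $\gamma^2 S \sigma^2$ plus $\gamma^2 S \sum_s \|\nabla F(\tilde{\w}_{t,s}^{k_{t-1}})\|^2$ contributions via standard bounds for local momentum accumulation.

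Finally I would telescope $F(\hat{\y}_t)$ from $t=1$ to $T$, bound the initial $F(\hat{\y}_1) - F(\w_0)$ by $O(\gamma S \sigma^2 + \gamma S \|\nabla F(\w_0)\|^2)$ via smoothness, and absorb all the $\sum_t \eta_t \sum_s \|\nabla F(\tilde{\w}_{t,s}^{k_{t-1}})\|^2$ terms back into the left-hand-side descent. The step-size constraint $16LS\gamma \leq (1-\beta)^2$ is precisely calibrated to make this absorption work with a numerical factor of at most $1/2$, leaving a clean $\sum_t \hat{\eta}_{t,k_{t-1}} \mathbb{E}\|\nabla F(\w_t)\|^2$ on the left after one more application of smoothness to pass from gradients at $\tilde{\w}_{t,s}^{k_{t-1}}$ to gradients at $\w_t$. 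Dividing by the weight sum and collecting the residual $\sigma^2$ noise yields the three-term bound in the theorem; the $\gamma L \sigma^2/(K(1-\beta)^2)$ term comes from the global-momentum variance and the $\gamma^2 L^2 (S-1)\sigma^2/(1-\beta)^2$ term from the local drift bounded by Lemma~4.
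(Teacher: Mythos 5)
Your proposal follows essentially the same route as the paper's proof: a descent analysis on $F(\hat{\y}_t)$ along the momentum-free recursion (\ref{def:yhat}), a decomposition of the error into the local drift $\tilde{\w}_{t,s}^{k_{t-1}}-\w_t$ (Lemma~4) and the global gap $\hat{\y}_t-\w_t$ split via Lemmas~\ref{local-ormo-lemma:ugap}--\ref{lemma:ywgapPenalty}, a worker-wise rearrangement of the delay-indexed sums exploiting $\eta_t\le \frac{1}{K}$ and $\tau_t\eta_t\le 2$, and absorption of the local-gradient terms under $16LS\gamma\le(1-\beta)^2$. The only step you leave implicit is the final lower bound $\sum_{k\in[K]}\hat{\eta}_{0,k}+\sum_{t=1}^{T-1}\hat{\eta}_{t,k_{t-1}}\ge \frac{T}{2K}$, which the paper derives from the counting identity $\sum_{t=0}^{T-1}\tau_t\le(K-1)T$ (so at least $T/2$ of the delays are below $2K$) and which is needed to convert the weighted average into the stated $\frac{4K(1-\beta)(F(\w_0)-F^*)}{\gamma ST}$ term.
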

    \begin{table*}[!t] \small
  \centering
    \setlength{\tabcolsep}{1mm}{  
  \begin{tabular}{c|c|c|cccc|cccc}
    \toprule   
      \multicolumn{3}{c|}{Datasets} & \multicolumn{4}{c|}{CIFAR10}& \multicolumn{4}{c}{CIFAR100}  \\ 
    \midrule
     & Workers & Local Iterations & PRSGDm & AL-SGD &  local OrMo-DA & OrLoMo & PRSGDm & AL-SGD &  local OrMo-DA & OrLoMo \\
    \midrule
     \multirow{6}{*}{\rotatebox{90}{homogeneous}}&\multirow{3}{*}{8}& 8 & \textbf{92.18{\scriptsize$\pm$0.10}} & 91.46{\scriptsize$\pm$0.33} & 92.02{\scriptsize$\pm$0.23} & 92.11{\scriptsize$\pm$0.08} & \textbf{69.89{\scriptsize$\pm$0.08}} & 67.46{\scriptsize$\pm$0.48} & 69.04{\scriptsize$\pm$0.41} & 69.43{\scriptsize$\pm$0.23}  \\ 
     & & 16 & 91.67{\scriptsize$\pm$0.14} & 91.01{\scriptsize$\pm$0.09} & 91.36{\scriptsize$\pm$0.20} & \textbf{92.11{\scriptsize$\pm$0.26}} & 69.09{\scriptsize$\pm$0.31} & 67.05{\scriptsize$\pm$0.32} & 68.29{\scriptsize$\pm$0.46} & \textbf{69.49{\scriptsize$\pm$0.23}}  \\
     & & 32 & 91.37{\scriptsize$\pm$0.19} & 90.82{\scriptsize$\pm$0.16} & 90.02{\scriptsize$\pm$0.09} & \textbf{91.85{\scriptsize$\pm$0.12}} & 68.29{\scriptsize$\pm$0.24} & 67.17{\scriptsize$\pm$0.33} & 65.15{\scriptsize$\pm$0.20} & \textbf{68.95{\scriptsize$\pm$0.12}}  \\ \cmidrule(r){2-11} 
     &\multirow{3}{*}{16}& 8 & \textbf{92.00{\scriptsize$\pm$0.12}} & 90.26{\scriptsize$\pm$0.16} & 91.14{\scriptsize$\pm$0.07} & 91.39{\scriptsize$\pm$0.09} & \textbf{69.03{\scriptsize$\pm$0.12}}  & 64.67{\scriptsize$\pm$0.36} & 67.30{\scriptsize$\pm$0.48} & 68.92{\scriptsize$\pm$0.36}  \\
     & & 16 & \textbf{91.29{\scriptsize$\pm$0.13}} & 89.85{\scriptsize$\pm$0.28} & 89.08{\scriptsize$\pm$0.43} & 91.27{\scriptsize$\pm$0.18} & 66.93{\scriptsize$\pm$0.18} & 64.06{\scriptsize$\pm$0.25} & 64.94{\scriptsize$\pm$0.26} & \textbf{68.20{\scriptsize$\pm$0.60}}  \\
     & & 32 & 90.10{\scriptsize$\pm$0.02} & 89.21{\scriptsize$\pm$0.20} & 77.96{\scriptsize$\pm$0.87} & \textbf{91.05{\scriptsize$\pm$0.09}} & 64.78{\scriptsize$\pm$0.37} & 62.15{\scriptsize$\pm$0.71} & 49.09{\scriptsize$\pm$0.60} & \textbf{65.65{\scriptsize$\pm$0.25}}  \\ 
    \midrule 
     \multirow{6}{*}{\rotatebox{90}{heterogeneous}}&\multirow{3}{*}{8}& 8 & 92.06{\scriptsize$\pm$0.13} & 91.13{\scriptsize$\pm$0.19} & 92.18{\scriptsize$\pm$0.13} & \textbf{92.34{\scriptsize$\pm$0.10}} & 69.60{\scriptsize$\pm$0.23} & 67.97{\scriptsize$\pm$0.40} & \textbf{69.73{\scriptsize$\pm$0.29}} & 69.44{\scriptsize$\pm$0.30}  \\
     && 16 & 91.66{\scriptsize$\pm$0.13} & 91.15{\scriptsize$\pm$0.14} & 91.64{\scriptsize$\pm$0.07} & \textbf{91.76{\scriptsize$\pm$0.24}} & 68.57{\scriptsize$\pm$0.16} & 67.38{\scriptsize$\pm$0.23} & 68.62{\scriptsize$\pm$0.31} & \textbf{69.25{\scriptsize$\pm$0.21}}  \\
 && 32 & 91.61{\scriptsize$\pm$0.35} & 90.88{\scriptsize$\pm$0.23} & 90.83{\scriptsize$\pm$0.11} & \textbf{91.96{\scriptsize$\pm$0.12}} & 68.40{\scriptsize$\pm$0.30} & 67.06{\scriptsize$\pm$0.43} & 67.09{\scriptsize$\pm$0.44} & \textbf{69.25{\scriptsize$\pm$0.19}}  \\
    \cmidrule(r){2-11}
     &\multirow{3}{*}{16}& 8 & \textbf{91.88{\scriptsize$\pm$0.23}} & 90.02{\scriptsize$\pm$0.03} & 91.45{\scriptsize$\pm$0.14} & 91.61{\scriptsize$\pm$0.20} & \textbf{69.29{\scriptsize$\pm$0.18}} & 64.83{\scriptsize$\pm$0.30} & 67.79{\scriptsize$\pm$0.21} & 68.65{\scriptsize$\pm$0.34}  \\
     && 16 & 91.08{\scriptsize$\pm$0.05} & 89.85{\scriptsize$\pm$0.13} & 90.23{\scriptsize$\pm$0.14} & \textbf{91.50{\scriptsize$\pm$0.02}} & 67.34{\scriptsize$\pm$0.18} & 64.20{\scriptsize$\pm$0.27} & 66.07{\scriptsize$\pm$0.14} & \textbf{68.26{\scriptsize$\pm$0.27}}  \\
     && 32 & 90.41{\scriptsize$\pm$0.19} & 89.31{\scriptsize$\pm$0.34} & 83.45{\scriptsize$\pm$1.14} & \textbf{90.73{\scriptsize$\pm$0.29}} & 64.92{\scriptsize$\pm$0.40} & 61.99{\scriptsize$\pm$0.24} & 55.42{\scriptsize$\pm$0.51} & \textbf{66.19{\scriptsize$\pm$0.24}}  \\
    \bottomrule 
  \end{tabular}}
    \caption{Test accuracy results of SqueezeNet model.} \label{table: accu-squeezenet}
\end{table*}

\begin{figure*}[!t]
  \centering
  \subfigure[homogeneous~(CIFAR10)]{
    \begin{minipage}[b]{0.24\textwidth}
      \includegraphics[width=1\linewidth]{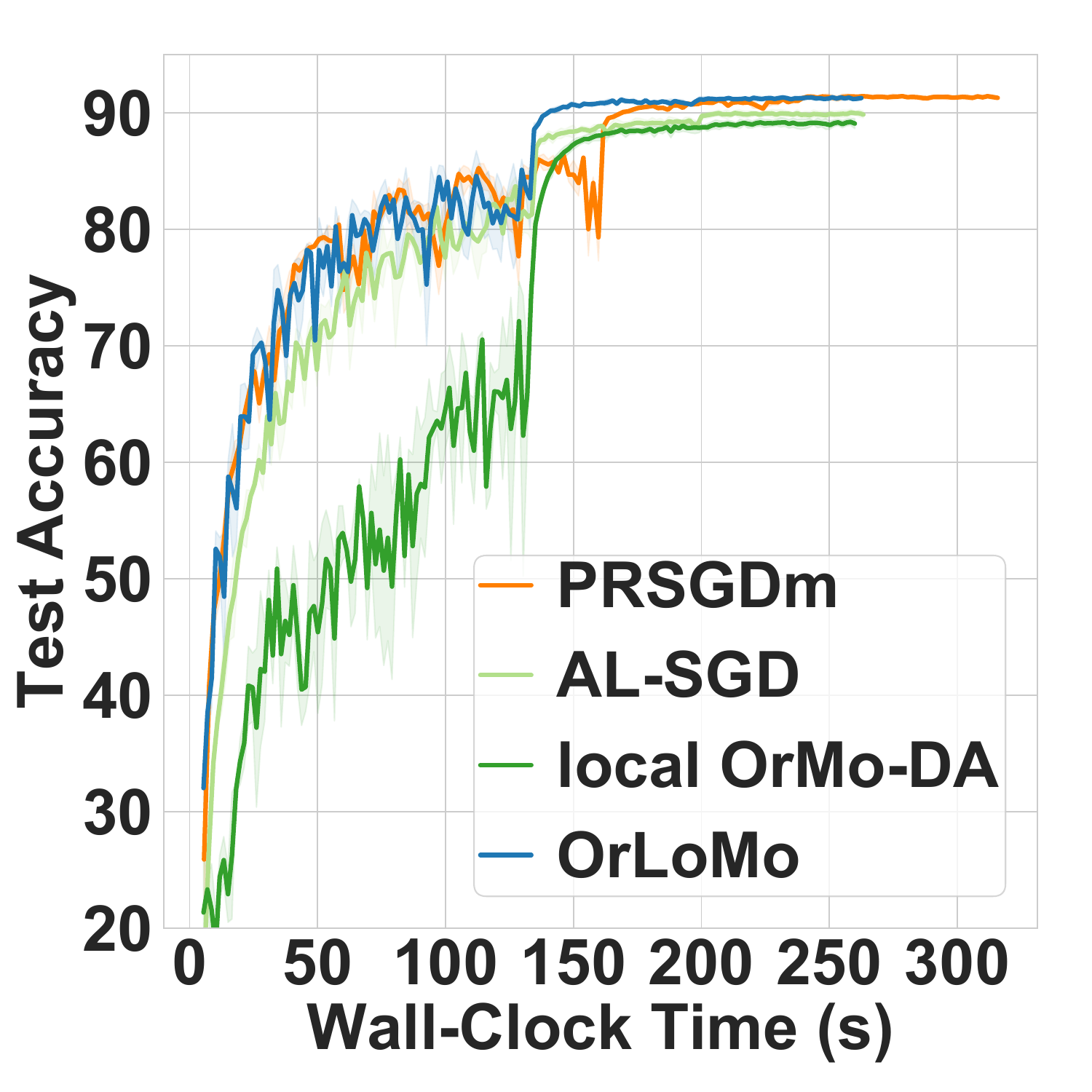}
      \end{minipage}}
  \subfigure[heterogeneous~(CIFAR10)]{
    \begin{minipage}[b]{0.24\textwidth}
      \includegraphics[width=1\linewidth]{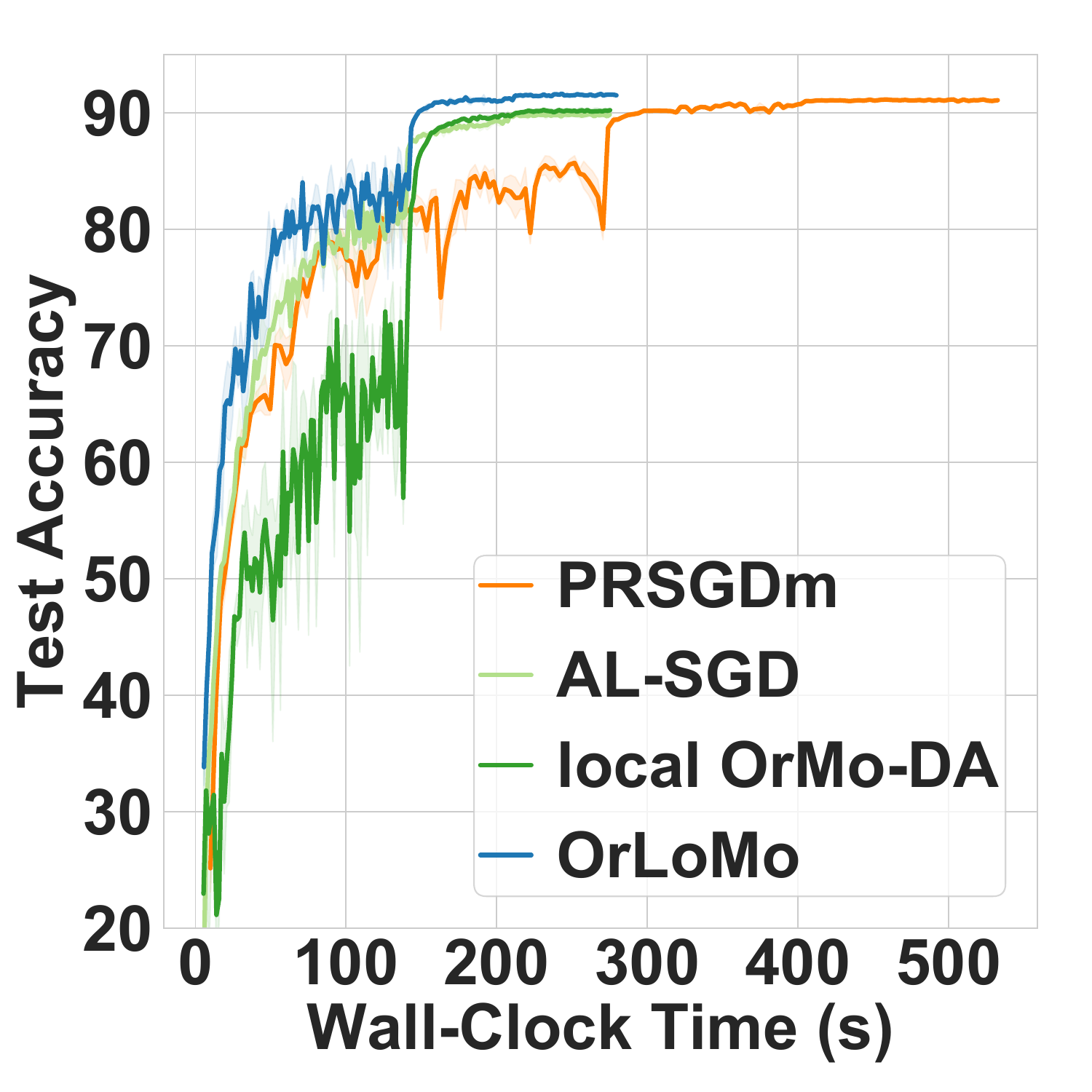}
      \end{minipage}}
  \subfigure[homogeneous~(CIFAR100)]{
    \begin{minipage}[b]{0.24\textwidth}
      \includegraphics[width=1\linewidth]{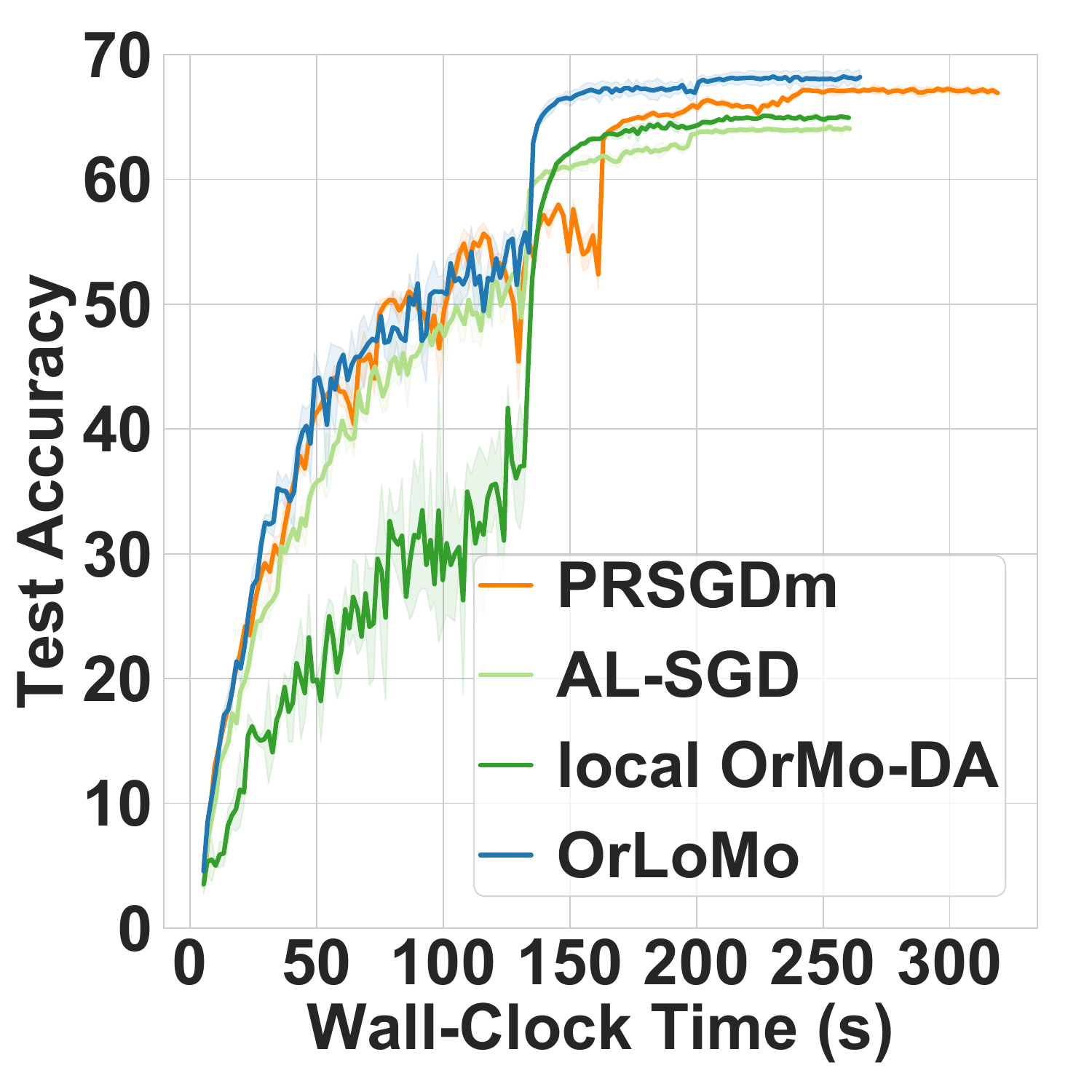}
      \end{minipage}}
  \subfigure[heterogeneous~(CIFAR100)]{
    \begin{minipage}[b]{0.24\textwidth}
      \includegraphics[width=1\linewidth]{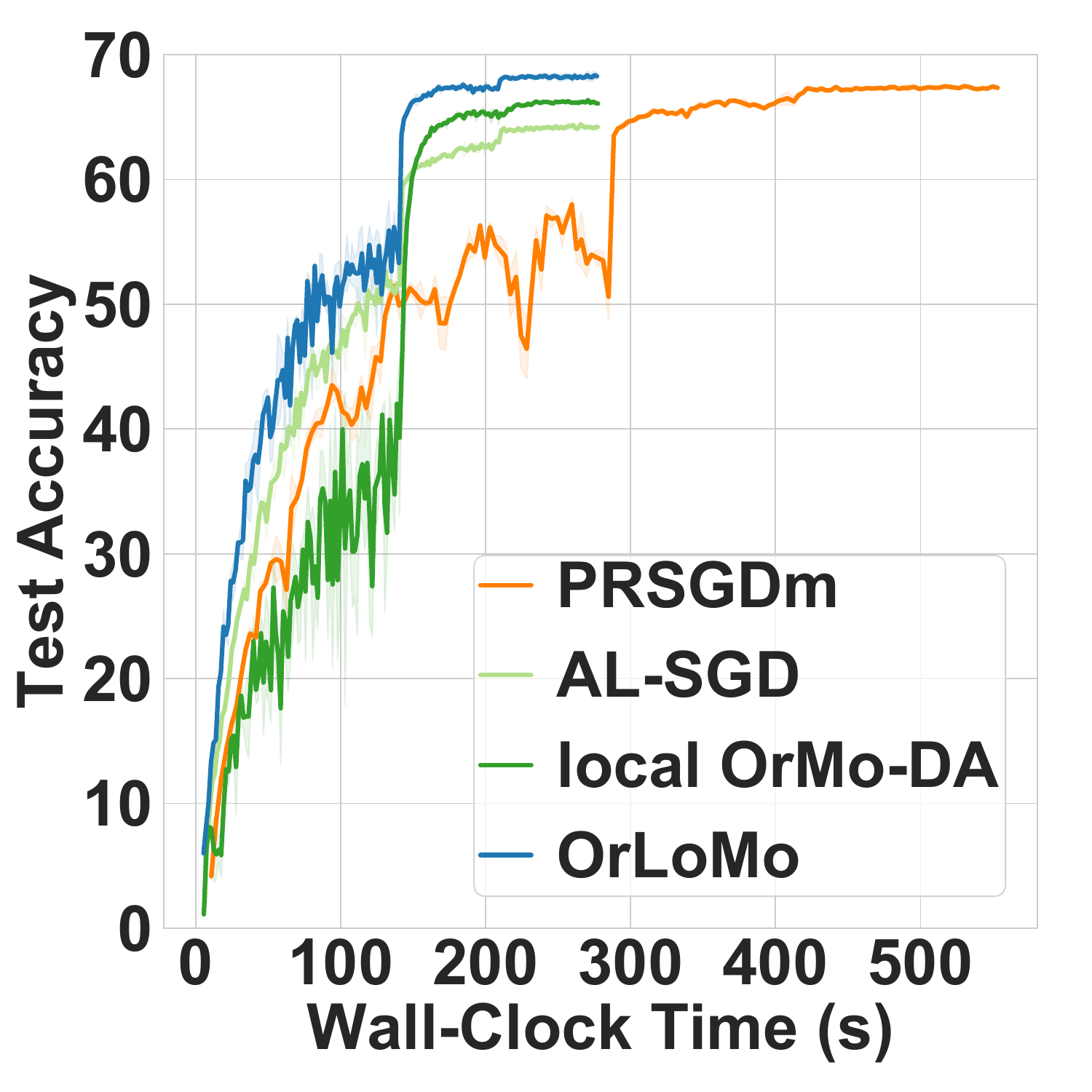}
      \end{minipage}}
\caption{Test accuracy curves of SqueezeNet model when $K=16, S=16$.}\label{fig:squeezenet-accu}
\end{figure*}   
\begin{remark}
The global learning rate in (\ref{OrLoMo-penalty}) implements a mechanism that penalizes both the local momentums and the local parameter updates with extremely large delays ($\tau_t > 2K$). This mechanism is critical for ensuring convergence of asynchronous methods under arbitrary delays, as shown in \cite{DBLP:conf/nips/MishchenkoBEW22, shiordered}.
\end{remark}
\begin{remark}
Theorem 1 in~\cite{DBLP:conf/icml/YuJY19} establishes the convergence rate of PRSGDm as $\mathcal{O}(\frac{1}{\gamma \tilde{T}}+\frac{\gamma}{K}+\gamma^2 S)$, with $\tilde{T}$ denoting its number of iterations. For fair comparison between OrLoMo and PRSGDm, we unify the metric to total gradient computations $C$.
PRSGDm's convergence rate is $\mathcal{O}(\frac{K}{\gamma C}+\frac{\gamma }{K}+\gamma^2 S )$ since $C=K\tilde{T}$.
 OrLoMo's convergence rate is $\mathcal{O}(\frac{K}{\gamma C}+\frac{\gamma}{K}+\gamma^2 S)$ since $C=ST$.
Our analysis proves identical gradient computation complexity between OrLoMo and its synchronous counterpart PRSGDm.
\end{remark}

\begin{remark}
If the local learning rate $\gamma$ is set as $\min\{\frac{(1-\beta)^2}{16LS}, \frac{K(1-\beta)^{\frac{3}{2}}\left(F(\w_0)- F^*\right)^{\frac{1}{2}}}{(LST)^{\frac{1}{2}}\sigma}, \frac{(1-\beta)\left[4K (F(\w_0)- F^*)\right]^{\frac{1}{3}}}{\left[S(S-1)T\right]^{\frac{1}{3}}\left(L\sigma \right)^{\frac{2}{3}}} \}$, the convergence rate in Theorem \ref{thm:OrLoMo} becomes $\mathcal{O}\left(\sqrt{\frac{L\sigma^2}{ST}} + \frac{(KL\sigma)^{\frac{2}{3}} (S-1)^{\frac{1}{3}}}{(ST)^{\frac{2}{3}}}+\frac{KL}{T}\right)$.
By letting $S=1$ and $\gamma=\min\{\frac{(1-\beta)^2}{16L}, \frac{K(1-\beta)^{\frac{3}{2}}\left(F(\w_0)- F^*\right)^{\frac{1}{2}}}{(LT)^{\frac{1}{2}}\sigma}\}$ in Theorem \ref{thm:OrLoMo}, we get the convergence rate $\mathcal{O}\left(\sqrt{\frac{L\sigma^2}{T}}+\frac{KL}{T}\right)$ for OrLoMo, which matches the convergence results for OrMo-DA in Theorem 2 in \cite{shiordered}.
\end{remark}

\section{Experiments} \label{expe}
In this section, we evaluate the performance of OrLoMo and other baseline methods. 
All methods are implemented based on the PS framework.
The experiments are conducted using NVIDIA RTX 2080 Ti GPUs. The distributed platform is implemented using Docker containerization.

  \begin{table*}[!t] \small
  \centering
  \setlength{\tabcolsep}{1mm}{  
  \begin{tabular}{c|c|c|cccc|cccc}
    \toprule   
      \multicolumn{3}{c|}{Datasets} & \multicolumn{4}{c|}{CIFAR10}& \multicolumn{4}{c}{CIFAR100}  \\ 
    \midrule
     & Workers & Local Iterations & PRSGDm & AL-SGD &  local OrMo-DA & OrLoMo & PRSGDm & AL-SGD &  local OrMo-DA & OrLoMo \\
    \midrule
     \multirow{6}{*}{\rotatebox{90}{homogeneous}}&\multirow{3}{*}{8}& 8 & 91.65{\scriptsize$\pm$0.11} & 90.99{\scriptsize$\pm$0.11} & \textbf{91.89{\scriptsize$\pm$0.05}} & 91.76{\scriptsize$\pm$0.19} & \textbf{67.78{\scriptsize$\pm$0.08}} & 65.20{\scriptsize$\pm$0.22}  & 67.27{\scriptsize$\pm$0.12} & 67.58{\scriptsize$\pm$0.21}   \\ 
     & & 16 & \textbf{91.68{\scriptsize$\pm$0.17}} & 90.72{\scriptsize$\pm$0.18} & 90.90{\scriptsize$\pm$0.11} & 91.49{\scriptsize$\pm$0.14} & \textbf{66.73{\scriptsize$\pm$0.32}} & 64.82{\scriptsize$\pm$0.21}  & 66.30{\scriptsize$\pm$0.09} & 66.32{\scriptsize$\pm$0.23}    \\
     & & 32 & 91.14{\scriptsize$\pm$0.15} & 90.69{\scriptsize$\pm$0.06} & 88.64{\scriptsize$\pm$0.26} & \textbf{91.26{\scriptsize$\pm$0.13}} & 65.43{\scriptsize$\pm$0.06}  & 64.74{\scriptsize$\pm$0.34} & 63.45{\scriptsize$\pm$0.18} & \textbf{66.08{\scriptsize$\pm$0.35}}  \\ \cmidrule(r){2-11} 
     &\multirow{3}{*}{16}& 8 & 91.36{\scriptsize$\pm$0.17} & 89.58{\scriptsize$\pm$0.10} & 90.71{\scriptsize$\pm$0.23} & \textbf{91.37{\scriptsize$\pm$0.13}} &  65.80{\scriptsize$\pm$0.12} & 62.26{\scriptsize$\pm$0.47} & 66.00{\scriptsize$\pm$0.16} & \textbf{66.18{\scriptsize$\pm$0.10}}  \\ 
     & & 16 & 90.64{\scriptsize$\pm$0.17} & 89.43{\scriptsize$\pm$0.18} & 88.69{\scriptsize$\pm$0.02} & \textbf{91.32{\scriptsize$\pm$0.14}} &  64.00{\scriptsize$\pm$0.38} & 60.80{\scriptsize$\pm$0.24} & 61.78{\scriptsize$\pm$0.15} & \textbf{65.23{\scriptsize$\pm$0.43}}  \\ 
     & & 32 & 89.72{\scriptsize$\pm$0.12} & 88.81{\scriptsize$\pm$0.17} & 76.42{\scriptsize$\pm$3.15} & \textbf{90.13{\scriptsize$\pm$0.08}} & 61.91{\scriptsize$\pm$0.24}  & 59.65{\scriptsize$\pm$0.44} & 48.48{\scriptsize$\pm$0.65} & \textbf{62.07{\scriptsize$\pm$0.19}}  \\ 
    \midrule 
     \multirow{6}{*}{\rotatebox{90}{heterogeneous}}&\multirow{3}{*}{8}& 8 & 91.90{\scriptsize$\pm$0.09} & 91.08{\scriptsize$\pm$0.28} &  91.88{\scriptsize$\pm$0.05} & \textbf{91.97{\scriptsize$\pm$0.31}} & 67.23{\scriptsize$\pm$0.24} & 65.02{\scriptsize$\pm$0.04} & \textbf{67.83{\scriptsize$\pm$0.12}} & 67.62{\scriptsize$\pm$0.42}   \\ 
     && 16 & 91.61{\scriptsize$\pm$0.13} & 90.84{\scriptsize$\pm$0.06} & 91.33{\scriptsize$\pm$0.11} & \textbf{91.76{\scriptsize$\pm$0.15}}  & 66.45{\scriptsize$\pm$0.33} & 64.71{\scriptsize$\pm$0.44} & 66.40{\scriptsize$\pm$0.16} & \textbf{66.48{\scriptsize$\pm$0.14}}  \\
     && 32 & 90.99{\scriptsize$\pm$0.02} & 90.94{\scriptsize$\pm$0.24} & 89.61{\scriptsize$\pm$0.38} & \textbf{91.44{\scriptsize$\pm$0.16}} & 65.43{\scriptsize$\pm$0.06} & 64.74{\scriptsize$\pm$0.34} & 63.45{\scriptsize$\pm$0.18} & \textbf{66.08{\scriptsize$\pm$0.35}} \\ 
    \cmidrule(r){2-11}
     &\multirow{3}{*}{16}& 8 & \textbf{91.28{\scriptsize$\pm$0.33}} & 89.58{\scriptsize$\pm$0.13} & 91.11{\scriptsize$\pm$0.10} & 91.27{\scriptsize$\pm$0.14} & 65.80{\scriptsize$\pm$0.12} & 62.26{\scriptsize$\pm$0.47} & 66.00{\scriptsize$\pm$0.16} & \textbf{66.18{\scriptsize$\pm$0.10}} \\ 
     && 16 & 90.72{\scriptsize$\pm$0.06} & 89.28{\scriptsize$\pm$0.09} & 89.44{\scriptsize$\pm$0.30} & \textbf{91.12{\scriptsize$\pm$0.15}} & 64.32{\scriptsize$\pm$0.08} & 61.02{\scriptsize$\pm$0.15} & 62.56{\scriptsize$\pm$0.64} & \textbf{65.44{\scriptsize$\pm$0.30}} \\ 
     && 32 & 89.76{\scriptsize$\pm$0.20} & 89.20{\scriptsize$\pm$0.48} & 82.26{\scriptsize$\pm$0.95} & \textbf{90.15{\scriptsize$\pm$0.23}} & 61.73{\scriptsize$\pm$0.23} & 59.60{\scriptsize$\pm$0.24} & 51.27{\scriptsize$\pm$0.78} & \textbf{61.88{\scriptsize$\pm$0.07}} \\ 
    \bottomrule 
  \end{tabular}}
  \caption{Test accuracy results of ResNet20 model.} \label{table: accu-resnet20}
\end{table*}
\begin{figure*}[!t]
  \centering
  \subfigure[homogeneous~(CIFAR10)]{
    \begin{minipage}[b]{0.24\textwidth}
      \includegraphics[width=1\linewidth]{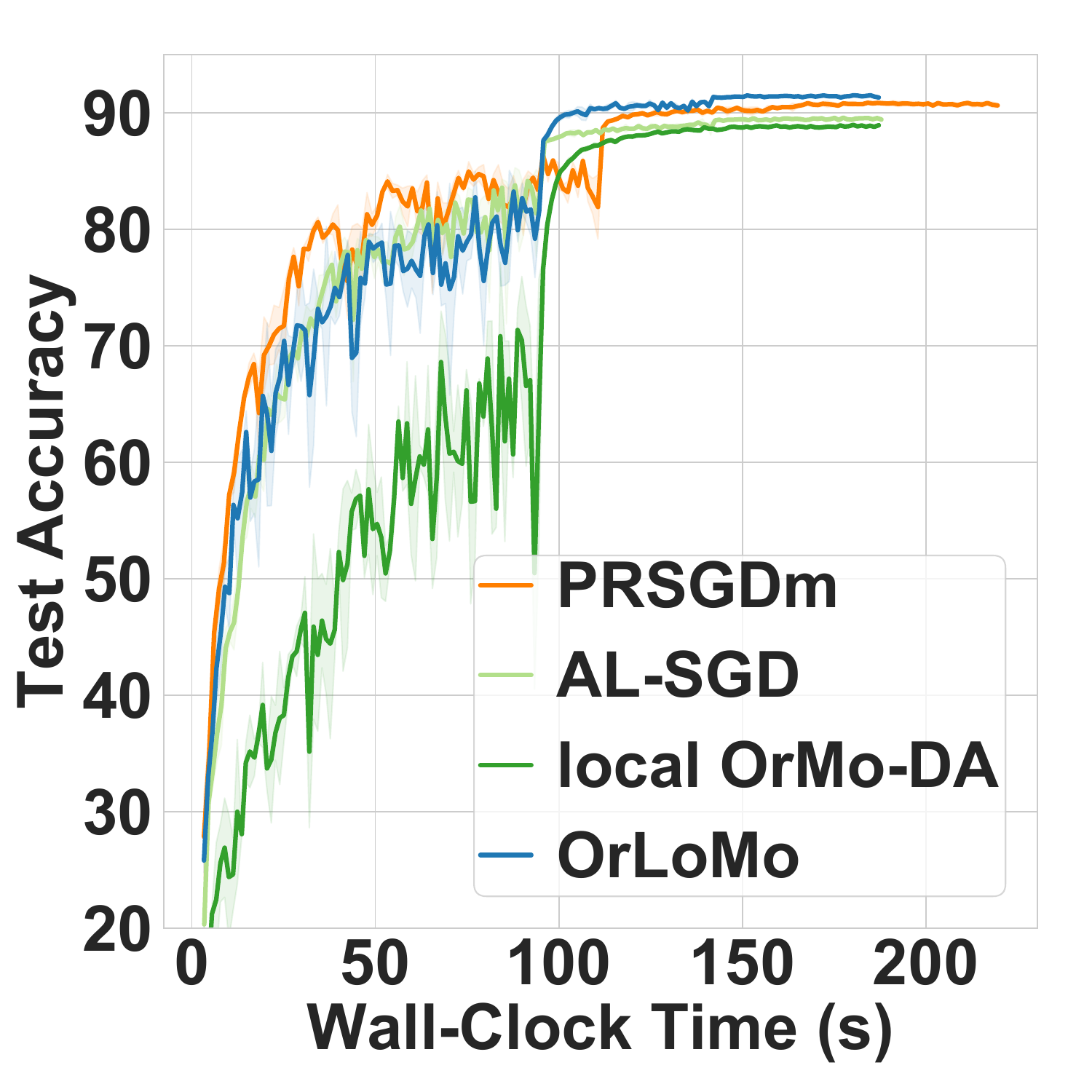}
      \end{minipage}}
  \subfigure[heterogeneous~(CIFAR10)]{
    \begin{minipage}[b]{0.24\textwidth}
      \includegraphics[width=1\linewidth]{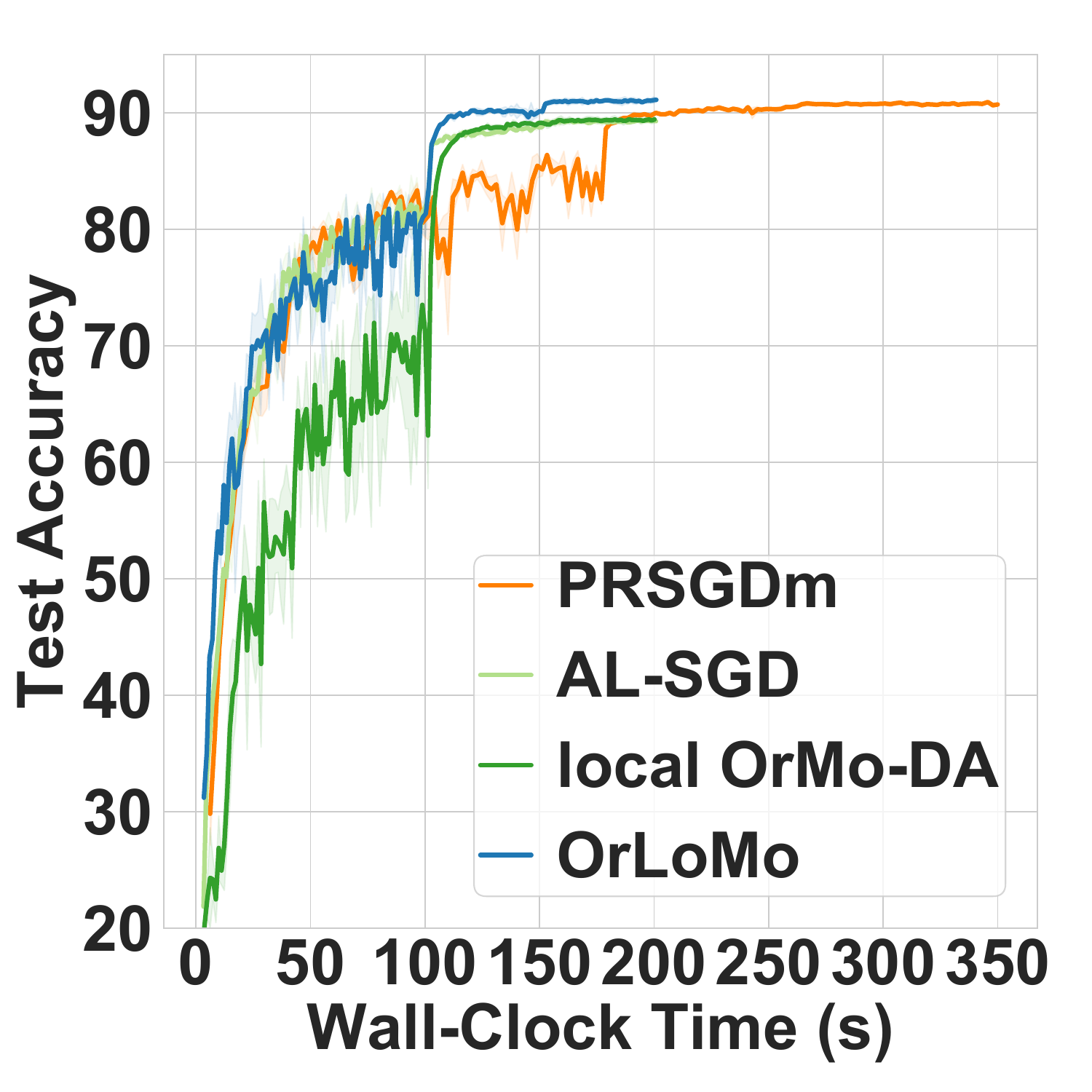}
      \end{minipage}}
  \subfigure[homogeneous~(CIFAR100)]{
    \begin{minipage}[b]{0.24\textwidth}
      \includegraphics[width=1\linewidth]{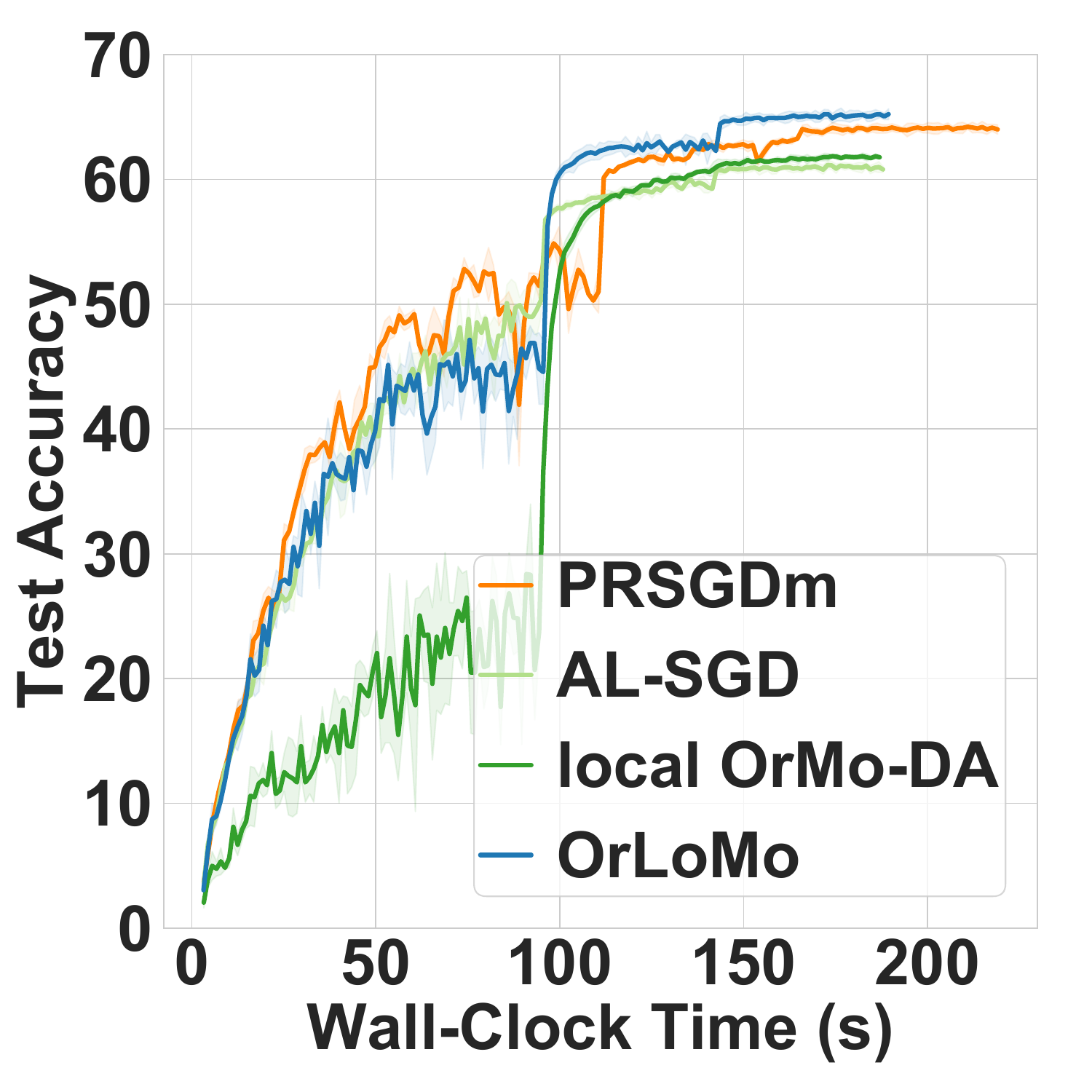}
      \end{minipage}}
  \subfigure[heterogeneous~(CIFAR100)]{
    \begin{minipage}[b]{0.24\textwidth}
      \includegraphics[width=1\linewidth]{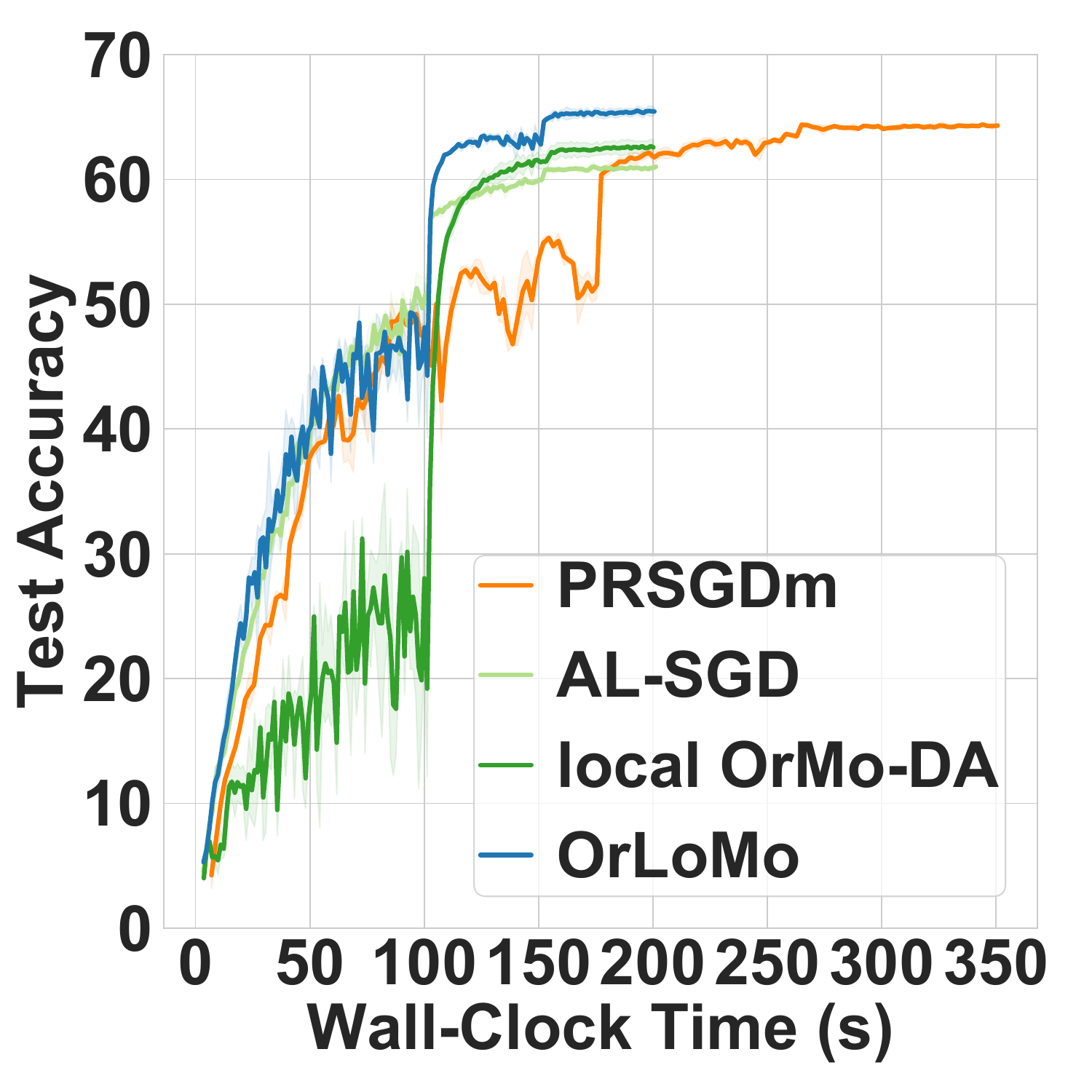}
      \end{minipage}}
\caption{Test accuracy curves of ResNet20 model when $K=16, S=16$.}\label{fig:resnet20-accu}
\end{figure*}   

  \begin{table}[!t] \small
  \centering
  \begin{tabular}{c|ccccc}
    \toprule   
     \emph{Slow/Fast} &  1 & 2 & 5 & 10 & 50  \\
    \midrule
    Maximum Delay&  34 & 207 & 714  & 1905  &  9449   \\
     Test Accuracy&  91.38 & 91.42 & 91.23  &  91.62  &  91.44   \\ 
    \bottomrule 
    \end{tabular}
      \caption{Test accuracy results of OrLoMo under different heterogeneous scenarios on ResNet20 model and CIFAR10 dataset with $K=16, S=8$.} \label{tab: extremedelay}
\end{table}
The baseline methods include PRSGDm \cite{DBLP:conf/icml/YuJY19}, AL-SGD, and local OrMo-DA. In local OrMo-DA, each worker performs SGD steps locally, while the server follows the same update rule as OrMo-DA \cite{shiordered}, as detailed in the appendix. AL-SGD, local OrMo-DA, and OrLoMo are ADL methods, while PRSGDm is an SDL method. We evaluate these methods by training SqueezeNet \cite{NIPS2015_ae0eb3ee} and ResNet20 \cite{he2016deep} models on CIFAR10 and CIFAR100 datasets~\cite{Krizhevsky2009LearningML}. Additional experimental results of ResNet32 on Tiny-ImageNet are provided in the appendix. The models are trained with 160 epochs.
The number of workers $K$ is set to 8 and 16. The batch size for each local iteration is set to 64. The number of local iterations $S$ is set to 8, 16, and 32. 
The local learning rate is multiplied by 0.1 at the 80-th and 120-th epochs.
The momentum coefficient is set to 0.9. The weight decay is set to 0.001. Each experiment is repeated 3 times. Two scenarios are considered in our experiments:
\begin{itemize}
    \item Homogeneous: All the workers have similar computational capabilities.
    \item Heterogeneous: $25\%$ of all workers are designated as slow workers, with their average gradient computation time being twice as long as that of the other workers.
\end{itemize}

Table~\ref{table: accu-squeezenet} and Table~\ref{table: accu-resnet20} show the test accuracy results. Figure~\ref{fig:squeezenet-accu} and Figure~\ref{fig:resnet20-accu} present the test accuracy curves with respect to wall-clock time. Experimental results on training loss are presented in the appendix.
Compared with AL-SGD and local OrMo-DA, OrLoMo can achieve better test accuracy. 
OrLoMo's communication overhead introduced by local momentums incurs negligible training time impact, maintaining comparable efficiency to both AL-SGD and local OrMo-DA.
As shown in Figure~\ref{fig:squeezenet-accu} and Figure~\ref{fig:resnet20-accu}, OrLoMo can be 2 times faster than PRSGDm in the heterogeneous scenario since the training speed of PRSGDm is hindered by slow workers. In contrast, slow workers have a limited impact on OrLoMo's training speed. 
Even in the homogeneous scenario, OrLoMo remains faster than PRSGDm. 
This advantage arises because the computation time of each worker varies within a certain range, and some workers must wait for the others to complete their gradient computations in PRSGDm. 

To assess OrLoMo's robustness to arbitrary delays (particularly extreme maximum delays), we conduct experiments under different heterogeneous scenarios. We designate one worker as the slow worker while keeping the others as fast workers. 
Table~\ref{tab: extremedelay} shows test accuracy results of OrLoMo when $K=16,$ $S=8$. 
The \emph{Slow/Fast} ratio quantifies the disparity in average gradient computation time between the slow worker and the fast workers, with higher values indicating more severe maximum delays.
Our experimental results confirm that OrLoMo sustains robust performance under extremely large delays. This aligns with our theoretical analysis, which remains valid under arbitrary delays.
\section{Conclusion}
In this paper, we propose a novel method, called OrLoMo, for asynchronous distributed learning. To the best of our knowledge, OrLoMo is the first method to implement asynchronous distributed MSGD with local updates. We prove the convergence of OrLoMo for non-convex problems under arbitrary delays. Empirical results demonstrate that OrLoMo can achieve state-of-the-art performance.

\section{Acknowledgments}
This work is supported by  National Key R\&D Program of China~(No.2020YFA0713900), the Key Major Project of Pengcheng
 Laboratory~(No.PCL2024A06) and NSFC Project
~(No.12326615).
\bibliography{aaai2026}

\onecolumn
\section{Appendix}

\subsection{Convergence Analysis}
First, we introduce an auxiliary variable $\hat{\u}_t$ corresponding to $\u_t$ for each iteration $t \geq 1$: 
\begin{align} 
& \hat{\u}_{t+1} = \left\{
\begin{aligned}
& \beta \hat{\u}_{t}+\hat{\eta}_{t,k_{t-1}}{\Delta \u}_{t}^{k_{t-1}} & K \mid \left(t-1\right), \\
& \hat{\u}_{t}+\hat{\eta}_{t,k_{t-1}}{\Delta \u}_{t}^{k_{t-1}} & K \nmid \left(t-1\right),
\end{aligned}
\right. 
\end{align} and $\hat{\u}_{1} =  \sum_{k \in [K]}  \hat{\eta}_{0,k}{\Delta \u}_{0}^{k}$.

\begin{lemma}
  For any $t \geq 0$, the difference between $\u_{t+1}$ and $\hat{\u}_{t+1}$ can be expressed as follows:
  \begin{align} \label{local-ormo-eq:ugap-appendix}
    \hat{\u}_{t+1} - \u_{t+1} =  \sum_{k \in [K], k \neq k_t} \beta^{\lceil \frac{t}{K} \rceil - \lceil \frac{ite(t, k)}{K}\rceil} \hat{\eta}_{ite(t, k),k}{\Delta \u}_{ite(t, k)}^k.
  \end{align}
\end{lemma}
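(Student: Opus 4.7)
The plan is to prove the identity by induction on $t$, exploiting the close parallel between the recursive definition of $\hat{\u}_{t+1}$ and the server update rule that produces $\u_{t+1}$. The base case $t=0$ should be essentially a matter of unrolling the definitions: on one hand, the server update with $\u_0=\0$ yields $\u_1 = \eta_0 \Delta \u_0^{k_0}$, and on the other hand the initial condition gives $\hat{\u}_1 = \sum_{k \in [K]} \hat{\eta}_{0,k}\Delta \u_0^k$, from which the claimed formula pops out immediately after observing that $ite(0,k)=0$ for all $k$ and $\hat{\eta}_{0,k_0}=\eta_0$ because $next(0,k_0)=0$.

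For the inductive step, assume the formula holds at $t-1$ and split along the two branches of the definition of $\hat{\u}_{t+1}$. When $K \mid (t-1)$ (equivalently $\lceil t/K\rceil = \lceil (t-1)/K\rceil + 1$, the same condition that triggers $\u_{t+1/2} = \beta\u_t$ in the server), multiplying the inductive identity by $\beta$ bumps every exponent by one and, using $ite(t,k)=ite(t-1,k)$ for $k \neq k_{t-1}$, produces $\beta(\hat{\u}_t-\u_t) = \sum_{k \neq k_{t-1}} \beta^{\lceil t/K\rceil - \lceil ite(t,k)/K\rceil}\hat{\eta}_{ite(t,k),k}\Delta \u_{ite(t,k)}^k$. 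The extra term $\hat{\eta}_{t,k_{t-1}}\Delta \u_t^{k_{t-1}}$ appearing in $\hat{\u}_{t+1}$ is exactly the $k = k_{t-1}$ summand, since $ite(t,k_{t-1}) = t$ (worker $k_{t-1}$ was just handed $\w_t$), which makes the $\beta$-exponent zero and gives $\hat{\eta}_{ite(t,k_{t-1}),k_{t-1}} = \hat{\eta}_{t,k_{t-1}}$. Adding this in turns the exclusion $k \neq k_{t-1}$ into a full sum over $[K]$. Finally, the server update subtracts $\beta^{\lceil t/K\rceil - \lceil ite(t,k_t)/K\rceil}\eta_t\Delta \u_{ite(t,k_t)}^{k_t}$, which using $next(ite(t,k_t),k_t)=t$ is precisely the $k=k_t$ summand, converting the full sum into the claimed sum over $k \neq k_t$. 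The case $K \nmid (t-1)$ is identical except that no $\beta$-multiplication occurs and $\lceil t/K\rceil = \lceil (t-1)/K\rceil$, so the exponents are carried through unchanged.

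The main obstacle is the bookkeeping around the two identity shifts that convert the inductive hypothesis into the desired statement: the exclusion index flips from $k_{t-1}$ to $k_t$ (one term added, one removed), and the exponent $\lceil (t-1)/K\rceil - \lceil ite(t-1,k)/K\rceil$ must be rewritten as $\lceil t/K\rceil - \lceil ite(t,k)/K\rceil$. Both rewrites hinge on exactly the case split $K \mid (t-1)$, and on the identity $ite(t,k)=ite(t-1,k)$ for $k \neq k_{t-1}$ together with $ite(t,k_{t-1})=t$. Once these small combinatorial facts are isolated and verified, the induction closes mechanically.
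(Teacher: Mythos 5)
Your proposal is correct and follows essentially the same route as the paper: induction on $t$ with a case split on $K \mid (t-1)$, using the identities $ite(t,k)=ite(t-1,k)$ for $k \neq k_{t-1}$, $ite(t,k_{t-1})=t$, and $\eta_t=\hat{\eta}_{ite(t,k_t),k_t}$. The only difference is cosmetic: where the paper separately treats $k_t=k_{t-1}$ and $k_t\neq k_{t-1}$ at the end, you pass through the full sum over $[K]$ and then delete the $k=k_t$ term, which handles both subcases at once.
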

\begin{proof} 
  For $t = 0$, $$\hat{\u}_{1} =  \sum_{k \in [K]} \hat{\eta}_{0,k} {\Delta \u}_{0}^{k}, \u_1 = \eta_0 {\Delta \u}_{0}^{k_0},$$ then we have
  \begin{align*}
    \hat{\u}_1 - \u_1 =   \sum_{k \in [K], k \neq k_0} \beta^{\lceil \frac{0}{K} \rceil - \lceil \frac{0}{K}\rceil}\hat{\eta}_{0,k}  {\Delta \u}_{0}^{k}
    = \sum_{k \in [K], k \neq k_0} \beta^{\lceil \frac{0}{K} \rceil - \lceil \frac{ite(0, k)}{K}\rceil} \hat{\eta}_{0,k} {\Delta \u}_{0}^{k}.
  \end{align*}

Given that (\ref{local-ormo-eq:ugap-appendix}) is valid at $t = t'-1$, we demonstrate its validity for $t = t'$, where $t'$ is an arbitrary integer satisfying $t' \geq 1$. 

We begin our discussion by dividing it into two cases depending on whether $t'-1$ is divisible by $K$.

\begin{itemize}
    \item $K \mid (t'-1)$
  \begin{align*}
    \hat{\u}_{t'+1} &= \beta \hat{\u}_{t'}+{\hat{\eta}}_{t', k_{t'-1}}{\Delta \u}_{t'}^{k_{t'-1}}, \\
    \u_{t'+1}&=\beta \u_{t'} +  \beta^{\lceil \frac{t'}{K} \rceil - \lceil \frac{ite(t', k_{t'})}{K}\rceil}{\eta}_{t'}{\Delta \u}_{ite(t', k_{t'})}^{k_{t'}}\\
    &\overset{(a)}{=} \beta \u_{t'} +  \beta^{\lceil \frac{t'}{K} \rceil - \lceil \frac{ite(t', k_{t'})}{K}\rceil}{\hat{\eta}}_{ite(t', k_{t'}), k_{t'}}{\Delta \u}_{ite(t', k_{t'})}^{k_{t'}}\\
    \hat{\u}_{t'+1} - \u_{t'+1} &= \beta (\hat{\u}_{t'} - \u_{t'}) + {\hat{\eta}}_{t', k_{t'-1}}{\Delta \u}_{t'}^{k_{t'-1}} - \beta^{\lceil \frac{t'}{K} \rceil - \lceil \frac{ite(t', k_{t'})}{K}\rceil}{\hat{\eta}}_{ite(t', k_{t'}), k_{t'}}{\Delta \u}_{ite(t', k_{t'})}^{k_{t'}}\\  
    &\overset{(b)}{=} \sum_{k \in [K], k \neq k_{t'-1}} \beta^{\lceil \frac{t'}{K} \rceil - \lceil \frac{ite(t'-1, k)}{K}\rceil} {\hat{\eta}}_{ite(t'-1, k), k}{\Delta \u}_{ite(t'-1, k)}^{k}\\
    &~~~~- \beta^{\lceil \frac{t'}{K} \rceil - \lceil \frac{ite(t', k_{t'})}{K}\rceil}{\hat{\eta}}_{ite(t', k_{t'}), k_{t'}}{\Delta \u}_{ite(t', k_{t'})}^{k_{t'}}+{\hat{\eta}}_{t', k_{t'-1}}{\Delta \u}_{t'}^{k_{t'-1}}
  \end{align*}
   Equation (a) holds because ${\eta}_{t'}={\eta}_{next\left(ite\left(t', k_{t'}\right), k_{t'}\right)}={\hat{\eta}}_{ite(t', k_{t'}), k_{t'}}$, where the first equality follows from the identity
   $next\left(ite\left(t', k_{t'}\right), k_{t'}\right)=t'$ and the second equality uses the definition $\hat{\eta}_{t,k} =  \eta_{next(t,k)}$.
   
   Equation (b) holds because $\lceil \frac{t'}{K} \rceil=\lceil \frac{t'-1}{K} \rceil + 1$, when $K \mid (t'-1)$.
    
  \item $K \nmid (t'-1)$ 
  \begin{align*}
    \hat{\u}_{t'+1} &= \hat{\u}_{t'}+{\hat{\eta}}_{t', k_{t'-1}}{\Delta \u}_{t'}^{k_{t'-1}}, \\
    \u_{t'+1}&= \u_{t'} +  \beta^{\lceil \frac{t'}{K} \rceil - \lceil \frac{ite(t', k_{t'})}{K}\rceil}{\hat{\eta}}_{ite(t', k_{t'}), k_{t'}}{\Delta \u}_{ite(t', k_{t'})}^{k_{t'}}\\
    \hat{\u}_{t'+1} - \u_{t'+1} 
    &= (\hat{\u}_{t'} - \u_{t'}) + {\hat{\eta}}_{t', k_{t'-1}}{\Delta \u}_{t'}^{k_{t'-1}}- \beta^{\lceil \frac{t'}{K} \rceil - \lceil \frac{ite(t', k_{t'})}{K}\rceil}{\hat{\eta}}_{ite(t', k_{t'}), k_{t'}}{\Delta \u}_{ite(t', k_{t'})}^{k_{t'}}\\
    &\overset{(c)}{=} \sum_{k \in [K], k \neq k_{t'-1}} \beta^{\lceil \frac{t'}{K} \rceil - \lceil \frac{ite(t'-1, k)}{K}\rceil} {\hat{\eta}}_{ite(t'-1, k), k}{\Delta \u}_{ite(t'-1, k)}^{k}\\
    &~~~~ - \beta^{\lceil \frac{t'}{K} \rceil - \lceil \frac{ite(t', k_{t'})}{K}\rceil}{\hat{\eta}}_{ite(t', k_{t'}), k_{t'}}{\Delta \u}_{ite(t', k_{t'})}^{k_{t'}}+ {\hat{\eta}}_{t', k_{t'-1}}{\Delta \u}_{t'}^{k_{t'-1}}
  \end{align*}
    Equation (c) holds because $\lceil \frac{t'}{K} \rceil=\lceil \frac{t'-1}{K} \rceil$, when $K \nmid (t'-1)$.
  \end{itemize}
  
  Since $ite(t', k) = ite(t'-1,k), \forall k \neq k_{t'-1}$, we can get the following equation for both cases above:
  \begin{align*}
    \hat{\u}_{t'+1} - \u_{t'+1}  
    &= \sum_{k \in [K], k \neq k_{t'-1}} \beta^{\lceil \frac{t'}{K} \rceil - \lceil \frac{ite(t', k)}{K}\rceil} {\hat{\eta}}_{ite(t', k),k}{\Delta \u}_{ite(t', k)}^{k} \\
    &~~~~ - \beta^{\lceil \frac{t'}{K} \rceil - \lceil \frac{ite(t', k_{t'})}{K}\rceil} {\hat{\eta}}_{ite(t', k_{t'}), k_{t'}}{\Delta \u}_{ite(t', k_{t'})}^{k_{t'}} +  {\hat{\eta}}_{t', k_{t'-1}}{\Delta \u}_{t'}^{k_{t'-1}}
    \end{align*}

  Then, we divide our discussion into two cases, considering  whether $k_{t'} = k_{t'-1}$.

  \begin{itemize}
      \item $k_{t'}=k_{t'-1}$
  \begin{align*}
    \hat{\u}_{t'+1} - \u_{t'+1} &\overset{(d)}{=} \sum_{k \in [K], k \neq k_{t'}} \beta^{\lceil \frac{t'}{K} \rceil - \lceil \frac{ite(t', k)}{K}\rceil} {\hat{\eta}}_{ite(t', k),k}{\Delta \u}_{ite(t', k)}^{k}.
  \end{align*}    
  Equation (d) holds because $ite(t', k_{t'}) = t'$ when $k_{t'}=k_{t'-1}$.
  \item $k_{t'}\neq k_{t'-1}$    
    \begin{align*}
    \hat{\u}_{t'+1} - \u_{t'+1}  &= \sum_{k \in [K], k \neq k_{t'-1}} \beta^{\lceil \frac{t'}{K} \rceil - \lceil \frac{ite(t', k)}{K}\rceil}{\hat{\eta}}_{ite(t', k),k}{\Delta \u}_{ite(t', k)}^{k}  +{\hat{\eta}}_{t', k_{t'-1}} {\Delta \u}_{t'}^{k_{t'-1}} \\
    &~~~~- \beta^{\lceil \frac{t'}{K} \rceil - \lceil \frac{ite(t', k_{t'})}{K}\rceil} {\hat{\eta}}_{ite(t', k_{t'}), k_{t'}}{\Delta \u}_{ite(t', k_{t'})}^{k_{t'}}\\
    &= \sum_{k \in [K], k \neq k_{t'-1}, k \neq k_{t'}} \beta^{\lceil \frac{t'}{K} \rceil - \lceil \frac{ite(t', k)}{K}\rceil} {\hat{\eta}}_{ite(t', k),k}{\Delta \u}_{ite(t', k)}^{k}+ {\hat{\eta}}_{t', k_{t'-1}} {\Delta \u}_{t'}^{k_{t'-1}}\\
    &= \sum_{k \in [K], k \neq k_{t'-1}, k \neq k_{t'}} \beta^{\lceil \frac{t'}{K} \rceil - \lceil \frac{ite(t', k)}{K}\rceil} {\hat{\eta}}_{ite(t', k),k} {\Delta \u}_{ite(t', k)}^{k} \\
    &~~~~+ \beta^{\lceil \frac{t'}{K} \rceil - \lceil \frac{ite(t', k_{t'-1})}{K}\rceil}{\hat{\eta}}_{ite(t', k_{t'-1}),k_{t'-1}}{\Delta \u}_{ite(t', k_{t'-1})}^{k_{t'-1}} \\
    &= \sum_{k \in [K], k \neq k_{t'}} \beta^{\lceil \frac{t'}{K} \rceil - \lceil \frac{ite(t', k)}{K}\rceil}{\hat{\eta}}_{ite(t', k),k}{\Delta \u}_{ite(t', k)}^{k}. 
  \end{align*}
  \end{itemize}

  We can conclude that $\hat{\u}_{t+1} - \u_{t+1} = \sum_{k \in [K], k \neq k_t} \beta^{\lceil \frac{t}{K} \rceil - \lceil \frac{ite(t, k)}{K}\rceil} {\hat{\eta}}_{ite(t, k),k}{\Delta \u}_{ite(t, k)}^k$ holds for $\forall t \geq 0$.
\end{proof}

Secondly, we introduce an auxiliary variable $\hat{\w}_t $ corresponding to $\w_t$  for each iteration $t \geq 1$:  
\begin{align*} 
& \hat{\w}_{t+1} = \left\{
\begin{aligned}
& \hat{\w}_{t}-\beta \hat{\u}_{t}- {\hat{\eta}}_{t,k_{t-1}}{\Delta \w}_t^{k_{t-1}}& K \mid \left(t-1\right), \\
& \hat{\w}_{t}- {\hat{\eta}}_{t,k_{t-1}}{\Delta \w}_t^{k_{t-1}} & K \nmid \left(t-1\right),
\end{aligned}
\right. 
\end{align*} and $\hat{\w}_{1} = \w_0 - \sum_{k \in [K]} {\hat{\eta}}_{0,k} {\Delta \w}_0^k$.

\begin{lemma}
  For any $t \geq 0$, the difference between $\w_{t+1}$ and $\hat{\w}_{t+1}$ can be expressed as follows:
  \begin{align} \label{local-ormo-eq:wgapPenalty-appendix}
    \hat{\w}_{t+1} - \w_{t+1}=  -\sum_{k \in [K], k \neq k_t} \left[{\hat{\eta}}_{ite(t, k),k}{\Delta \w}_{ite(t, k)}^k + \frac{\beta(1-\beta^{\lceil \frac{t}{K} \rceil - \lceil \frac{ite(t, k)}{K}\rceil})}{1-\beta}{\hat{\eta}}_{ite(t, k),k}{\Delta \u}_{ite(t, k)}^k\right].
  \end{align}
\end{lemma}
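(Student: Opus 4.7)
The plan is to mirror the induction used in the proof of Lemma~\ref{local-ormo-lemma:ugap}, while additionally invoking that lemma to dispose of the $-\beta\hat{\u}_{t}$ term that distinguishes the $\hat{\w}_{t+1}$ recursion from the $\hat{\u}_{t+1}$ recursion. I would induct on $t$, comparing the piecewise definition of $\hat{\w}_{t+1}$ with the server's update rule in lines 7--13 of Algorithm~\ref{alg:OrLoMo}, substituting the closed form (\ref{local-ormo-eq:ugap-appendix}) wherever $\hat{\u}_{t}-\u_{t}$ appears, and then algebraically fusing the $\Delta \u$ coefficients into the target form $\tfrac{\beta(1-\beta^{\lceil t/K\rceil-\lceil ite(t,k)/K\rceil})}{1-\beta}$.

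For the base case $t=0$, the ceiling condition on line 6 of Algorithm~\ref{alg:OrLoMo} is false (and is harmless anyway since $\u_{0}=\0$), so $\w_{1/2}=\w_{0}$ and the identity $\eta_{0}=\hat{\eta}_{0,k_{0}}$ gives $\w_{1} = \w_{0} - \hat{\eta}_{0,k_{0}}\Delta \w_{0}^{k_{0}}$ because $\beta(1-\beta^{0})/(1-\beta)=0$. Subtracting this from $\hat{\w}_{1} = \w_{0} - \sum_{k\in[K]}\hat{\eta}_{0,k}\Delta \w_{0}^{k}$ reproduces (\ref{local-ormo-eq:wgapPenalty-appendix}) at $t=0$; the $\Delta \u$ terms on the right-hand side vanish because their coefficient $\beta(1-\beta^{0})/(1-\beta)$ is zero as well.

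For the inductive step at $t=t'$, I would split on whether $K\mid(t'-1)$ exactly as in the proof of Lemma~\ref{local-ormo-lemma:ugap}. In both cases $\hat{\w}_{t'+1}-\w_{t'+1}$ decomposes as the sum of $(\hat{\w}_{t'}-\w_{t'})$, to which the induction hypothesis applies, a $-\beta(\hat{\u}_{t'}-\u_{t'})$ term that appears only when $K\mid(t'-1)$ (since otherwise $\w_{t'+1/2}=\w_{t'}$), the freshly-arrived $-\hat{\eta}_{t',k_{t'-1}}\Delta \w_{t'}^{k_{t'-1}}$ from the $\hat{\w}$ recursion, and the two server-side contributions $+\eta_{t'}\Delta \w_{ite(t',k_{t'})}^{k_{t'}}$ and $+\tfrac{\beta(1-\beta^{\lceil t'/K\rceil-\lceil ite(t',k_{t'})/K\rceil})}{1-\beta}\eta_{t'}\Delta \u_{ite(t',k_{t'})}^{k_{t'}}$. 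The identity $\eta_{t'}=\hat{\eta}_{ite(t',k_{t'}),k_{t'}}$ and the sub-case analysis on whether $k_{t'}=k_{t'-1}$ both transfer verbatim from Lemma~\ref{local-ormo-lemma:ugap}, so the $\Delta \w$ terms reorganise into $-\sum_{k\neq k_{t'}}\hat{\eta}_{ite(t',k),k}\Delta \w_{ite(t',k)}^{k}$ without further work.

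The main obstacle will be the algebraic fusion of $\Delta \u$ coefficients in the divisible case. For each $k\neq k_{t'-1}$, the induction hypothesis contributes $-\tfrac{\beta(1-\beta^{\delta})}{1-\beta}\hat{\eta}_{ite(t'-1,k),k}\Delta \u_{ite(t'-1,k)}^{k}$ with $\delta=\lceil(t'-1)/K\rceil-\lceil ite(t'-1,k)/K\rceil$, while $-\beta(\hat{\u}_{t'}-\u_{t'})$ contributes $-\beta\cdot\beta^{\delta}\hat{\eta}_{ite(t'-1,k),k}\Delta \u_{ite(t'-1,k)}^{k}$ by (\ref{local-ormo-eq:ugap-appendix}). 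The identity $\tfrac{\beta(1-\beta^{\delta})}{1-\beta}+\beta^{\delta+1}=\tfrac{\beta(1-\beta^{\delta+1})}{1-\beta}$ collapses these into a single coefficient whose exponent $\delta+1$ matches the desired $\lceil t'/K\rceil-\lceil ite(t',k)/K\rceil$, since the ceiling advances by exactly one when $K\mid(t'-1)$ and $ite(t',k)=ite(t'-1,k)$ for $k\neq k_{t'-1}$. In the non-divisible case the cancellation is trivial because the $-\beta(\hat{\u}_{t'}-\u_{t'})$ term is absent and $\lceil t'/K\rceil=\lceil(t'-1)/K\rceil$, so the coefficient inherited from the induction hypothesis already has the correct exponent.
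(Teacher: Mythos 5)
Your proposal is correct and follows essentially the same route as the paper's own proof: induction on $t$ with the case split on $K\mid(t'-1)$, substitution of Lemma~\ref{local-ormo-lemma:ugap} for the $-\beta(\hat{\u}_{t'}-\u_{t'})$ term, the coefficient fusion $\tfrac{\beta(1-\beta^{\delta})}{1-\beta}+\beta^{\delta+1}=\tfrac{\beta(1-\beta^{\delta+1})}{1-\beta}$ matched to the ceiling advancing by one, and the final sub-case analysis on $k_{t'}=k_{t'-1}$. No gaps.
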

\begin{proof}
  For $t = 0$, $$\hat{\w}_{1} = \w_0 - \sum_{k \in [K]} {\hat{\eta}}_{0,k} {\Delta \w}_0^k, \w_1 = \w_0 -  {\hat{\eta}}_{0,k_0}{\Delta \w}_{0}^{k_0},$$ then we have
  \begin{align*}
    \hat{\w}_1 - \w_1 
    &= -\sum_{k \in [K], k \neq k_0} {\hat{\eta}}_{0,k} {\Delta \w}_0^k \\
    &= -\sum_{k \in [K], k \neq k_0} {\hat{\eta}}_{ite(0,k),k}{\Delta \w}_{ite(0,k)}^{k}-\sum_{k \in [K], k \neq k_0}\frac{\beta(1-\beta^{\lceil \frac{0}{K} \rceil - \lceil \frac{ite(0, k)}{K}\rceil})}{1-\beta}{\hat{\eta}}_{ite(0, k), k}{\Delta \u}_{ite(0, k)}^{k}
  \end{align*}

Given that (\ref{local-ormo-eq:wgapPenalty-appendix}) is valid at $t = t'-1$, we  demonstrate its validity for $t = t'$, where $t'$ is an arbitrary integer satisfying $t' \geq 1$. 

We begin our discussion by dividing it into two cases depending on whether $t'-1$ is divisible by $K$.

\begin{itemize}
    \item $K \mid (t'-1)$
  \begin{align*}
    \hat{\w}_{t'+1} &= \hat{\w}_{t'} - \beta \hat{\u}_{t'} - {\hat{\eta}}_{t', k_{t'-1}}{\Delta \w}_{t'}^{k_{t'-1}}, \\
    \w_{t'+1} &= \w_{t'}  - \beta \u_{t'} -{\hat{\eta}}_{ite(t', k_{t'}), k_{t'}}{\Delta \w}_{ite(t', k_{t'})}^{k_{t'}}- \frac{\beta (1-\beta^{\lceil \frac{t'}{K} \rceil - \lceil \frac{ite(t', k_{t'})}{K}\rceil})}{1-\beta}{\hat{\eta}}_{ite(t', k_{t'}), k_{t'}}{\Delta \u}_{ite(t', k_{t'})}^{k_{t'}}\\
    \hat{\w}_{t'+1} - \w_{t'+1} 
    &= \hat{\w}_{t'} - \w_{t'} - \beta \left(\hat{\u}_{t'} - \u_{t'}\right) - {\hat{\eta}}_{t', k_{t'-1}}{\Delta \w}_{t'}^{k_{t'-1}}+ {\hat{\eta}}_{ite(t', k_{t'}), k_{t'}}{\Delta \w}_{ite(t', k_{t'})}^{k_{t'}}\\
    &~~~~+\frac{\beta (1-\beta^{\lceil \frac{t'}{K} \rceil - \lceil \frac{ite(t', k_{t'})}{K}\rceil})}{1-\beta}{\hat{\eta}}_{ite(t', k_{t'}), k_{t'}} {\Delta \u}_{ite(t', k_{t'})}^{k_{t'}} \\
    &=-\sum_{k \in [K], k \neq k_{t'-1}}{\hat{\eta}}_{ite(t'-1, k), k} \left[{\Delta \w}_{ite(t'-1, k)}^{k} + \frac{\beta(1-\beta^{\lceil \frac{t'-1}{K} \rceil - \lceil \frac{ite(t'-1, k)}{K}\rceil})}{1-\beta}{\Delta \u}_{ite(t'-1, k)}^{k}\right]\\
    &~~~~- \sum_{k \in [K], k \neq k_{t'-1}}\beta^{\lceil \frac{t'-1}{K} \rceil - \lceil \frac{ite(t'-1, k)}{K}\rceil+1}{\hat{\eta}}_{ite(t'-1, k), k}{\Delta \u}_{ite(t'-1, k)}^{k}\\
    &~~~~+\frac{\beta (1-\beta^{\lceil \frac{t'}{K} \rceil - \lceil \frac{ite(t', k_{t'})}{K}\rceil})}{1-\beta}{\hat{\eta}}_{ite(t', k_{t'}), k_{t'}}{\Delta \u}_{ite(t', k_{t'})}^{k_{t'}}- {\hat{\eta}}_{t', k_{t'-1}}{\Delta \w}_{t'}^{k_{t'-1}}\\
    &~~~~+ {\hat{\eta}}_{ite(t', k_{t'}), k_{t'}}{\Delta \w}_{ite(t', k_{t'})}^{k_{t'}}\\
    &=-\sum_{k \in [K], k \neq k_{t'-1}}{\hat{\eta}}_{ite(t'-1, k), k} \left[{\Delta \w}_{ite(t'-1, k)}^{k} + \frac{\beta(1-\beta^{\lceil \frac{t'}{K} \rceil - \lceil \frac{ite(t'-1, k)}{K}\rceil})}{1-\beta}{\Delta \u}_{ite(t'-1, k)}^{k}\right]\\
    &~~~~+\frac{\beta (1-\beta^{\lceil \frac{t'}{K} \rceil - \lceil \frac{ite(t', k_{t'})}{K}\rceil})}{1-\beta}{\hat{\eta}}_{ite(t', k_{t'}), k_{t'}}{\Delta \u}_{ite(t', k_{t'})}^{k_{t'}}- {\hat{\eta}}_{t', k_{t'-1}}{\Delta \w}_{t'}^{k_{t'-1}}\\
    &~~~~+ {\hat{\eta}}_{ite(t', k_{t'}), k_{t'}}{\Delta \w}_{ite(t', k_{t'})}^{k_{t'}}
  \end{align*}    
      \item $K \nmid (t'-1)$
  \begin{align*}
    \hat{\w}_{t'+1} &= \hat{\w}_{t'} - {\hat{\eta}}_{t', k_{t'-1}}{\Delta \w}_{t'}^{k_{t'-1}}, \\
    \w_{t'+1} &= \w_{t'} -{\hat{\eta}}_{ite(t', k_{t'}), k_{t'}}{\Delta \w}_{ite(t', k_{t'})}^{k_{t'}}- \frac{\beta (1-\beta^{\lceil \frac{t'}{K} \rceil - \lceil \frac{ite(t', k_{t'})}{K}\rceil})}{1-\beta}{\hat{\eta}}_{ite(t', k_{t'}), k_{t'}}{\Delta \u}_{ite(t', k_{t'})}^{k_{t'}}\\
    \hat{\w}_{t'+1} - \w_{t'+1}
    &= \hat{\w}_{t'} - \w_{t'}  - {\hat{\eta}}_{t', k_{t'-1}}{\Delta \w}_{t'}^{k_{t'-1}}+ {\hat{\eta}}_{ite(t', k_{t'}), k_{t'}}{\Delta \w}_{ite(t', k_{t'})}^{k_{t'}}\\
    &~~~~+\frac{\beta (1-\beta^{\lceil \frac{t'}{K} \rceil - \lceil \frac{ite(t', k_{t'})}{K}\rceil})}{1-\beta}{\hat{\eta}}_{ite(t', k_{t'}), k_{t'}} {\Delta \u}_{ite(t', k_{t'})}^{k_{t'}}  \\
    &=-\sum_{k \in [K], k \neq k_{t'-1}}{\hat{\eta}}_{ite(t'-1, k), k} \left[{\Delta \w}_{ite(t'-1, k)}^{k} + \frac{\beta(1-\beta^{\lceil \frac{t'-1}{K} \rceil - \lceil \frac{ite(t'-1, k)}{K}\rceil})}{1-\beta}{\Delta \u}_{ite(t'-1, k)}^{k}\right]\\
    &~~~~+\frac{\beta (1-\beta^{\lceil \frac{t'}{K} \rceil - \lceil \frac{ite(t', k_{t'})}{K}\rceil})}{1-\beta}{\hat{\eta}}_{ite(t', k_{t'}), k_{t'}}{\Delta \u}_{ite(t', k_{t'})}^{k_{t'}}- {\hat{\eta}}_{t', k_{t'-1}}{\Delta \w}_{t'}^{k_{t'-1}}\\
    &~~~~+ {\hat{\eta}}_{ite(t', k_{t'}), k_{t'}}{\Delta \w}_{ite(t', k_{t'})}^{k_{t'}}\\
    &=-\sum_{k \in [K], k \neq k_{t'-1}}{\hat{\eta}}_{ite(t'-1, k), k} \left[{\Delta \w}_{ite(t'-1, k)}^{k} + \frac{\beta(1-\beta^{\lceil \frac{t'}{K} \rceil - \lceil \frac{ite(t'-1, k)}{K}\rceil})}{1-\beta}{\Delta \u}_{ite(t'-1, k)}^{k}\right]\\
    &~~~~+\frac{\beta (1-\beta^{\lceil \frac{t'}{K} \rceil - \lceil \frac{ite(t', k_{t'})}{K}\rceil})}{1-\beta}{\hat{\eta}}_{ite(t', k_{t'}), k_{t'}}{\Delta \u}_{ite(t', k_{t'})}^{k_{t'}}- {\hat{\eta}}_{t', k_{t'-1}}{\Delta \w}_{t'}^{k_{t'-1}}\\
    &~~~~+ {\hat{\eta}}_{ite(t', k_{t'}), k_{t'}}{\Delta \w}_{ite(t', k_{t'})}^{k_{t'}}
  \end{align*} 
\end{itemize}
  
  Since $ite(t', k) = ite(t'-1,k), \forall k \neq k_{t'-1}$, we can get the following equation for both cases above:
    \begin{align*}
    \hat{\w}_{t'+1} - \w_{t'+1} &=-\sum_{k \in [K], k \neq k_{t'-1}}{\hat{\eta}}_{ite(t', k), k} \left[{\Delta \w}_{ite(t', k)}^{k} + \frac{\beta(1-\beta^{\lceil \frac{t'}{K} \rceil - \lceil \frac{ite(t', k)}{K}\rceil})}{1-\beta}{\Delta \u}_{ite(t', k)}^{k}\right]\\
    &~~~~+\frac{\beta (1-\beta^{\lceil \frac{t'}{K} \rceil - \lceil \frac{ite(t', k_{t'})}{K}\rceil})}{1-\beta}{\hat{\eta}}_{ite(t', k_{t'}), k_{t'}}{\Delta \u}_{ite(t', k_{t'})}^{k_{t'}}- {\hat{\eta}}_{t', k_{t'-1}}{\Delta \w}_{t'}^{k_{t'-1}}\\
    &~~~~+ {\hat{\eta}}_{ite(t', k_{t'}), k_{t'}}{\Delta \w}_{ite(t', k_{t'})}^{k_{t'}}
  \end{align*} 
  
  Then, we divide our discussion into two cases, considering  whether $k_{t'} = k_{t'-1}$.

  \begin{itemize}
      \item $k_{t'}=k_{t'-1}$
 \begin{align*}
    \hat{\w}_{t'+1} - \w_{t'+1} 
    =-\sum_{k \in [K], k \neq k_{t'}}{\hat{\eta}}_{ite(t', k), k} \left[{\Delta \w}_{ite(t', k)}^{k}+ \frac{\beta(1-\beta^{\lceil \frac{t'}{K} \rceil - \lceil \frac{ite(t', k)}{K}\rceil})}{1-\beta}{\Delta \u}_{ite(t', k)}^{k}\right]
    \end{align*}

\item $k_{t'} \neq k_{t'-1}$
      \begin{align*}
    \hat{\w}_{t'+1} - \w_{t'+1} 
    &=-\sum_{k \in [K], k \neq k_{t'-1}, k \neq k_{t'}}{\hat{\eta}}_{ite(t', k), k} \left[{\Delta \w}_{ite(t', k)}^{k} + \frac{\beta(1-\beta^{\lceil \frac{t'}{K} \rceil - \lceil \frac{ite(t', k)}{K}\rceil})}{1-\beta}{\Delta \u}_{ite(t', k)}^{k}\right]\\
    &~~~~-{\hat{\eta}}_{ite(t', k_{t'}), k_{t'}}{\Delta \w}_{ite(t', k_{t'})}^{k_{t'}} -{\hat{\eta}}_{ite(t', k_{t'}), k_{t'}}\frac{\beta(1-\beta^{\lceil \frac{t'}{K} \rceil - \lceil \frac{ite(t', k_{t'})}{K}\rceil})}{1-\beta}{\Delta \u}_{ite(t', k_{t'})}^{k_{t'}}\\
    &~~~~+\frac{\beta (1-\beta^{\lceil \frac{t'}{K} \rceil - \lceil \frac{ite(t', k_{t'})}{K}\rceil})}{1-\beta}{\hat{\eta}}_{ite(t', k_{t'}), k_{t'}}{\Delta \u}_{ite(t', k_{t'})}^{k_{t'}}- {\hat{\eta}}_{t', k_{t'-1}}{\Delta \w}_{t'}^{k_{t'-1}}\\
    &~~~~+ {\hat{\eta}}_{ite(t', k_{t'}), k_{t'}}{\Delta \w}_{ite(t', k_{t'})}^{k_{t'}}\\
    &=-\sum_{k \in [K], k \neq k_{t'-1}, k \neq k_{t'}}{\hat{\eta}}_{ite(t', k), k} \left[{\Delta \w}_{ite(t', k)}^{k} + \frac{\beta(1-\beta^{\lceil \frac{t'}{K} \rceil - \lceil \frac{ite(t', k)}{K}\rceil})}{1-\beta}{\Delta \u}_{ite(t', k)}^{k}\right]\\
    &~~~~- {\hat{\eta}}_{t', k_{t'-1}}{\Delta \w}_{t'}^{k_{t'-1}}\\
    &=-\sum_{k \in [K], k \neq k_{t'}}{\hat{\eta}}_{ite(t', k), k} \left[{\Delta \w}_{ite(t', k)}^{k} + \frac{\beta(1-\beta^{\lceil \frac{t'}{K} \rceil - \lceil \frac{ite(t', k)}{K}\rceil})}{1-\beta}{\Delta \u}_{ite(t', k)}^{k}\right]
  \end{align*} 
  \end{itemize}
  We can conclude that   
  \begin{align*}
    \hat{\w}_{t+1} - \w_{t+1}
    = -\sum_{k \in [K], k \neq k_{t}}{\hat{\eta}}_{ite(t, k), k} \left[{\Delta \w}_{ite(t, k)}^k + \frac{\beta(1-\beta^{\lceil \frac{t}{K} \rceil - \lceil \frac{ite(t, k)}{K}\rceil})}{1-\beta}{\Delta \u}_{ite(t, k)}^k\right]
    \end{align*}
\end{proof}

For the sake of simplicity, we define $\Delta \h_{t}^{k_{t-1}}=\gamma\sum_{s=0}^{S-1}\g_{t,s}^{k_{t-1}}$  for $t \geq 1$, and
$\Delta \h_0^k =\gamma \sum_{s=0}^{S-1}\g_{0,s}^{k}$ for $k \in [K]$.
It's easy to verify that:
$$\frac{1}{1-\beta}\Delta \h_{t}^{k_{t-1}} = \frac{\beta}{1-\beta}{\Delta \u}_{t}^{k_{t-1}}+{\Delta \w}_t^{k_{t-1}},$$ for $t \geq 1$ and $
\frac{1}{1-\beta}\Delta \h_0^k = \frac{\beta}{1-\beta}{\Delta \u}_{0}^{k}+{\Delta \w}_0^k, $ for $ k \in [K].$

Then, we define another auxiliary variable $\hat{\y}_t$ corresponding to $\hat{\w}_t$ for each iteration $t \geq 1$:  
$$\hat{\y}_{t+1} = \hat{\y}_t - \frac{\hat{\eta}_{t,k_{t-1}}}{1-\beta}\Delta \h_{t}^{k_{t-1}},$$ and $\hat{\y}_1 = \w_0-\frac{1}{1-\beta}\sum_{k \in [K]} \hat{\eta}_{0,k}\Delta \h_0^k$ .

\begin{lemma} 
  For any $t \geq 1$, the difference between $\hat{\y}_t$ and $\hat{\w}_t$ can be expressed as follows:
  \begin{align*}
    \hat{\y}_t - \hat{\w}_t = -\frac{\beta}{1-\beta}\hat{\u}_t.
  \end{align*}
\end{lemma}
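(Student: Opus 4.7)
The plan is to proceed by induction on $t$, since the three variables $\hat{\y}_t$, $\hat{\w}_t$, and $\hat{\u}_t$ are all defined by one-step recursions that share the same driving sequence $\{{\Delta \u}_{t}^{k_{t-1}}, {\Delta \w}_{t}^{k_{t-1}}\}$ (equivalently $\Delta \h_t^{k_{t-1}}$). The key algebraic identity we will invoke throughout is
\begin{equation*}
\frac{1}{1-\beta}\Delta \h_{t}^{k_{t-1}} \;=\; \frac{\beta}{1-\beta}{\Delta \u}_{t}^{k_{t-1}} + {\Delta \w}_t^{k_{t-1}},
\end{equation*}
which rearranges to $\frac{1}{1-\beta}\Delta\h_t^{k_{t-1}} - {\Delta\w}_t^{k_{t-1}} = \frac{\beta}{1-\beta}{\Delta\u}_t^{k_{t-1}}$, and analogously for the index-$0$ quantities.

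For the base case $t=1$, I would plug in the given initial definitions of $\hat{\y}_1$ and $\hat{\w}_1$; the $\w_0$ terms cancel and the identity above (at index $0$) collapses the residual sum into $-\frac{\beta}{1-\beta}\sum_{k\in[K]}\hat{\eta}_{0,k}{\Delta\u}_0^k$, which equals $-\frac{\beta}{1-\beta}\hat{\u}_1$ by definition of $\hat{\u}_1$.

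For the inductive step, assume $\hat{\y}_t-\hat{\w}_t=-\frac{\beta}{1-\beta}\hat{\u}_t$ and split into the two cases governing the piecewise recursions for $\hat{\w}_{t+1}$ and $\hat{\u}_{t+1}$. In the case $K\mid(t-1)$, subtracting the $\hat{\w}_{t+1}$ recursion from the $\hat{\y}_{t+1}$ recursion gives
\begin{equation*}
\hat{\y}_{t+1}-\hat{\w}_{t+1} \;=\; (\hat{\y}_t-\hat{\w}_t) + \beta\hat{\u}_t - \hat{\eta}_{t,k_{t-1}}\!\left[\tfrac{1}{1-\beta}\Delta\h_t^{k_{t-1}} - {\Delta\w}_t^{k_{t-1}}\right],
\end{equation*}
and applying the inductive hypothesis together with the identity yields $-\frac{\beta}{1-\beta}\bigl[\beta\hat{\u}_t+\hat{\eta}_{t,k_{t-1}}{\Delta\u}_t^{k_{t-1}}\bigr] = -\frac{\beta}{1-\beta}\hat{\u}_{t+1}$. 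In the case $K\nmid(t-1)$, the same calculation goes through with the $\beta\hat{\u}_t$ term absent from both sides, giving $-\frac{\beta}{1-\beta}\bigl[\hat{\u}_t+\hat{\eta}_{t,k_{t-1}}{\Delta\u}_t^{k_{t-1}}\bigr]$, which matches $-\frac{\beta}{1-\beta}\hat{\u}_{t+1}$ under this branch of the $\hat{\u}$ recursion.

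There is no real obstacle here; the statement is essentially a bookkeeping identity. The only subtlety is making sure the piecewise split on $K\mid(t-1)$ is carried consistently through the three recursions simultaneously, so that the extra $\beta\hat{\u}_t$ that appears in $\hat{\w}_{t+1}$ exactly cancels against the $\frac{\beta}{1-\beta}\cdot\beta$ factor produced on the $\hat{\u}$ side. This alignment is forced by design, which is precisely why the auxiliary variables $\hat{\y}_t$ and $\hat{\w}_t$ were introduced with matching case splits.
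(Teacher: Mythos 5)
Your proposal is correct and is essentially identical to the paper's own proof: an induction on $t$ whose base case uses the identity $\frac{1}{1-\beta}\Delta\h_0^k = \frac{\beta}{1-\beta}\Delta\u_0^k + \Delta\w_0^k$, and whose inductive step splits on $K \mid (t-1)$ versus $K \nmid (t-1)$ so that the extra $\beta\hat{\u}_t$ term in the $\hat{\w}$ recursion matches the $\beta$ factor in the $\hat{\u}$ recursion. No discrepancies.
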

\begin{proof}
  For $t = 1$, we have that
  \begin{align*}
    \hat{\y}_1 - \hat{\w}_1 &= \left(\w_0 - \frac{1}{1-\beta}\sum_{k \in [K]}\hat{\eta}_{0,k} \Delta \h_0^k\right) - \left(\w_0 - \sum_{k \in [K]}\hat{\eta}_{0,k} \Delta \w_0^k\right)\\
    &=  - \frac{\beta}{1-\beta}\sum_{k \in [K]}\hat{\eta}_{0,k} \Delta \u_0^k=  - \frac{\beta}{1-\beta}\hat{\u}_{1}.
  \end{align*}
  
  Assuming that $\hat{\y}_t - \hat{\w}_t = -\frac{\beta}{1-\beta}\hat{\u}_t$ is true for $t=t'$ for some arbitrary integer $t'\geq 1$, we will prove that $\hat{\y}_t - \hat{\w}_t = -\frac{\beta}{1-\beta}\hat{\u}_{t}$ is true for $t=t'+1$. 
  
  We divide our discussion into two cases based on whether $t'-1$ is divisible by $K$.

  \begin{itemize}
      \item $K \mid (t'-1)$ 
  \begin{align*}
    \hat{\y}_{t'+ 1} &= \hat{\y}_{t'} - \frac{\hat{\eta}_{t', k_{t'-1}}}{1-\beta}\Delta \h_{t'}^{k_{t'-1}} \\
    \hat{\w}_{t'+ 1} &= \hat{\w}_{t'} - \beta \hat{\u}_{t'} - \hat{\eta}_{t', k_{t'-1}}{\Delta \w}_{t'}^{k_{t'-1}}\\
    \hat{\y}_{t'+ 1} - \hat{\w}_{t'+ 1} 
    &= \left(\hat{\y}_{t'} - \frac{\hat{\eta}_{t', k_{t'-1}}}{1-\beta}\Delta \h_{t'}^{k_{t'-1}}\right) - \left(\hat{\w}_{t'} - \beta \hat{\u}_{t'} - \hat{\eta}_{t', k_{t'-1}}{\Delta \w}_{t'}^{k_{t'-1}}\right) \\
    &= \hat{\y}_{t'} - \hat{\w}_{t'}  - \frac{\hat{\eta}_{t', k_{t'-1}}}{1-\beta}\Delta \h_{t'}^{k_{t'-1}}+ \beta \hat{\u}_{t'} + \hat{\eta}_{t', k_{t'-1}}{\Delta \w}_{t'}^{k_{t'-1}}\\
    &= -\frac{\beta^2}{1-\beta}\hat{\u}_{t'} - \frac{\beta}{1-\beta}\hat{\eta}_{t', k_{t'-1}}{\Delta \u}_{t'}^{k_{t'-1}} \\
    &= -\frac{\beta}{1-\beta}\hat{\u}_{t'+1}
\end{align*}
\item $K \nmid (t'-1)$ 
  \begin{align*}
    \hat{\y}_{t'+ 1} &= \hat{\y}_{t'} - \frac{\hat{\eta}_{t', k_{t'-1}}}{1-\beta}\Delta \h_{t'}^{k_{t'-1}} \\
    \hat{\w}_{t'+ 1} &= \hat{\w}_{t'} - \hat{\eta}_{t', k_{t'-1}}{\Delta \w}_{t'}^{k_{t'-1}} \\
    \hat{\y}_{t'+ 1} - \hat{\w}_{t'+ 1} 
    &= \left(\hat{\y}_{t'} - \frac{\hat{\eta}_{t', k_{t'-1}}}{1-\beta}\Delta \h_{t'}^{k_{t'-1}}\right) - \left(\hat{\w}_{t'} - \hat{\eta}_{t', k_{t'-1}}{\Delta \w}_{t'}^{k_{t'-1}}\right) \\
    &= \hat{\y}_{t'} - \hat{\w}_{t'}  - \frac{\hat{\eta}_{t', k_{t'-1}}}{1-\beta}\Delta \h_{t'}^{k_{t'-1}} + \hat{\eta}_{t', k_{t'-1}}{\Delta \w}_{t'}^{k_{t'-1}} \\
    &= -\frac{\beta}{1-\beta}\hat{\u}_{t'} - \frac{\beta}{1-\beta}\hat{\eta}_{t', k_{t'-1}}{\Delta \u}_{t'}^{k_{t'-1}} \\
    &= -\frac{\beta}{1-\beta}\hat{\u}_{t'+1}
\end{align*}
  \end{itemize}

  We can conclude that $\hat{\y}_t - \hat{\w}_t = -\frac{\beta}{1-\beta}\hat{\u}_{t}$ is true for any $t \geq 1$.           
  \end{proof}

    \begin{lemma}
    With Assumption~\ref{assu:gradient}, the gap between the local and global parameters can be bounded for $t \geq 1$:
  \begin{align*}
  \sum_{s=0}^{S-1}\mathbb{E}\left\|\tilde{\w}_{t,s}^{k_{t-1}} - \w_t\right\|^2 \leq \frac{\gamma^2S(S-1)\sigma^2}{(1-\beta)^2} +\frac{\gamma^2S(S-1)}{(1-\beta)^2}\sum_{s=0}^{S-1}\mathbb{E}\left\| \nabla F(\tilde{\w}_{t,s}^{k_{t-1}})\right\|^2
  \end{align*}
  \end{lemma}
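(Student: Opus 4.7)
The strategy is to unfold the local momentum recursion into a closed-form expression for $\tilde{\w}_{t,s}^{k_{t-1}}-\w_t$ as a weighted sum of the per-step stochastic gradients, and then bound the resulting expected squared norm by splitting "signal" (true gradients) from "noise" (variance). Since $\tilde{\w}_{t,0}^{k_{t-1}}=\w_t$ and $\tilde{\u}_{t,0}^{k_{t-1}}=\0$, unrolling $\tilde{\u}_{t,j+1}^{k_{t-1}}=\beta\tilde{\u}_{t,j}^{k_{t-1}}+\gamma\g_{t,j}^{k_{t-1}}$ and telescoping $\tilde{\w}_{t,s}^{k_{t-1}}-\w_t=-\sum_{j=0}^{s-1}\tilde{\u}_{t,j+1}^{k_{t-1}}$, then swapping the order of summation, I would arrive at
\begin{align*}
\tilde{\w}_{t,s}^{k_{t-1}}-\w_t=-\gamma\sum_{i=0}^{s-1}\frac{1-\beta^{s-i}}{1-\beta}\,\g_{t,i}^{k_{t-1}}.
\end{align*}

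\textbf{Signal/noise split.} Writing $\g_{t,i}^{k_{t-1}}=\nabla F(\tilde{\w}_{t,i}^{k_{t-1}})+\epsilon_{t,i}^{k_{t-1}}$ with $\EB[\epsilon_{t,i}^{k_{t-1}}\mid\text{past}]=\0$ and $\EB\|\epsilon_{t,i}^{k_{t-1}}\|^2\le\sigma^2$ (Assumption~\ref{assu:gradient}), I would apply $\|a+b\|^2\le 2\|a\|^2+2\|b\|^2$. For the signal part, Jensen's inequality with weights $\frac{1-\beta^{s-i}}{1-\beta}\le\frac{1}{1-\beta}$ gives
\begin{align*}
\EB\Bigl\|\gamma\sum_{i=0}^{s-1}\tfrac{1-\beta^{s-i}}{1-\beta}\nabla F(\tilde{\w}_{t,i}^{k_{t-1}})\Bigr\|^2\le\frac{\gamma^2 s}{(1-\beta)^2}\sum_{i=0}^{s-1}\EB\|\nabla F(\tilde{\w}_{t,i}^{k_{t-1}})\|^2.
\end{align*}
For the noise part, the $\{\epsilon_{t,i}^{k_{t-1}}\}_i$ form a martingale-difference sequence, so the cross terms vanish and
\begin{align*}
\EB\Bigl\|\gamma\sum_{i=0}^{s-1}\tfrac{1-\beta^{s-i}}{1-\beta}\,\epsilon_{t,i}^{k_{t-1}}\Bigr\|^2=\gamma^2\sum_{i=0}^{s-1}\tfrac{(1-\beta^{s-i})^2}{(1-\beta)^2}\EB\|\epsilon_{t,i}^{k_{t-1}}\|^2\le\frac{\gamma^2 s\,\sigma^2}{(1-\beta)^2}.
\end{align*}

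\textbf{Summing over $s$.} Summing for $s=0,\dots,S-1$, the noise contribution is $\frac{2\gamma^2\sigma^2}{(1-\beta)^2}\sum_{s=0}^{S-1}s=\frac{\gamma^2 S(S-1)\sigma^2}{(1-\beta)^2}$, which matches the first term of the claim. For the signal contribution, swapping the order of summation and using $\sum_{s=i+1}^{S-1}s\le\frac{S(S-1)}{2}$, I would obtain
\begin{align*}
\sum_{s=0}^{S-1}s\sum_{i=0}^{s-1}\EB\|\nabla F(\tilde{\w}_{t,i}^{k_{t-1}})\|^2\le\frac{S(S-1)}{2}\sum_{s=0}^{S-1}\EB\|\nabla F(\tilde{\w}_{t,s}^{k_{t-1}})\|^2,
\end{align*}
and together with the factor $2$ from the $\|a+b\|^2$ bound this yields the advertised $\frac{\gamma^2 S(S-1)}{(1-\beta)^2}$ coefficient on the gradient term.

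\textbf{Expected obstacle.} The subtlety lies in the noise analysis: later iterates $\tilde{\w}_{t,j}^{k_{t-1}}$ depend on earlier noise $\epsilon_{t,i}^{k_{t-1}}$ (for $i<j$), so the seemingly natural identity $\EB\|\sum_i a_i\g_i\|^2=\|\sum_i a_i\nabla F_i\|^2+\sum_i a_i^2\sigma^2$ fails. Applying $\|a+b\|^2\le 2\|a\|^2+2\|b\|^2$ first decouples the deterministic-looking signal from the martingale-difference noise, which is exactly what makes the argument go through without any bounded-gradient assumption. Beyond that, the bookkeeping is routine and the constants line up cleanly with the statement.
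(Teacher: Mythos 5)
Your proposal is correct and follows essentially the same route as the paper's own proof: the same closed-form unrolling $\tilde{\w}_{t,s}^{k_{t-1}}-\w_t=-\gamma\sum_{i=0}^{s-1}\frac{1-\beta^{s-i}}{1-\beta}\g_{t,i}^{k_{t-1}}$, the same $2\|a\|^2+2\|b\|^2$ signal/noise split, the same per-$s$ bounds, and the same summation over $s$ to reach the stated constants. The only difference is cosmetic: you make the martingale-difference cancellation in the noise term explicit, whereas the paper uses it implicitly.
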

  \begin{proof}
  For $s=0$, we have $\mathbb{E}\left\|\tilde{\w}_{t,s}^{k_{t-1}} - \w_t\right\|^2=0$.
  
  For $s \geq 1$, we have that
 \begin{align*}
  \mathbb{E}\left\|\tilde{\w}_{t,s}^{k_{t-1}} - \w_t\right\|^2  &= \mathbb{E}\left\| - \gamma\sum_{i=0}^{s-1}\frac{1-\beta^{s-i}}{1-\beta}\nabla f(\tilde{\w}_{t,i}^{k_{t-1}}; \xi)\right\|^2 \\
  & \leq 2\mathbb{E}\left\| \gamma\sum_{i=0}^{s-1}\frac{1-\beta^{s-i}}{1-\beta}\left[\nabla f(\tilde{\w}_{t,i}^{k_{t-1}}; \xi)-\nabla F(\tilde{\w}_{t,i}^{k_{t-1}})\right]\right\|^2 \\
  &~~~~+2\mathbb{E}\left\| \gamma\sum_{i=0}^{s-1}\frac{1-\beta^{s-i}}{1-\beta}\nabla F(\tilde{\w}_{t,i}^{k_{t-1}})\right\|^2\\
  & \leq \frac{2\gamma^2s\sigma^2}{(1-\beta)^2} +\frac{2\gamma^2s}{(1-\beta)^2}\sum_{i=0}^{s-1}\mathbb{E}\left\| \nabla F(\tilde{\w}_{t,i}^{k_{t-1}})\right\|^2
  \end{align*}
  Summing up the above equations from $s=0$ to $S-1$, we get that
  \begin{align*}
  \sum_{s=0}^{S-1}\mathbb{E}\left\|\tilde{\w}_{t,s}^{k_{t-1}} - \w_t\right\|^2 &\leq \frac{\gamma^2S(S-1)\sigma^2}{(1-\beta)^2} +\frac{\gamma^2S(S-1)}{(1-\beta)^2}\sum_{s=0}^{S-1}\mathbb{E}\left\| \nabla F(\tilde{\w}_{t,s}^{k_{t-1}})\right\|^2
  \end{align*}
  \end{proof}

    \begin{lemma} 
    For any $t \geq 1$, $\hat{\u}_{t}$ can be formulated as follows:    
    \begin{align*}
      \hat{\u}_t &= \beta^{\lfloor \frac{t+K-2}{K} \rfloor}\left(\sum_{k \in [K]} \hat{\eta}_{0,k}{\Delta \u}_{0}^{k}\right) + \sum_{q=1}^{\lfloor \frac{t+K-2}{K} \rfloor}\beta^{\lfloor \frac{t+K-2}{K} \rfloor-q}\left(\sum_{j=(q-1)K+1}^{\min\{qK,t-1\}}\hat{\eta}_{j, k_{j-1}}{\Delta \u}_{j}^{k_{j-1}}\right).
    \end{align*}
  \end{lemma}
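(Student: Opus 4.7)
The plan is to prove the claim by induction on $t$, directly unrolling the two-branch recursion that defines $\hat{\u}_{t+1}$. The structure of the target formula mirrors the structure of the recursion: the outer index $q$ corresponds to a "group" of $K$ consecutive iterations, the power $\beta^{\lfloor (t+K-2)/K\rfloor - q}$ records how many times the $\beta$-multiplication branch has been applied since group $q$ was opened, and the inner sum over $j$ records which elements of that group have already been accumulated into $\hat{\u}_t$.

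For the base case $t=1$, the exponent $\lfloor (1+K-2)/K\rfloor = \lfloor (K-1)/K\rfloor = 0$, the outer sum is empty, and the formula collapses to $\sum_{k\in[K]}\hat\eta_{0,k}\Delta\u_0^k$, which is exactly the stated initial value $\hat{\u}_1$. For the inductive step, assuming the formula at $t$, I would split into the same two cases that appear in the definition of $\hat{\u}_{t+1}$.

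In Case 1, $K \mid (t-1)$, write $t = qK+1$ so that $\lfloor (t+K-2)/K\rfloor = q$ and $\lfloor ((t+1)+K-2)/K\rfloor = q+1$. Multiplying the inductive formula by $\beta$ (per the branch $\hat{\u}_{t+1} = \beta\hat{\u}_t + \hat\eta_{t,k_{t-1}}\Delta\u_t^{k_{t-1}}$) bumps every $\beta$-exponent by one, and the freshly added term $\hat\eta_{t,k_{t-1}}\Delta\u_t^{k_{t-1}}$ becomes the $q'=q+1$ summand with weight $\beta^0$ and single-element inner range $j=(q+1-1)K+1=t$, matching $\min\{(q+1)K,t\}=t$. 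In Case 2, $K \nmid (t-1)$, write $t-1 = qK+r$ with $1\le r\le K-1$; both $\lfloor(t+K-2)/K\rfloor$ and $\lfloor((t+1)+K-2)/K\rfloor$ equal $q+1$, so the $\beta$-exponents are unchanged. The recursion $\hat{\u}_{t+1}=\hat{\u}_t + \hat\eta_{t,k_{t-1}}\Delta\u_t^{k_{t-1}}$ simply extends the innermost sum at $q'=q+1$ from upper index $t-1=qK+r$ to $t=qK+r+1$, which is still $\min\{(q+1)K,t\}$ because $r+1\le K$.

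I do not expect any real obstacle: the argument is bookkeeping of floor functions and of the upper index $\min\{q'K,t-1\}$ in the inner sum. The only care needed is at the boundary $r=K-1$, where Case 2 still applies at step $t$ but the transition $t\mapsto t+1$ produces $K\mid t$, so the following inductive step switches to Case 1 and correctly triggers the increment of the exponent from $q+1$ to $q+2$; checking that $\min\{(q+1)K,t\}=(q+1)K=t$ at that boundary closes the last potential gap. Combining the base case with both inductive cases yields the formula for all $t\ge 1$.
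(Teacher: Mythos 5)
Your induction is correct: the base case, the two branches matching $K \mid (t-1)$ versus $K \nmid (t-1)$, and the bookkeeping of $\lfloor\frac{t+K-2}{K}\rfloor$ and of the upper limit $\min\{qK,t-1\}$ all check out, including the boundary $r=K-1$. The paper simply asserts the formula is straightforward from the definition of $\hat{\u}_t$, so your argument is the same unrolling of the recursion, just written out in full.
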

    \begin{proof}
    It's straightforward to get this conclusion from the definition of the sequence $\hat{\u}_t$.
    \end{proof}

\begin{lemma} 
  With Assumption \ref{assu:gradient}, letting
\begin{equation} 
\eta_t = 
\begin{cases} 
\frac{1}{K} & \tau_t \leq 2K, \\ 
\frac{1}{\tau_t} & \tau_t > 2K, 
\end{cases} 
\end{equation} the difference between $\hat{\y}_t$ and $\hat{\w}_t$  can be bounded as follows:
  \begin{align*}
    \sum_{t=1}^{T-1} \mathbb{E}\left(\hat{\eta}_{t, k_{t-1}} \left\|\hat{\y}_t - \hat{\w}_t \right\|^2\right) 
    &\leq  \frac{2\gamma^2\beta^2\sigma^2}{(1-\beta)^4K}\left(\sum_{k \in [K]} \hat{\eta}_{0,k} +\sum_{t=1}^{T-1} \hat{\eta}_{t,k_{t-1}} \right) \\
    &~~~~+ \frac{2\gamma^2\beta^2S}{(1-\beta)^4}\left[\sum_{k \in [K]}\left( \hat{\eta}_{0,k} \sum_{s=0}^{S-1}\mathbb{E}\left\|\nabla F(\tilde{\w}_{0,s}^k)\right\|^2\right)+\sum_{t=1}^{T-1} \left(\hat{\eta}_{t,k_{t-1}}\sum_{s=0}^{S-1}\mathbb{E}\left\|\nabla F(\tilde{\w}_{t, s}^{k_{t-1}})\right\|^2\right)\right].
\end{align*}
  \end{lemma}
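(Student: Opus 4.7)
The plan is to reduce the problem to bounding $\|\hat{\u}_t\|^2$ via Lemma~\ref{lemma:ywgapPenalty}, which gives $\|\hat{\y}_t-\hat{\w}_t\|^2 = \frac{\beta^2}{(1-\beta)^2}\|\hat{\u}_t\|^2$, and then exploit the explicit formula $\hat{\u}_t = \sum_{q=0}^{Q_t} \beta^{Q_t-q} X_q$ (with $Q_t=\lfloor(t+K-2)/K\rfloor$) established in the appendix, where each $X_q$ is a sum of at most $K$ terms of the form $\hat{\eta}_j\Delta\u_j^{k_{j-1}}$. I then split each local momentum $\Delta\u_j^{k_{j-1}} = \gamma\sum_{s=0}^{S-1}\beta^{S-1-s}\g_{j,s}^{k_{j-1}}$ into a deterministic piece built from $\nabla F(\tilde{\w}_{j,s}^{k_{j-1}})$ and a stochastic piece built from $\epsilon_{j,s} = \g_{j,s}^{k_{j-1}}-\nabla F(\tilde{\w}_{j,s}^{k_{j-1}})$, so that $\hat{\u}_t = M^{(t)} + E^{(t)}$. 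Applying $\|\hat{\u}_t\|^2 \le 2\|M^{(t)}\|^2 + 2\|E^{(t)}\|^2$ then reduces everything to controlling these two pieces separately.

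For $M^{(t)}$ I would cascade three (weighted) Cauchy–Schwarz steps: first across the groups $q$ with weights $\beta^{Q_t-q}$, whose sum is at most $1/(1-\beta)$; then across the indices $j$ within each group using weights $\hat{\eta}_j$, whose within-group sum is at most $K\cdot(1/K)=1$; and finally $\|\gamma\sum_s\beta^{S-1-s}\nabla F(\tilde{\w}_{j,s}^{k_{j-1}})\|^2 \le \gamma^2 S\sum_s \|\nabla F(\tilde{\w}_{j,s}^{k_{j-1}})\|^2$, which is exactly where the factor $S$ enters. For $E^{(t)}$ the key observation is that the noises $\epsilon_{j,s}$ are independent across distinct global iterations $j$ (workers draw fresh random samples every round) and form a martingale-difference sequence across local steps $s$ within a single iteration, so all cross terms vanish and $\mathbb{E}\|E^{(t)}\|^2 = \gamma^2\sum_{j,s} c_{j,t}^2 \beta^{2(S-1-s)}\mathbb{E}\|\epsilon_{j,s}\|^2$ with $c_{j,t}=\hat{\eta}_j\beta^{Q_t-q(j)}$. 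Bounding $\mathbb{E}\|\epsilon_{j,s}\|^2\le\sigma^2$ and then applying $\hat{\eta}_j^2\le\hat{\eta}_j/K$ introduces exactly the $1/K$ factor demanded by the statement.

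The final step is to sum over $t\in\{1,\ldots,T-1\}$ weighted by $\hat{\eta}_{t,k_{t-1}}$, swap the order of summation, and apply the uniform estimate $\sum_{t:\,Q_t\ge q(j)} \hat{\eta}_{t,k_{t-1}}\beta^{Q_t-q(j)} \le 1/(1-\beta)$, which itself follows from $\sum_{t:\,Q_t=q'}\hat{\eta}_{t,k_{t-1}}\le 1$ for each $q'$. Combined with the $\beta^2/(1-\beta)^2$ prefactor, this collects all dependencies into the desired $\beta^2/(1-\beta)^4$, with any residual slack of the form $1/(1+\beta)\le 1$ absorbed harmlessly. The main obstacle I foresee is the asymmetric bookkeeping of Cauchy–Schwarz weights: producing the factor $S$ on the gradient term while producing $1/K$ on the noise term requires $\hat{\eta}_j$-weighted Cauchy for the deterministic part and a plain $\hat{\eta}_j^2\le\hat{\eta}_j/K$ bound for the stochastic part. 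Establishing the required independence of the noise across global iterations, together with the martingale property across local steps, is the key conceptual ingredient, but is routine once the proper filtration is spelled out.
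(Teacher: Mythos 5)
Your proposal follows essentially the same route as the paper's proof: reduce to $\|\hat{\u}_t\|^2$ via Lemma~\ref{lemma:ywgapPenalty}, expand $\hat{\u}_t$ by its explicit group-indexed formula, split each local momentum into a martingale noise part (bounded via independence and $\hat{\eta}^2 \leq \hat{\eta}/K$) and a deterministic gradient part (bounded via weighted Cauchy--Schwarz over groups, within groups, and over local steps, which is exactly the paper's $z_t$ normalization), then swap the order of summation against the geometric weights. The bookkeeping differs only cosmetically and your accounting reproduces the stated constants, so the argument is correct and matches the paper.
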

  \begin{proof}
      \begin{align*}
      \hat{\u}_t 
      &= \sum_{k \in [K]}\beta^{\lfloor \frac{t+K-2}{K} \rfloor} \hat{\eta}_{0,k}\left({\Delta \u}_{0}^{k}-\gamma \sum_{s=0}^{S-1}\beta^{S-1-s}\nabla F(\tilde{\w}_{0,s}^k)\right) \\
      &~~~~+ \sum_{q=1}^{\lfloor \frac{t+K-2}{K} \rfloor}\sum_{j=(q-1)K+1}^{\min\{qK,t-1\}}\beta^{\lfloor \frac{t+K-2}{K} \rfloor-q}\hat{\eta}_{j, k_{j-1}}\left({\Delta \u}_{j}^{k_{j-1}}  -\gamma \sum_{s=0}^{S-1}\beta^{S-1-s}\nabla F(\tilde{\w}_{j, s}^{k_{j-1}})\right)\\
      &~~~~+ \gamma\sum_{k \in [K]} \beta^{\lfloor \frac{t+K-2}{K} \rfloor}\hat{\eta}_{0,k}\sum_{s=0}^{S-1}\beta^{S-1-s}\nabla F(\tilde{\w}_{0,s}^k) + \gamma\sum_{q=1}^{\lfloor \frac{t+K-2}{K} \rfloor} \sum_{j=(q-1)K+1}^{\min\{qK,t-1\}}\beta^{\lfloor \frac{t+K-2}{K} \rfloor-q}\hat{\eta}_{j, k_{j-1}}\sum_{s=0}^{S-1}\beta^{S-1-s}\nabla F(\tilde{\w}_{j, s}^{k_{j-1}})
    \end{align*}

        \begin{align*}
      \mathbb{E}\left\|\hat{\u}_t \right\|^2&\leq 2\mathbb{E}\left\|\sum_{k \in [K]}\beta^{\lfloor \frac{t+K-2}{K} \rfloor} \hat{\eta}_{0,k}\left({\Delta \u}_{0}^{k}-\gamma \sum_{s=0}^{S-1}\beta^{S-1-s}\nabla F(\tilde{\w}_{0,s}^k)\right) \right.\\
      &~~~~\left.+ \sum_{q=1}^{\lfloor \frac{t+K-2}{K} \rfloor}\sum_{j=(q-1)K+1}^{\min\{qK,t-1\}}\beta^{\lfloor \frac{t+K-2}{K} \rfloor-q}\hat{\eta}_{j, k_{j-1}}\left({\Delta \u}_{j}^{k_{j-1}}  -\gamma \sum_{s=0}^{S-1}\beta^{S-1-s}\nabla F(\tilde{\w}_{j, s}^{k_{j-1}})\right)\right\|^2\\
      &~~~~+ 2\gamma^2\mathbb{E}\left\|\sum_{k \in [K]} \beta^{\lfloor \frac{t+K-2}{K} \rfloor}\hat{\eta}_{0,k}\sum_{s=0}^{S-1}\beta^{S-1-s}\nabla F(\tilde{\w}_{0,s}^k) \right. \\
      &~~~~\left. + \sum_{q=1}^{\lfloor \frac{t+K-2}{K} \rfloor} \sum_{j=(q-1)K+1}^{\min\{qK,t-1\}}\beta^{\lfloor \frac{t+K-2}{K} \rfloor-q}\hat{\eta}_{j, k_{j-1}}\sum_{s=0}^{S-1}\beta^{S-1-s}\nabla F(\tilde{\w}_{j, s}^{k_{j-1}})\right\|^2
    \end{align*}
    \begin{align*}
    &\mathbb{E}\left\|\sum_{k \in [K]}\beta^{\lfloor \frac{t+K-2}{K} \rfloor} \hat{\eta}_{0,k}\left({\Delta \u}_{0}^{k}-\gamma \sum_{s=0}^{S-1}\beta^{S-1-s}\nabla F(\tilde{\w}_{0,s}^k)\right) \right.\\
      &~~~~\left.+ \sum_{q=1}^{\lfloor \frac{t+K-2}{K} \rfloor}\sum_{j=(q-1)K+1}^{\min\{qK,t-1\}}\beta^{\lfloor \frac{t+K-2}{K} \rfloor-q}\hat{\eta}_{j, k_{j-1}}\left({\Delta \u}_{j}^{k_{j-1}}  -\gamma \sum_{s=0}^{S-1}\beta^{S-1-s}\nabla F(\tilde{\w}_{j, s}^{k_{j-1}})\right)\right\|^2\\
      &\leq \sum_{k \in [K]}\beta^{2\lfloor \frac{t+K-2}{K} \rfloor} \hat{\eta}_{0,k}^2\mathbb{E}\left\|{\Delta \u}_{0}^{k}-\gamma \sum_{s=0}^{S-1}\beta^{S-1-s}\nabla F(\tilde{\w}_{0,s}^k)\right\|^2 \\
      &~~~~+ \sum_{q=1}^{\lfloor \frac{t+K-2}{K} \rfloor}\sum_{j=(q-1)K+1}^{\min\{qK,t-1\}}\beta^{2\lfloor \frac{t+K-2}{K} \rfloor-2q}\hat{\eta}_{j, k_{j-1}}^2\mathbb{E}\left\|{\Delta \u}_{j}^{k_{j-1}}  -\gamma \sum_{s=0}^{S-1}\beta^{S-1-s}\nabla F(\tilde{\w}_{j, s}^{k_{j-1}})\right\|^2\\
      &\leq \frac{\gamma^2\sigma^2}{1-\beta}\left[\sum_{k \in [K]}\beta^{2\lfloor \frac{t+K-2}{K} \rfloor} \hat{\eta}_{0,k}^2 + \sum_{q=1}^{\lfloor \frac{t+K-2}{K} \rfloor}\sum_{j=(q-1)K+1}^{\min\{qK,t-1\}}\beta^{2\lfloor \frac{t+K-2}{K} \rfloor-2q}\hat{\eta}_{j, k_{j-1}}^2\right]\\
    \end{align*}
    
  Let $z_t = \sum_{k \in [K]}\beta^{\lfloor \frac{t+K-2}{K} \rfloor} + \sum_{q=1}^{\lfloor \frac{t+K-2}{K} \rfloor}\sum_{j=(q-1)K+1}^{\min\{qK,t-1\}}\beta^{\lfloor \frac{t+K-2}{K} \rfloor-q}$, then we have

      \begin{align*}
  &\mathbb{E}\left\|\sum_{k \in [K]} \beta^{\lfloor \frac{t+K-2}{K} \rfloor}\hat{\eta}_{0,k}\sum_{s=0}^{S-1}\beta^{S-1-s}\nabla F(\tilde{\w}_{0,s}^k) + \sum_{q=1}^{\lfloor \frac{t+K-2}{K} \rfloor} \sum_{j=(q-1)K+1}^{\min\{qK,t-1\}}\beta^{\lfloor \frac{t+K-2}{K} \rfloor-q}\hat{\eta}_{j, k_{j-1}}\sum_{s=0}^{S-1}\beta^{S-1-s}\nabla F(\tilde{\w}_{j, s}^{k_{j-1}})\right\|^2\\
  &=z_t^2\mathbb{E}\left\|\sum_{k \in [K]} \frac{\beta^{\lfloor \frac{t+K-2}{K} \rfloor}}{z_t}\hat{\eta}_{0,k}\sum_{s=0}^{S-1}\beta^{S-1-s}\nabla F(\tilde{\w}_{0,s}^k)+ \sum_{q=1}^{\lfloor \frac{t+K-2}{K} \rfloor} \sum_{j=(q-1)K+1}^{\min\{qK,t-1\}}\frac{\beta^{\lfloor \frac{t+K-2}{K} \rfloor-q}}{z_t}\hat{\eta}_{j, k_{j-1}}\sum_{s=0}^{S-1}\beta^{S-1-s}\nabla F(\tilde{\w}_{j, s}^{k_{j-1}})\right\|^2\\
  &\leq z_t\sum_{k \in [K]} \beta^{\lfloor \frac{t+K-2}{K} \rfloor}\hat{\eta}_{0,k}^2\mathbb{E}\left\|\sum_{s=0}^{S-1}\beta^{S-1-s}\nabla F(\tilde{\w}_{0,s}^k)\right\|^2 \\
  &~~~~+z_t \sum_{q=1}^{\lfloor \frac{t+K-2}{K} \rfloor} \sum_{j=(q-1)K+1}^{\min\{qK,t-1\}}\beta^{\lfloor \frac{t+K-2}{K} \rfloor-q}\hat{\eta}_{j, k_{j-1}}^2\mathbb{E}\left\|\sum_{s=0}^{S-1}\beta^{S-1-s}\nabla F(\tilde{\w}_{j, s}^{k_{j-1}})\right\|^2\\
  &\leq \frac{KS}{1-\beta}\left[\sum_{k \in [K]} \beta^{\lfloor \frac{t+K-2}{K} \rfloor}\hat{\eta}_{0,k}^2\sum_{s=0}^{S-1}\mathbb{E}\left\|\nabla F(\tilde{\w}_{0,s}^k)\right\|^2 +\sum_{q=1}^{\lfloor \frac{t+K-2}{K} \rfloor} \sum_{j=(q-1)K+1}^{\min\{qK,t-1\}}\beta^{\lfloor \frac{t+K-2}{K} \rfloor-q}\hat{\eta}_{j, k_{j-1}}^2\sum_{s=0}^{S-1}\mathbb{E}\left\|\nabla F(\tilde{\w}_{j, s}^{k_{j-1}})\right\|^2\right]\\
  \end{align*}

    \begin{align*}
      \mathbb{E}\left\|\hat{\u}_t \right\|^2 &\leq 
      \frac{2\gamma^2\sigma^2}{1-\beta}\left[\sum_{k \in [K]}\beta^{2\lfloor \frac{t+K-2}{K} \rfloor} \hat{\eta}_{0,k}^2 + \sum_{q=1}^{\lfloor \frac{t+K-2}{K} \rfloor}\sum_{j=(q-1)K+1}^{\min\{qK,t-1\}}\beta^{2\lfloor \frac{t+K-2}{K} \rfloor-2q}\hat{\eta}_{j, k_{j-1}}^2\right] \\
      &~~~~+\frac{2\gamma^2KS}{1-\beta}\sum_{k \in [K]} \beta^{\lfloor \frac{t+K-2}{K} \rfloor}\hat{\eta}_{0,k}^2\sum_{s=0}^{S-1}\mathbb{E}\left\|\nabla F(\tilde{\w}_{0,s}^k)\right\|^2 \\
      &~~~~+\frac{2\gamma^2KS}{1-\beta}\sum_{q=1}^{\lfloor \frac{t+K-2}{K} \rfloor} \sum_{j=(q-1)K+1}^{\min\{qK,t-1\}}\beta^{\lfloor \frac{t+K-2}{K} \rfloor-q}\hat{\eta}_{j, k_{j-1}}^2\sum_{s=0}^{S-1}\mathbb{E}\left\|\nabla F(\tilde{\w}_{j, s}^{k_{j-1}})\right\|^2 \\
    \sum_{t=1}^{T-1} \mathbb{E}\left(\hat{\eta}_{t, k_{t-1}} \left\|\hat{\y}_t - \hat{\w}_t \right\|^2\right) &\leq \frac{\beta^2}{{(1-\beta)}^2 K}\sum_{t=1}^{T-1}\mathbb{E}\left\|\hat{\u}_{t} \right\|^2 \\
    & \leq      \frac{2\gamma^2\sigma^2\beta^2}{(1-\beta)^3K}\sum_{t=1}^{T-1}\left[\sum_{k \in [K]}\beta^{2\lfloor \frac{t+K-2}{K} \rfloor} \hat{\eta}_{0,k}^2 + \sum_{q=1}^{\lfloor \frac{t+K-2}{K} \rfloor}\sum_{j=(q-1)K+1}^{\min\{qK,t-1\}}\beta^{2\lfloor \frac{t+K-2}{K} \rfloor-2q}\hat{\eta}_{j, k_{j-1}}^2\right] \\
    &~~~~+ \frac{2\gamma^2\beta^2 S}{(1-\beta)^3}\sum_{t=1}^{T-1}\left[\sum_{k \in [K]} \beta^{\lfloor \frac{t+K-2}{K} \rfloor}\hat{\eta}_{0,k}^2\sum_{s=0}^{S-1}\mathbb{E}\left\|\nabla F(\tilde{\w}_{0,s}^k)\right\|^2 \right] \\
    &~~~~+ \frac{2\gamma^2\beta^2 S}{(1-\beta)^3}\sum_{t=1}^{T-1}\left[\sum_{q=1}^{\lfloor \frac{t+K-2}{K} \rfloor} \sum_{j=(q-1)K+1}^{\min\{qK,t-1\}}\beta^{\lfloor \frac{t+K-2}{K} \rfloor-q}\hat{\eta}_{j, k_{j-1}}^2\sum_{s=0}^{S-1}\mathbb{E}\left\|\nabla F(\tilde{\w}_{j, s}^{k_{j-1}})\right\|^2\right] \\
    & \leq      \frac{2\gamma^2\beta^2\sigma^2}{(1-\beta)^4}\left(\sum_{k \in [K]} \hat{\eta}_{0,k}^2 +\sum_{t=1}^{T-1} \hat{\eta}_{t,k_{t-1}}^2 \right) \\
    &~~~~+ \frac{2\gamma^2\beta^2KS}{(1-\beta)^4}\left[\sum_{k \in [K]} \hat{\eta}_{0,k}^2 \sum_{s=0}^{S-1}\mathbb{E}\left\|\nabla F(\tilde{\w}_{0,s}^k)\right\|^2+\sum_{t=1}^{T-1} \hat{\eta}_{t,k_{t-1}}^2\sum_{s=0}^{S-1}\mathbb{E}\left\|\nabla F(\tilde{\w}_{t, s}^{k_{t-1}})\right\|^2\right] \\
    & \leq      \frac{2\gamma^2\beta^2\sigma^2}{(1-\beta)^4K}\left(\sum_{k \in [K]} \hat{\eta}_{0,k} +\sum_{t=1}^{T-1} \hat{\eta}_{t,k_{t-1}} \right) \\
    &~~~~+ \frac{2\gamma^2\beta^2S}{(1-\beta)^4}\left[\sum_{k \in [K]} \hat{\eta}_{0,k} \sum_{s=0}^{S-1}\mathbb{E}\left\|\nabla F(\tilde{\w}_{0,s}^k)\right\|^2+\sum_{t=1}^{T-1} \hat{\eta}_{t,k_{t-1}}\sum_{s=0}^{S-1}\mathbb{E}\left\|\nabla F(\tilde{\w}_{t, s}^{k_{t-1}})\right\|^2\right] 
\end{align*}
  \end{proof}
  \begin{lemma} 
  With Assumption~\ref{assu:gradient}, letting
\begin{equation}
\eta_t = 
\begin{cases} 
\frac{1}{K} & \tau_t \leq 2K, \\ 
\frac{1}{\tau_t} & \tau_t > 2K, 
\end{cases} 
\end{equation} the difference between $\hat{\w}_t$ and $\w_t$  can be bounded as follows:
  \begin{align*}
  \sum_{t=1}^{T-1} \mathbb{E}\left(\hat{\eta}_{t, k_{t-1}} \left\|\hat{\w}_t - \w_t \right\|^2\right)&\leq \frac{4\gamma^2S\sigma^2}{(1-\beta)^2K}\left(\sum_{k \in [K]} \hat{\eta}_{0,k} + \sum_{t=1}^{T-2} {\hat{\eta}}_{t,k_{t-1}}\right)\\
  &~~~~+\frac{4\gamma^2S}{(1-\beta)^2}\left[\sum_{k \in [K]} \left(\hat{\eta}_{0,k}\sum_{s=0}^{S-1}\mathbb{E}\left\|\nabla F(\tilde{\w}_{0,s}^k)\right\|^2\right) + \sum_{t=1}^{T-2} \left({\hat{\eta}}_{t,k_{t-1}}\sum_{s=0}^{S-1}\mathbb{E}\left\|\nabla F(\tilde{\w}_{t, s}^{k_{t-1}})\right\|^2\right)\right]
  \end{align*}
  \end{lemma}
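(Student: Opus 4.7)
The plan is to start from the closed-form expression for $\hat{\w}_{t}-\w_{t}$ given by Lemma~\ref{local-ormo-lemma:wgap} (applied at iteration $t-1$), unroll each summand as a weighted sum of the stochastic gradients produced during the corresponding local block, and then exploit independence across arrivals to separate noise from bias before carrying out the penalty-based reindexing already used in the preceding lemma for $\sum_t\hat{\eta}_{t,k_{t-1}}\|\hat{\y}_t-\hat{\w}_t\|^2$.

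Concretely, Lemma~\ref{local-ormo-lemma:wgap} writes $\hat{\w}_t-\w_t$ as a sum over $K-1$ past arrivals $(k,j=ite(t-1,k))$, each contributing $-\hat{\eta}_{j,k}[\Delta\w_j^k+c_{t,k}\Delta\u_j^k]$ with $c_{t,k}=\beta(1-\beta^{\lceil(t-1)/K\rceil-\lceil j/K\rceil})/(1-\beta)\in[0,\beta/(1-\beta)]$. Unrolling the worker-side MSGD recursion gives $\Delta\w_j^k+c_{t,k}\Delta\u_j^k=\gamma\sum_{s=0}^{S-1} d_{t,k,s}\,\g_{j,s}^k$ with coefficients $d_{t,k,s}=(1-\beta^{S-s+\lceil(t-1)/K\rceil-\lceil j/K\rceil})/(1-\beta)\in[0,1/(1-\beta)]$. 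I then split $\g_{j,s}^k=\nabla F(\tilde{\w}_{j,s}^k)+\xi_{j,s}^k$ and apply $\|a+b\|^2\le 2\|a\|^2+2\|b\|^2$ to separate the noise contribution from the deterministic gradient contribution to $\mathbb{E}\|\hat{\w}_t-\w_t\|^2$.

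For the noise part, stochastic gradients from different workers or different local blocks are mutually independent with zero conditional mean, and within a single block the cross-terms in $s$ vanish by the martingale property, so the noise contribution collapses term-by-term without the $(K-1)$ Cauchy--Schwarz penalty, giving a bound of order $\sum_{k\ne k_{t-1}}\hat{\eta}_{j,k}^2\,\gamma^2 S\sigma^2/(1-\beta)^2$. For the gradient part I apply Cauchy--Schwarz once over the $K-1$ workers and once over the $S$ local steps, producing $\frac{(K-1)\gamma^2 S}{(1-\beta)^2}\sum_{k\ne k_{t-1}}\hat{\eta}_{j,k}^2\sum_s\|\nabla F(\tilde{\w}_{j,s}^k)\|^2$. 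This asymmetric treatment is what ultimately produces the factor $1/K$ in front of $\sigma^2$ but no $1/K$ in front of the gradient norms in the claimed bound.

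Finally, I multiply by $\hat{\eta}_{t,k_{t-1}}$, sum over $t=1,\dots,T-1$, and swap the order of summation so that the outer sum runs over past arrivals $(j,k)$ rather than over $t$. For fixed $(j,k)$ the iterations $t$ with $ite(t-1,k)=j$ and $k\ne k_{t-1}$ form a contiguous interval of length $\tau_{j'}=j'-j$ with $j'=next(j,k)$; using $\hat{\eta}_{t,k_{t-1}}\le 1/K$ together with the two-regime schedule $\eta_t=\min(1/K,1/\tau_t)$, one checks that $\hat{\eta}_{j,k}^2\,\tau_{j'}\le 2\hat{\eta}_{j,k}$ in both regimes, and hence $\hat{\eta}_{j,k}^2\sum_t\hat{\eta}_{t,k_{t-1}}\le 2\hat{\eta}_{j,k}/K$. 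Enumerating the contributing pairs as $\{(0,k):k\in[K]\}\cup\{(j,k_{j-1}):1\le j\le T-2\}$ then reproduces the two sums on the right-hand side. The main obstacle is precisely this last step: without the $\min(1/K,1/\tau_t)$ cutoff, a single stale arrival with huge delay $\tau_{j'}$ would contribute to $\tau_{j'}$ consecutive terms $\hat{\w}_t-\w_t$ and blow up the bound; the two-regime rule is exactly what cancels $\tau_{j'}$ and keeps the right-hand side finite under arbitrary delays, as highlighted in the remark following Theorem~\ref{thm:OrLoMo}.
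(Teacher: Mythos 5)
Your proposal is correct and follows essentially the same route as the paper's proof: the same decomposition of $\hat{\w}_t-\w_t$ via Lemma~\ref{local-ormo-lemma:wgap} into weighted sums of stochastic gradients with coefficients bounded by $1/(1-\beta)$, the same asymmetric treatment of the noise term (independence/martingale cancellation, no Cauchy--Schwarz penalty) versus the gradient term (Cauchy--Schwarz over the $K-1$ workers and $S$ local steps, yielding the extra factor $K$), and the same reindexing over past arrivals $(j,k)$ combined with the key inequality $\hat{\eta}_{j,k}^2(next(j,k)-j)\le 2\hat{\eta}_{j,k}$ guaranteed by the two-regime learning-rate schedule. No gaps.
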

  \begin{proof}
  \begin{align*}
  \hat{\w}_t - \w_t &=-\sum_{k \in [K], k \neq k_{t-1}}{\hat{\eta}}_{ite(t-1, k), k} \left[{\Delta \w}_{ite(t-1, k)}^k + \frac{\beta(1-\beta^{\lceil \frac{t-1}{K} \rceil - \lceil \frac{ite(t-1, k)}{K}\rceil})}{1-\beta}{\Delta \u}_{ite(t-1, k)}^k\right] \\
  &=-\gamma \sum_{k \in [K], k \neq k_{t-1}}{\hat{\eta}}_{ite(t-1, k), k} \sum_{s=0}^{S-1}\frac{1-\beta^{S-s+\lceil \frac{t-1}{K} \rceil - \lceil \frac{ite(t-1, k)}{K}\rceil}}{1-\beta}\nabla f(\tilde{\w}_{ite(t-1, k),s}^k;\xi) 
  \end{align*}
    \begin{align*}
  &\mathbb{E}\left\|\hat{\w}_t - \w_t\right\|^2\\
  &\leq 2\gamma^2 \underbrace{\mathbb{E}\left\|\sum_{k \in [K], k \neq k_{t-1}}{\hat{\eta}}_{ite(t-1, k), k} \sum_{s=0}^{S-1}\frac{1-\beta^{S-s+\lceil \frac{t-1}{K} \rceil - \lceil \frac{ite(t-1, k)}{K}\rceil}}{1-\beta}\left[\nabla f(\tilde{\w}_{ite(t-1, k),s}^k;\xi)-\nabla F(\tilde{\w}_{ite(t-1, k),s}^k)\right] \right\|^2}_{\text{\textcircled{1}}} \\
  &~~~~+2\gamma^2 \underbrace{\mathbb{E}\left\| \sum_{k \in [K], k \neq k_{t-1}}{\hat{\eta}}_{ite(t-1, k), k} \sum_{s=0}^{S-1}\frac{1-\beta^{S-s+\lceil \frac{t-1}{K} \rceil - \lceil \frac{ite(t-1, k)}{K}\rceil}}{1-\beta}\nabla F(\tilde{\w}_{ite(t-1, k),s}^k) \right\|^2}_{\text{\textcircled{2}}} 
  \end{align*}
  \begin{align*}
  \text{\textcircled{1}}&\leq \frac{S}{(1-\beta)^2}\sum_{k \in [K],k\neq k_{t-1}}{\hat{\eta}}_{ite(t-1, k), k}^2\sigma^2 \\
  \text{\textcircled{2}}&\leq \frac{KS}{(1-\beta)^2}\sum_{k \in [K],k\neq k_{t-1}}{\hat{\eta}}_{ite(t-1, k), k}^2\sum_{s=0}^{S-1}\mathbb{E}\left\|\nabla F(\tilde{\w}_{ite(t-1, k),s}^k)\right\|^2 \\
  \mathbb{E}\left\|\hat{\w}_t - \w_t\right\|^2 &\leq \frac{2\gamma^2S}{(1-\beta)^2} \sum_{k \in [K],k\neq k_{t-1}}{\hat{\eta}}_{ite(t-1, k), k}^2\left[\sigma^2+K\sum_{s=0}^{S-1}\mathbb{E}\left\|\nabla F(\tilde{\w}_{ite(t-1, k),s}^k)\right\|^2\right]
  \end{align*}
    \begin{align*}
      &\sum_{t=1}^{T-1} \mathbb{E}\left(\hat{\eta}_{t, k_{t-1}} \left\|\hat{\w}_t - \w_t \right\|^2\right)  \\
      & = \frac{2\gamma^2S}{(1-\beta)^2K} \sum_{t=1}^{T-1} \sum_{k \in [K],k\neq k_{t-1}}{\hat{\eta}}_{ite(t-1, k), k}^2\left(\sigma^2+K\sum_{s=0}^{S-1}\mathbb{E}\left\|\nabla F(\tilde{\w}_{ite(t-1, k),s}^k)\right\|^2\right) \\
      & = \frac{2\gamma^2S}{(1-\beta)^2K} \sum_{t=0}^{T-2} \sum_{k \in [K],k\neq k_t}{\hat{\eta}}_{ite(t, k),k}^2\left(\sigma^2+K\sum_{s=0}^{S-1}\mathbb{E}\left\|\nabla F(\tilde{\w}_{ite(t,k), s}^k)\right\|^2\right) \\
      & = \frac{2\gamma^2S}{(1-\beta)^2K} \sum_{j=0}^{T-2}\sum_{t=0}^{T-2} \sum_{k \in [K],k\neq k_t}{\hat{\eta}}_{ite(t, k),k}^2\left[\sigma^2+K\sum_{s=0}^{S-1}\mathbb{E}\left\|\nabla F(\tilde{\w}_{ite(t,k), s}^k)\right\|^2\right]\mathds{1}\left(k \neq k_t\right)\mathds{1}\left(j = ite(t,k)\right) \\
      & = \frac{2\gamma^2S}{(1-\beta)^2K} \sum_{j=1}^{T-2}\sum_{t=0}^{T-2} \sum_{k \in [K],k\neq k_t}{\hat{\eta}}_{ite(t, k),k}^2\left(\sigma^2+K\sum_{s=0}^{S-1}\mathbb{E}\left\|\nabla F(\tilde{\w}_{ite(t,k), s}^k)\right\|^2\right)\mathds{1}\left(k \neq k_t\right)\mathds{1}\left(j = ite(t,k)\right) \\
      &~~~~+ \frac{2\gamma^2S}{(1-\beta)^2K}\sum_{t=0}^{T-2} \sum_{k \in [K],k\neq k_t}{\hat{\eta}}_{ite(t, k),k}^2\left(\sigma^2+K\sum_{s=0}^{S-1}\mathbb{E}\left\|\nabla F(\tilde{\w}_{ite(t,k), s}^k)\right\|^2\right)\mathds{1}\left(k \neq k_t\right)\mathds{1}\left(0 = ite(t,k)\right) 
      \end{align*}
      \begin{align*}
      & \leq \frac{2\gamma^2S}{(1-\beta)^2K} \sum_{k \in [K]} {\hat{\eta}}_{0,k}^2\left(\sigma^2+K\sum_{s=0}^{S-1}\mathbb{E}\left\|\nabla F(\tilde{\w}_{0,s}^k)\right\|^2\right)\left(next(0,k)-0\right) \\      
      &~~~~+ \frac{2\gamma^2S}{(1-\beta)^2K} \sum_{j=1}^{T-2} {\hat{\eta}}_{j, k_{j-1}}^2\left(\sigma^2+K\sum_{s=0}^{S-1}\mathbb{E}\left\|\nabla F(\tilde{\w}_{j, s}^{k_{j-1}})\right\|^2\right)\left(next(j,k_{j-1})-j\right) \\
      & \leq \frac{4\gamma^2S}{(1-\beta)^2K} \sum_{k \in [K]} {\hat{\eta}}_{0,k}\left(\sigma^2+K\sum_{s=0}^{S-1}\mathbb{E}\left\|\nabla F(\tilde{\w}_{0,s}^k)\right\|^2\right)+\frac{4\gamma^2S}{(1-\beta)^2K} \sum_{j=1}^{T-2} {\hat{\eta}}_{j, k_{j-1}}\left(\sigma^2+K\sum_{s=0}^{S-1}\mathbb{E}\left\|\nabla F(\tilde{\w}_{j, s}^{k_{j-1}})\right\|^2\right) \\
      & \leq \frac{4\gamma^2S\sigma^2}{(1-\beta)^2K}\left(\sum_{k \in [K]} \hat{\eta}_{0,k}+\sum_{t=1}^{T-2} {\hat{\eta}}_{t,k_{t-1}}\right)+\frac{4\gamma^2S}{(1-\beta)^2}\left(\sum_{k \in [K]} \hat{\eta}_{0,k}\sum_{s=0}^{S-1}\mathbb{E}\left\|\nabla F(\tilde{\w}_{0,s}^k)\right\|^2+\sum_{t=1}^{T-2} {\hat{\eta}}_{t,k_{t-1}}\sum_{s=0}^{S-1}\mathbb{E}\left\|\nabla F(\tilde{\w}_{t, s}^{k_{t-1}})\right\|^2 \right)
  \end{align*}
    \end{proof}

    \begin{theorem}
  With Assumptions~\ref{assu:gradient},~\ref{assu:smooth} and~\ref{assu:lowerbounded}, letting
\begin{equation}
\eta_t = 
\begin{cases} 
\frac{1}{K} & \tau_t \leq 2K, \\ 
\frac{1}{\tau_t} & \tau_t > 2K, 
\end{cases} 
\end{equation}
and $16LS\gamma \leq (1-\beta)^2$, OrLoMo has the following convergence rate:
\begin{align*}
\mathbb{E}\left\|\nabla F(\bar{\w}_T)\right\|^2 \leq \frac{4K(1-\beta)\left(F(\w_0)- F^*\right)}{\gamma ST}+\frac{4\gamma L\sigma^2}{K(1-\beta)^2}+\frac{\gamma^2 L^2(S-1)\sigma^2}{(1-\beta)^2}, 
  \end{align*}
where $ \mathbb{E}\left\|\nabla F(\bar{\w}_T)\right\|^2 = \frac{1}{\sum_{k \in [K]}\hat{\eta}_{0,k}+\sum_{t=1}^{T-1}\hat{\eta}_{t,k_{t-1}}}\left[\left(\sum_{k \in [K]} \hat{\eta}_{0,k}\right)\left\|\nabla F(\w_0)\right\|^2+\sum_{t=1}^{T-1}\left(\hat{\eta}_{t,k_{t-1}}\mathbb{E}\left\|\nabla F(\w_t)\right\|^2\right)\right]$.
\end{theorem}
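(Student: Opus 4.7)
The strategy is a virtual-iterate analysis built around the auxiliary sequence $\hat{\y}_t$, which by equation~(\ref{def:yhat}) satisfies the clean recursion $\hat{\y}_{t+1} - \hat{\y}_t = -\frac{\gamma \hat{\eta}_{t,k_{t-1}}}{1-\beta}\sum_{s=0}^{S-1}\g_{t,s}^{k_{t-1}}$. Crucially, $\hat{\y}_{t+1}$ is updated using worker $k_{t-1}$'s most recently computed gradients, so conditional on the state just before that worker starts its local run, the $\g_{t,s}^{k_{t-1}}$ are unbiased estimators of $\nabla F(\tilde{\w}_{t,s}^{k_{t-1}})$ with variance at most $\sigma^2$ (Assumption~\ref{assu:gradient}). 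This makes $\hat{\y}_t$ a suitable ``shadow iterate'' on which to apply the smoothness-based descent lemma, even though the true iterates $\w_t$ absorb delayed information out of order.

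First I would invoke Assumption~\ref{assu:smooth} on $F(\hat{\y}_{t+1})$, obtaining in expectation
\begin{align*}
\mathbb{E}[F(\hat{\y}_{t+1})] &\leq \mathbb{E}[F(\hat{\y}_t)] - \tfrac{\gamma \hat{\eta}_{t,k_{t-1}}}{1-\beta}\sum_{s=0}^{S-1}\mathbb{E}\langle \nabla F(\hat{\y}_t), \nabla F(\tilde{\w}_{t,s}^{k_{t-1}})\rangle \\
&\quad + \tfrac{L \gamma^2 \hat{\eta}_{t,k_{t-1}}^2}{2(1-\beta)^2}\mathbb{E}\left\|\sum_{s=0}^{S-1}\g_{t,s}^{k_{t-1}}\right\|^2.
\end{align*}
The inner product is rewritten through the identity $2\langle a,b\rangle = \|a\|^2 + \|b\|^2 - \|a-b\|^2$ to surface a negative $\|\nabla F(\w_t)\|^2$ descent term together with three controllable gaps: $\|\nabla F(\hat{\y}_t) - \nabla F(\w_t)\|^2$, $\|\nabla F(\w_t) - \nabla F(\tilde{\w}_{t,s}^{k_{t-1}})\|^2$, and the tail contributed by the stochastic noise. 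Each gradient gap converts to a parameter-space distance using $L$-smoothness, and the quadratic noise term splits (via $\|a\|^2 \leq 2\|a-\mathbb{E}a\|^2 + 2\|\mathbb{E}a\|^2$) into an $S\sigma^2$ variance piece and a $\sum_s\|\nabla F(\tilde{\w}_{t,s}^{k_{t-1}})\|^2$ piece.

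Second, each parameter-space distance is handled by the identities already available. The gap $\|\hat{\y}_t-\w_t\|^2$ splits as $2\|\hat{\y}_t-\hat{\w}_t\|^2 + 2\|\hat{\w}_t-\w_t\|^2$; Lemma~\ref{lemma:ywgapPenalty} rewrites the first summand as $\tfrac{\beta^2}{(1-\beta)^2}\|\hat{\u}_t\|^2$, while Lemmas~\ref{local-ormo-lemma:ugap}--\ref{local-ormo-lemma:wgap} expand $\hat{\u}_t$ and $\hat{\w}_t-\w_t$ as explicit sums of in-flight local updates indexed by workers other than $k_t$. The local drift $\sum_{s}\|\tilde{\w}_{t,s}^{k_{t-1}} - \w_t\|^2$ is bounded by the dedicated drift lemma. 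After summing the descent inequality over $t$, the weighted sums $\sum_t \hat{\eta}_{t,k_{t-1}}\|\hat{\u}_t\|^2$ and $\sum_t \hat{\eta}_{t,k_{t-1}}\|\hat{\w}_t-\w_t\|^2$ are controlled by exchanging orders of summation and invoking the penalty learning rate~(\ref{OrLoMo-penalty}): the choice $\eta_t = 1/\max\{K, \tau_t\}$ gives both $\hat{\eta}_{j,k} \leq 1/K$ and $\hat{\eta}_{j,k}\bigl(next(j,k) - j\bigr) \leq 2$, which is precisely what converts squared-learning-rate sums into linear ones, mirroring the trick of~\cite{DBLP:conf/nips/MishchenkoBEW22, shiordered}. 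After telescoping, the inequality has the schematic form
\begin{align*}
c_1 \sum_t \hat{\eta}_{t,k_{t-1}}\mathbb{E}\|\nabla F(\w_t)\|^2 \leq F(\w_0) - F^* + c_2 \sigma^2 \sum_t \hat{\eta}_{t,k_{t-1}} + c_3 L S \gamma \sum_t \hat{\eta}_{t,k_{t-1}}\mathbb{E}\|\nabla F(\w_t)\|^2,
\end{align*}
plus a residual $\gamma^2 L^2 (S{-}1)\sigma^2/(1-\beta)^2$ arising purely from the local drift. The condition $16LS\gamma \leq (1-\beta)^2$ is calibrated exactly to absorb $c_3$ into $c_1/2$, after which dividing by the weighted-sum normalizer $\sum_k \hat{\eta}_{0,k} + \sum_{t=1}^{T-1}\hat{\eta}_{t,k_{t-1}}$ (which is $\Theta(T/K)$) yields the advertised three-term rate.

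\textbf{Main obstacle.} The hard part is coefficient bookkeeping in the absorption step: each of the gap terms (shadow-to-true $\hat{\y}-\hat{\w}-\w$ chain, local-to-global drift, and noise quadratic) contributes an $L^2$ factor and either an $S$ or an $S^2$ factor (from Cauchy--Schwarz over the $S$ local gradients), and these must be combined with the penalty-rate conversions $\sum \hat{\eta}^2 \to \tfrac{1}{K}\sum\hat{\eta}$ without leaving any residual term not proportional to $\gamma$. A second subtlety is that the variance contribution splits into two qualitatively different pieces — a current-step noise part (giving the $\gamma L \sigma^2/(K(1-\beta)^2)$ term) and a local-drift part (giving the $\gamma^2 L^2(S-1)\sigma^2/(1-\beta)^2$ term) — and they must be tracked separately so that the $S{-}1$ factor (rather than $S$) appears in the final bound. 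Once all constants are accounted for, the condition $16LS\gamma \leq (1-\beta)^2$ emerges as the worst-case threshold across all absorbed inequalities.
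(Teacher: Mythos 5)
Your proposal is correct and follows essentially the same route as the paper's proof: a smoothness descent step on the shadow iterate $\hat{\y}_t$, decomposition of the inner product to isolate $-\|\nabla F(\w_t)\|^2$ against the gaps $\|\hat{\y}_t-\hat{\w}_t\|^2$ (via Lemma~\ref{lemma:ywgapPenalty} and the $\|\hat{\u}_t\|^2$ bound), $\|\hat{\w}_t-\w_t\|^2$ (via Lemma~\ref{local-ormo-lemma:wgap}), and the local drift, followed by the penalty-rate conversions $\hat{\eta}\leq 1/K$ and $\hat{\eta}_{j,k}(next(j,k)-j)\leq 2$ together with the counting argument that makes the normalizer $\Theta(T/K)$, and absorption under $16LS\gamma\leq(1-\beta)^2$. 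The only quibble is your shorthand $\eta_t = 1/\max\{K,\tau_t\}$, which is not literally the schedule in (\ref{OrLoMo-penalty}), though it satisfies the same two properties you actually use.
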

  \begin{proof}

\begin{align*}
  \mathbb{E}F(\hat{\y}_{t+1}) \leq \mathbb{E} F(\hat{\y}_t) - \frac{\hat{\eta}_{t,k_{t-1}}}{1-\beta}\mathbb{E}\langle \nabla F(\hat{\y}_t), \Delta \h_{t}^{k_{t-1}} \rangle + \frac{L\hat{\eta}_{t,k_{t-1}}^2}{2(1-\beta)^2}\mathbb{E}\left\|\Delta \h_{t}^{k_{t-1}}\right\|^2
  \end{align*}
  \begin{align*}
  \frac{L\hat{\eta}_{t,k_{t-1}}^2}{2(1-\beta)^2}\mathbb{E}\left\|\Delta \h_{t}^{k_{t-1}}\right\|^2 
  &= \frac{\gamma^2L\hat{\eta}_{t,k_{t-1}}^2}{2(1-\beta)^2} \mathbb{E} \left\|\sum_{s=0}^{S-1}\nabla f(\tilde{\w}_{t, s}^{k_{t-1}}; \xi)\right\|^2 \\
  &\leq \frac{\gamma^2L\hat{\eta}_{t,k_{t-1}}^2}{(1-\beta)^2} \mathbb{E} \left\| \sum_{s=0}^{S-1}\left(\nabla f(\tilde{\w}_{t, s}^{k_{t-1}}; \xi)-\nabla F(\tilde{\w}_{t, s}^{k_{t-1}})\right)\right\|^2 \\
  &~~~~+ \frac{\gamma^2L\hat{\eta}_{t,k_{t-1}}^2}{(1-\beta)^2}\mathbb{E} \left\|\sum_{s=0}^{S-1}\nabla F(\tilde{\w}_{t, s}^{k_{t-1}})\right\|^2\\
  &\leq \frac{\gamma^2LS\hat{\eta}_{t,k_{t-1}}^2\sigma^2}{(1-\beta)^2}  + \frac{\gamma^2L\hat{\eta}_{t,k_{t-1}}^2S}{(1-\beta)^2}\sum_{s=0}^{S-1}\mathbb{E} \left\|\nabla F(\tilde{\w}_{t, s}^{k_{t-1}})\right\|^2\\
  - \frac{\hat{\eta}_{t,k_{t-1}}}{1-\beta}\mathbb{E}\langle \nabla F(\hat{\y}_t), \Delta \h_{t}^{k_{t-1}} \rangle  
  &= - \frac{\hat{\eta}_{t,k_{t-1}}}{1-\beta}\mathbb{E}\langle \nabla F(\hat{\y}_t), \gamma\sum_{s=0}^{S-1}\nabla F(\tilde{\w}_{t, s}^{k_{t-1}}) \rangle \\
  & = \underbrace{- \frac{\hat{\eta}_{t,k_{t-1}}}{1-\beta}\sum_{s=0}^{S-1}\mathbb{E}\langle \nabla F(\hat{\w}_t), \gamma\nabla F(\tilde{\w}_{t, s}^{k_{t-1}}) \rangle}_{\text{\textcircled{1}}} \\
  &~~~~  \underbrace{- \frac{\hat{\eta}_{t,k_{t-1}}}{1-\beta}\sum_{s=0}^{S-1}\mathbb{E}\langle \nabla F(\hat{\y}_t)-\nabla F(\hat{\w}_t), \gamma\nabla F(\tilde{\w}_{t, s}^{k_{t-1}}) \rangle}_{\text{\textcircled{2}}} 
  \end{align*}
  \begin{align*}
\text{\textcircled{1}}& =-\frac{\hat{\eta}_{t,k_{t-1}}\gamma}{1-\beta}\sum_{s=0}^{S-1}\mathbb{E}\langle \nabla F(\w_t), \nabla F(\tilde{\w}_{t, s}^{k_{t-1}}) - \nabla F(\w_t)\rangle - \frac{\hat{\eta}_{t,k_{t-1}}\gamma S}{1-\beta}\mathbb{E}\left\|\nabla F(\w_t)\right\|^2\\
& \leq  \frac{\gamma S \hat{\eta}_{t,k_{t-1}}}{2(1-\beta)}\mathbb{E}\left\|\nabla F(\w_t)\right\|^2 + \frac{ \hat{\eta}_{t,k_{t-1}}\gamma}{2(1-\beta)} \sum_{s=0}^{S-1}\mathbb{E}\left\|\nabla F(\tilde{\w}_{t, s}^{k_{t-1}}) - \nabla F(\w_t) \right\|^2- \frac{ \gamma\hat{\eta}_{t,k_{t-1}}}{2(1-\beta)} \sum_{s=0}^{S-1}\mathbb{E}\left\|\nabla F(\tilde{\w}_{t, s}^{k_{t-1}}) \right\|^2 \\
&~~~~ - \frac{ \gamma S \hat{\eta}_{t,k_{t-1}}}{1-\beta}\mathbb{E}\left\|\nabla F(\w_t)\right\|^2\\
& \leq  - \frac{ \gamma S \hat{\eta}_{t,k_{t-1}}}{2(1-\beta)}\mathbb{E}\left\|\nabla F(\w_t)\right\|^2 + \frac{ \hat{\eta}_{t,k_{t-1}}\gamma}{2(1-\beta)} \sum_{s=0}^{S-1}\mathbb{E}\left\|\nabla F(\tilde{\w}_{t, s}^{k_{t-1}}) - \nabla F(\w_t) \right\|^2 - \frac{ \gamma\hat{\eta}_{t,k_{t-1}}}{2(1-\beta)} \sum_{s=0}^{S-1}\mathbb{E}\left\|\nabla F(\tilde{\w}_{t, s}^{k_{t-1}}) \right\|^2 \\
& \leq  - \frac{ \gamma S \hat{\eta}_{t,k_{t-1}}}{2(1-\beta)}\mathbb{E}\left\|\nabla F(\w_t)\right\|^2 + \frac{ \hat{\eta}_{t,k_{t-1}}\gamma L^2}{2(1-\beta)} \sum_{s=0}^{S-1}\mathbb{E}\left\|\tilde{\w}_{t, s}^{k_{t-1}} - \w_t \right\|^2 - \frac{ \gamma\hat{\eta}_{t,k_{t-1}}}{2(1-\beta)} \sum_{s=0}^{S-1}\mathbb{E}\left\|\nabla F(\tilde{\w}_{t, s}^{k_{t-1}}) \right\|^2 
\end{align*}
  \begin{align*}
  \text{\textcircled{2}} &\leq \frac{\hat{\eta}_{t,k_{t-1}}\gamma S}{1-\beta} \mathbb{E}\left\|\nabla F(\hat{\y}_t) -\nabla F(\w_t) \right\|^2 + \frac{\hat{\eta}_{t,k_{t-1}}\gamma}{4(1-\beta)}\sum_{s=0}^{S-1}\mathbb{E}\left\| \nabla F(\tilde{\w}_{t, s}^{k_{t-1}})\right\|^2 \\
  &\leq \frac{L^2 \hat{\eta}_{t,k_{t-1}}\gamma S}{1-\beta} \mathbb{E}\left\|\hat{\y}_t-\w_t \right\|^2 + \frac{\hat{\eta}_{t,k_{t-1}}\gamma}{4(1-\beta)}\sum_{s=0}^{S-1}\mathbb{E}\left\| \nabla F(\tilde{\w}_{t, s}^{k_{t-1}})\right\|^2
  \end{align*}

  \begin{align*}
  \mathbb{E}F(\hat{\y}_{t+1}) 
&\leq \mathbb{E} F(\hat{\y}_t) - \frac{ \gamma S \hat{\eta}_{t,k_{t-1}}}{2(1-\beta)}\mathbb{E}\left\|\nabla F(\w_t)\right\|^2 + \frac{\gamma L^2\hat{\eta}_{t,k_{t-1}}}{2(1-\beta)} \sum_{s=0}^{S-1}\mathbb{E}\left\|\tilde{\w}_{t, s}^{k_{t-1}} - \w_t \right\|^2 \\
  &~~~~- \frac{ \gamma\hat{\eta}_{t,k_{t-1}}}{2(1-\beta)} \sum_{s=0}^{S-1}\mathbb{E}\left\|\nabla F(\tilde{\w}_{t, s}^{k_{t-1}}) \right\|^2  + \frac{L^2 \gamma S\hat{\eta}_{t,k_{t-1}}}{1-\beta} \mathbb{E}\left\|\hat{\y}_t-\w_t \right\|^2 + \frac{\gamma\hat{\eta}_{t,k_{t-1}}}{4(1-\beta)}\sum_{s=0}^{S-1}\mathbb{E}\left\| \nabla F(\tilde{\w}_{t, s}^{k_{t-1}})\right\|^2 \\
  &~~~~+\frac{\gamma^2LS\hat{\eta}_{t,k_{t-1}}^2\sigma^2}{(1-\beta)^2}  + \frac{\gamma^2LS\hat{\eta}_{t,k_{t-1}}^2}{(1-\beta)^2}\sum_{s=0}^{S-1}\mathbb{E} \left\|\nabla F(\tilde{\w}_{t, s}^{k_{t-1}})\right\|^2 \\
  &\leq \mathbb{E} F(\hat{\y}_t) - \frac{ \gamma S \hat{\eta}_{t,k_{t-1}}}{2(1-\beta)}\mathbb{E}\left\|\nabla F(\w_t)\right\|^2 + \frac{ \gamma L^2\hat{\eta}_{t,k_{t-1}}}{2(1-\beta)} \sum_{s=0}^{S-1}\mathbb{E}\left\|\tilde{\w}_{t, s}^{k_{t-1}} - \w_t \right\|^2 +\frac{\gamma^2LS\hat{\eta}_{t,k_{t-1}}^2\sigma^2}{(1-\beta)^2}\\
  &~~~~+ \frac{L^2 \gamma S\hat{\eta}_{t,k_{t-1}}}{1-\beta} \mathbb{E}\left\|\hat{\y}_t-\w_t \right\|^2 +\left[\frac{\gamma^2LS\hat{\eta}_{t,k_{t-1}}^2}{(1-\beta)^2}-\frac{\gamma\hat{\eta}_{t,k_{t-1}}}{4(1-\beta)}\right]\sum_{s=0}^{S-1}\mathbb{E}\left\| \nabla F(\tilde{\w}_{t, s}^{k_{t-1}})\right\|^2  
  \end{align*}
Letting $S \leq \frac{1-\beta}{8\gamma L}$, then we have that
  \begin{align*} 
  \mathbb{E}F(\hat{\y}_{t+1}) 
  &\leq \mathbb{E} F(\hat{\y}_t) - \frac{ \gamma S \hat{\eta}_{t,k_{t-1}}}{2(1-\beta)}\mathbb{E}\left\|\nabla F(\w_t)\right\|^2 + \frac{ \gamma L^2\hat{\eta}_{t,k_{t-1}}}{2(1-\beta)} \sum_{s=0}^{S-1}\mathbb{E}\left\|\tilde{\w}_{t, s}^{k_{t-1}} - \w_t \right\|^2 +\frac{\gamma^2LS\hat{\eta}_{t,k_{t-1}}^2\sigma^2}{(1-\beta)^2}\\
  &~~~~+ \frac{2L^2 \gamma S\hat{\eta}_{t,k_{t-1}}}{1-\beta} \mathbb{E}\left\|\hat{\y}_t-\hat{\w}_t \right\|^2+ \frac{2L^2 \gamma S\hat{\eta}_{t,k_{t-1}}}{1-\beta} \mathbb{E}\left\|\hat{\w}_t-\w_t \right\|^2 -\frac{\gamma\hat{\eta}_{t,k_{t-1}}}{8(1-\beta)}\sum_{s=0}^{S-1}\mathbb{E}\left\| \nabla F(\tilde{\w}_{t, s}^{k_{t-1}})\right\|^2  
  \end{align*}
    Summing up the above equation from $t=1$ to $T-1$, we can get that
    \begin{equation} \label{eq:descent1}
    \begin{aligned}
  \mathbb{E}F(\hat{\y}_{T}) 
  &\leq \mathbb{E} F(\hat{\y}_1) - \frac{\gamma S }{2(1-\beta)}\sum_{t=1}^{T-1}\left(\hat{\eta}_{t,k_{t-1}}\mathbb{E}\left\|\nabla F(\w_t)\right\|^2\right) + \frac{\gamma L^2}{2(1-\beta)} \sum_{t=1}^{T-1}\left(\hat{\eta}_{t,k_{t-1}}\sum_{s=0}^{S-1}\mathbb{E}\left\|\tilde{\w}_{t, s}^{k_{t-1}} - \w_t \right\|^2 \right)\\
  &~~~~+\frac{\gamma^2LS\sigma^2}{(1-\beta)^2}\sum_{t=1}^{T-1}\hat{\eta}_{t,k_{t-1}}^2 -\frac{\gamma}{8(1-\beta)}\sum_{t=1}^{T-1}\left(\hat{\eta}_{t,k_{t-1}}\sum_{s=0}^{S-1}\mathbb{E}\left\| \nabla F(\tilde{\w}_{t, s}^{k_{t-1}})\right\|^2  \right)\\
  &~~~~+ \frac{2L^2 \gamma S}{1-\beta} \sum_{t=1}^{T-1}\left(\hat{\eta}_{t,k_{t-1}}\mathbb{E}\left\|\hat{\y}_t-\hat{\w}_t \right\|^2\right)+ \frac{2L^2 \gamma S}{1-\beta} \sum_{t=1}^{T-1}\left( \hat{\eta}_{t,k_{t-1}}\mathbb{E}\left\|\hat{\w}_t-\w_t \right\|^2\right)
    \end{aligned}
    \end{equation}

    \begin{align*}
    \mathbb{E} F(\hat{\y}_1) &\leq  F(\w_0) - \frac{1}{1-\beta}\mathbb{E}\langle\nabla F(\w_0), \sum_{k \in [K]} \hat{\eta}_{0,k}\Delta \h_0^k\rangle + \frac{L}{2{(1-\beta)}^2} \mathbb{E} \left\|\sum_{k \in [K]} \hat{\eta}_{0,k}\Delta \h_0^k\right\|^2 
  \end{align*}
    \begin{align*}
  \frac{L}{2(1-\beta)^2}\mathbb{E} \left\|\sum_{k \in [K]} \hat{\eta}_{0,k}\Delta \h_0^k\right\|^2
  &= \frac{\gamma^2L}{2(1-\beta)^2} \mathbb{E} \left\|\sum_{k \in [K]} \hat{\eta}_{0,k}\sum_{s=0}^{S-1}\nabla f(\tilde{\w}_{0,s}^k; \xi)\right\|^2 \\
  &\leq \frac{\gamma^2L}{(1-\beta)^2} \mathbb{E} \left\|\sum_{k \in [K]} \hat{\eta}_{0,k}\sum_{s=0}^{S-1}\left(\nabla f(\tilde{\w}_{0,s}^k; \xi)-\nabla F(\tilde{\w}_{0,s}^k)\right)\right\|^2 \\
  &~~~~+\frac{\gamma^2L}{(1-\beta)^2} \mathbb{E}\left\|\sum_{k \in [K]} \hat{\eta}_{0,k}\sum_{s=0}^{S-1}\nabla F(\tilde{\w}_{0,s}^k)\right\|^2  \\
  &\leq \frac{\gamma^2LS\sigma^2}{(1-\beta)^2}\sum_{k \in [K]}\hat{\eta}_{0,k}^2  + 
   \frac{\gamma^2LKS}{(1-\beta)^2} \sum_{k \in [K]} \left(\hat{\eta}_{0,k}^2\sum_{s=0}^{S-1}\mathbb{E}\left\|\nabla F(\tilde{\w}_{0,s}^k)\right\|^2 \right)\\
  &\leq \frac{\gamma^2LS\sigma^2}{(1-\beta)^2}\sum_{k \in [K]}\hat{\eta}_{0,k}^2  + 
   \frac{\gamma^2LS}{(1-\beta)^2} \sum_{k \in [K]} \left(\hat{\eta}_{0,k}\sum_{s=0}^{S-1}\mathbb{E}\left\|\nabla F(\tilde{\w}_{0,s}^k)\right\|^2 \right)
  \end{align*}
  \begin{align*}
  - \frac{1}{1-\beta}\mathbb{E}\langle\nabla F(\w_0), \sum_{k \in [K]} \hat{\eta}_{0,k}\Delta \h_0^k\rangle 
  &=- \frac{\gamma}{1-\beta}\sum_{k \in [K]} \hat{\eta}_{0,k}\sum_{s=0}^{S-1}\mathbb{E}\langle\nabla F(\w_0), \nabla F(\tilde{\w}_{0,s}^k)\rangle  \\
  &=- \frac{\gamma}{1-\beta}\sum_{k \in [K]} \hat{\eta}_{0,k}\sum_{s=0}^{S-1}\mathbb{E}\langle\nabla F(\w_0), \nabla F(\tilde{\w}_{0,s}^k)-\nabla F(\w_0)\rangle \\
  &~~~~- \frac{\gamma S}{1-\beta}\sum_{k \in [K]} \hat{\eta}_{0,k}\left\|\nabla F(\w_0) \right\|^2\\
  &= -\frac{\gamma S}{2(1-\beta)}\sum_{k \in [K]} \hat{\eta}_{0,k}\left\|\nabla F(\w_0)\right\|^2-\frac{\gamma}{2(1-\beta)}\sum_{k \in [K]} \hat{\eta}_{0,k}\sum_{s=0}^{S-1}\mathbb{E}\left\|\nabla F(\tilde{\w}_{0,s}^k)\right\|^2 \\
  &~~~~+\frac{\gamma L^2}{2(1-\beta)}\sum_{k \in [K]} \hat{\eta}_{0,k}\sum_{s=0}^{S-1}\mathbb{E}\left\|\tilde{\w}_{0,s}^k-\w_0\right\|^2
  \end{align*}
  
    \begin{align*}
    \mathbb{E} F(\hat{\y}_1) &\leq  F(\w_0)  -\frac{\gamma S}{2(1-\beta)}\sum_{k \in [K]} \hat{\eta}_{0,k}\left\|\nabla F(\w_0)\right\|^2+\left[\frac{\gamma^2LS}{(1-\beta)^2}-\frac{\gamma}{2(1-\beta)}\right]\sum_{k \in [K]} \hat{\eta}_{0,k}\sum_{s=0}^{S-1}\mathbb{E}\left\|\nabla F(\tilde{\w}_{0,s}^k)\right\|^2 \\
  &~~~~+\frac{\gamma L^2}{2(1-\beta)}\sum_{k \in [K]} \hat{\eta}_{0,k}\sum_{s=0}^{S-1}\mathbb{E}\left\|\tilde{\w}_{0,s}^k-\w_0\right\|^2 +\frac{\gamma^2LS\sigma^2}{(1-\beta)^2}\sum_{k \in [K]}\hat{\eta}_{0,k}^2 
  \end{align*}
  Letting $8\gamma L S\leq 1-\beta$, then we have that
  \begin{equation}\label{eq:descent2}
  \begin{aligned}
  \mathbb{E} F(\hat{\y}_1) &\leq  F(\w_0)  -\frac{\gamma S}{2(1-\beta)}\sum_{k \in [K]} \hat{\eta}_{0,k}\left\|\nabla F(\w_0)\right\|^2-\frac{\gamma}{8(1-\beta)}\sum_{k \in [K]} \hat{\eta}_{0,k}\sum_{s=0}^{S-1}\mathbb{E}\left\|\nabla F(\tilde{\w}_{0,s}^k)\right\|^2 \\
  &~~~~+\frac{\gamma L^2}{2(1-\beta)}\sum_{k \in [K]} \hat{\eta}_{0,k}\sum_{s=0}^{S-1}\mathbb{E}\left\|\tilde{\w}_{0,s}^k-\w_0\right\|^2 +\frac{\gamma^2LS\sigma^2}{(1-\beta)^2}\sum_{k \in [K]}\hat{\eta}_{0,k}^2
  \end{aligned}
  \end{equation}
Summing up (\ref{eq:descent1}) and (\ref{eq:descent2}), we can get that 
   \begin{align*}
  \mathbb{E}F(\hat{\y}_{T}) 
  &\leq \mathbb{E} F(\w_0) - \frac{\gamma S }{2(1-\beta)}\left[\sum_{k \in [K]} \hat{\eta}_{0,k}\left\|\nabla F(\w_0)\right\|^2+\sum_{t=1}^{T-1}\left(\hat{\eta}_{t,k_{t-1}}\mathbb{E}\left\|\nabla F(\w_t)\right\|^2\right) \right]\\
  &~~~~+ \frac{\gamma L^2}{2(1-\beta)} \left[\sum_{k \in [K]} \hat{\eta}_{0,k}\sum_{s=0}^{S-1}\mathbb{E}\left\|\tilde{\w}_{0,s}^k-\w_0\right\|^2+\sum_{t=1}^{T-1}\left(\hat{\eta}_{t,k_{t-1}}\sum_{s=0}^{S-1}\mathbb{E}\left\|\tilde{\w}_{t, s}^{k_{t-1}} - \w_t \right\|^2 \right)\right]\\
  &~~~~+\frac{\gamma^2LS\sigma^2}{(1-\beta)^2}\left(\sum_{k \in [K]}\hat{\eta}_{0,k}^2+\sum_{t=1}^{T-1}\hat{\eta}_{t,k_{t-1}}^2\right) \\
  &~~~~-\frac{\gamma}{8(1-\beta)}\left[\sum_{k \in [K]} \hat{\eta}_{0,k}\sum_{s=0}^{S-1}\mathbb{E}\left\|\nabla F(\tilde{\w}_{0,s}^k)\right\|^2+\sum_{t=1}^{T-1}\left(\hat{\eta}_{t,k_{t-1}}\sum_{s=0}^{S-1}\mathbb{E}\left\| \nabla F(\tilde{\w}_{t, s}^{k_{t-1}})\right\|^2  \right)\right]\\
  &~~~~+ \frac{2L^2 \gamma S}{1-\beta} \sum_{t=1}^{T-1}\left(\hat{\eta}_{t,k_{t-1}}\mathbb{E}\left\|\hat{\y}_t-\hat{\w}_t \right\|^2\right)+ \frac{2L^2 \gamma S}{1-\beta} \sum_{t=1}^{T-1}\left( \hat{\eta}_{t,k_{t-1}}\mathbb{E}\left\|\hat{\w}_t-\w_t \right\|^2\right) \\
  &\leq \mathbb{E} F(\w_0) - \frac{\gamma S }{2(1-\beta)}\left[\sum_{k \in [K]} \hat{\eta}_{0,k}\left\|\nabla F(\w_0)\right\|^2+\sum_{t=1}^{T-1}\left(\hat{\eta}_{t,k_{t-1}}\mathbb{E}\left\|\nabla F(\w_t)\right\|^2\right) \right]\\
  &~~~~+ \frac{\gamma^3S(S-1) L^2\sigma^2}{2(1-\beta)^3} \left(\sum_{k \in [K]} \hat{\eta}_{0,k}+\sum_{t=1}^{T-1}\hat{\eta}_{t,k_{t-1}}\right)+\frac{\gamma^2LS\sigma^2}{(1-\beta)^2}\left(\sum_{k \in [K]}\hat{\eta}_{0,k}^2+\sum_{t=1}^{T-1}\hat{\eta}_{t,k_{t-1}}^2\right)\\
  &~~~~+ \frac{\gamma^3S(S-1) L^2}{2(1-\beta)^3} \left[\sum_{k \in [K]} \left(\hat{\eta}_{0,k}\sum_{s=0}^{S-1}\mathbb{E}\left\| \nabla F(\tilde{\w}_{0,s}^k)\right\|^2\right)+\sum_{t=1}^{T-1}\left(\hat{\eta}_{t,k_{t-1}}\sum_{s=0}^{S-1}\mathbb{E}\left\| \nabla F(\tilde{\w}_{t,s}^{k_{t-1}})\right\|^2 \right)\right]\\
  &~~~~-\frac{\gamma}{8(1-\beta)}\left[\sum_{k \in [K]} \hat{\eta}_{0,k}\sum_{s=0}^{S-1}\mathbb{E}\left\|\nabla F(\tilde{\w}_{0,s}^k)\right\|^2+\sum_{t=1}^{T-1}\left(\hat{\eta}_{t,k_{t-1}}\sum_{s=0}^{S-1}\mathbb{E}\left\| \nabla F(\tilde{\w}_{t, s}^{k_{t-1}})\right\|^2  \right)\right]\\
  &~~~~+ \frac{4L^2 \gamma^3 S \beta^2\sigma^2}{(1-\beta)^5K} \left(\sum_{k \in [K]} \hat{\eta}_{0,k} +\sum_{t=1}^{T-1} \hat{\eta}_{t,k_{t-1}} \right)+ \frac{8L^2 \gamma^3 S^2 \sigma^2}{(1-\beta)^3K}\left(\sum_{k \in [K]} \hat{\eta}_{0,k}+\sum_{t=1}^{T-2} {\hat{\eta}}_{t,k_{t-1}}\right)\\
    &~~~~+ \frac{4L^2 \gamma^3 \beta^2 S^2}{(1-\beta)^5}\left[\sum_{k \in [K]} \left(\hat{\eta}_{0,k}\sum_{s=0}^{S-1}\mathbb{E}\left\| \nabla F(\tilde{\w}_{0,s}^k)\right\|^2\right)+\sum_{t=1}^{T-1}\left(\hat{\eta}_{t,k_{t-1}}\sum_{s=0}^{S-1}\mathbb{E}\left\| \nabla F(\tilde{\w}_{t,s}^{k_{t-1}})\right\|^2 \right)\right]\\
  &~~~~+\frac{8L^2 \gamma^3 S^2}{(1-\beta)^3}\left[\sum_{k \in [K]} \left(\hat{\eta}_{0,k}\sum_{s=0}^{S-1}\mathbb{E}\left\| \nabla F(\tilde{\w}_{0,s}^k)\right\|^2\right)+\sum_{t=1}^{T-1}\left(\hat{\eta}_{t,k_{t-1}}\sum_{s=0}^{S-1}\mathbb{E}\left\| \nabla F(\tilde{\w}_{t,s}^{k_{t-1}})\right\|^2 \right)\right]
    \end{align*}

 Letting $\gamma \leq \frac{(1-\beta)^2}{16LS}$, then we have that
     \begin{align*}
  F(\hat{\y}_T) &\leq F(\w_0)- \frac{S \gamma }{2(1-\beta)}\left[\left(\sum_{k \in [K]} \hat{\eta}_{0,k}\right)\left\|\nabla F(\w_0)\right\|^2+\sum_{t=1}^{T-1}\left(\hat{\eta}_{t,k_{t-1}}\mathbb{E}\left\|\nabla F(\w_t)\right\|^2\right)\right]\\
  &~~~~+ \frac{\gamma^3S(S-1) L^2\sigma^2}{2(1-\beta)^3} \left(\sum_{k \in [K]} \hat{\eta}_{0,k}+\sum_{t=1}^{T-1}\hat{\eta}_{t,k_{t-1}}\right)+\frac{\gamma^2LS\sigma^2}{(1-\beta)^2}\left(\sum_{k \in [K]}\hat{\eta}_{0,k}^2+\sum_{t=1}^{T-1}\hat{\eta}_{t,k_{t-1}}^2\right)\\
  &~~~~+ \frac{4L^2\gamma^3S\beta^2\sigma^2}{(1-\beta)^5K}\left(\sum_{k \in [K]} \hat{\eta}_{0,k} +\sum_{t=1}^{T-1} \hat{\eta}_{t,k_{t-1}} \right) + \frac{8L^2\gamma^3S^2\sigma^2}{(1-\beta)^3K}\left( \sum_{k \in [K]} \hat{\eta}_{0,k}+\sum_{t=1}^{T-1} {\hat{\eta}}_{t,k_{t-1}}\right)\\
  &\leq F(\w_0)- \frac{S \gamma }{2(1-\beta)}\left[\left(\sum_{k \in [K]} \hat{\eta}_{0,k}\right)\left\|\nabla F(\w_0)\right\|^2+\sum_{t=1}^{T-1}\left(\hat{\eta}_{t,k_{t-1}}\mathbb{E}\left\|\nabla F(\w_t)\right\|^2\right)\right]\\
  &~~~~+\frac{2\gamma^2LS\sigma^2}{(1-\beta)^3K}\left(\sum_{k \in [K]}\hat{\eta}_{0,k}+\sum_{t=1}^{T-1}\hat{\eta}_{t,k_{t-1}}\right) + \frac{\gamma^3 S (S-1) L^2\sigma^2}{2(1-\beta)^3} \left(\sum_{k \in [K]}\hat{\eta}_{0,k} + \sum_{t=1}^{T-1}\hat{\eta}_{t,k_{t-1}} \right)
  \end{align*}
  
 It's easy to verify that 
\begin{align*}
  & \hat{\tau}_{t+1,k} \triangleq t+1-ite(t+1,k)= \left\{
  \begin{aligned}
    & t-ite(t,k)+1=\hat{\tau}_{t,k}+1 & k \neq k_t,\\
    & 0 = \hat{\tau}_{t,k}- \tau_t & k = k_t.
  \end{aligned}
  \right . 
 \end{align*}
Since $\sum_{k \in [K]}\hat{\tau}_{0,k} = 0$, we have $\sum_{k \in [K]}\hat{\tau}_{T,k}+\sum_{t=0}^{T-1}\tau_t=(K-1)T$. 
We get that $\sum_{t=0}^{T-1}\tau_t \leq (K-1)T$.
Thus, at least $\frac{T}{2}$ delays in $\{\tau_t\}_{t \in [T]}$ are smaller than $2K$. $\sum_{t=1}^{T-1}\hat{\eta}_{t, k_{t-1}}+\sum_{k \in [K]}\hat{\eta}_{0, k} \geq \sum_{t \in [T]} \eta_t \geq \frac{T}{2K}$.
    \begin{align*}
\mathbb{E}\left\|\nabla F(\bar{\w}_T)\right\|^2&\leq \frac{2(1-\beta)\left(F(\w_0)- F^*\right)}{S\gamma (\sum_{k \in [K]}\hat{\eta}_{0,k}+\sum_{t=1}^{T-1}\hat{\eta}_{t,k_{t-1}})}+\frac{4\gamma L\sigma^2}{K(1-\beta)^2}+\frac{\gamma^2 L^2(S-1)\sigma^2}{(1-\beta)^2} \\
  &\leq \frac{4K(1-\beta)\left(F(\w_0)- F^*\right)}{\gamma ST}+\frac{4\gamma L\sigma^2}{K(1-\beta)^2}+\frac{\gamma^2 L^2(S-1)\sigma^2}{(1-\beta)^2}, 
  \end{align*} where $ \mathbb{E}\left\|\nabla F(\bar{\w}_T)\right\|^2 =  \frac{\sum_{k \in [K]}\hat{\eta}_{0,k}\left\|\nabla F(\w_0)\right\|^2}{\sum_{k \in [K]}\hat{\eta}_{0,k}+\sum_{t=1}^{T-1}\hat{\eta}_{t,k_{t-1}}}+\frac{\sum_{t=1}^{T-1}\hat{\eta}_{t,k_{t-1}}\mathbb{E}\left\|\nabla F(\w_t)\right\|^2}{\sum_{k \in [K]}\hat{\eta}_{0,k}+\sum_{t=1}^{T-1}\hat{\eta}_{t,k_{t-1}}}$.
  \end{proof}
 
\subsection{Additional Experimental Results}

We evaluate OrLoMo by training SqueezeNet and ResNet20 models on CIFAR10 and CIFAR100 datasets. Experimental details are provided in the Experiments section. Table~\ref{table: loss-resnet20} and Table~\ref{table: loss-squeezenet} show the training loss results. Figure~\ref{fig:resnet20-loss} and Figure~\ref{fig:squeezenet-loss} show the corresponding training loss curves  with respect to wall-clock time.

Additional results for ResNet32 model on Tiny-ImageNet dataset are provided in Table~\ref{table: resnet32-imagenet} and Figure~\ref{fig:resnet32-loss}. In this heterogeneous setting, 25\% of all workers are designated as slow workers, whose average gradient computation time is four times that of the other workers.
  \begin{table*}[!t] \small
  \centering
  \setlength{\tabcolsep}{0.9mm}{  
  \begin{tabular}{c|c|c|cccc|cccc}
    \toprule   
      \multicolumn{3}{c|}{Datasets} & \multicolumn{4}{c|}{CIFAR10}& \multicolumn{4}{c}{CIFAR100}  \\ 
    \midrule
     & Workers & Local Iterations & PRSGDm & AL-SGD &  local OrMo-DA & OrLoMo & PRSGDm & AL-SGD &  local OrMo-DA & OrLoMo \\
    \midrule
     \multirow{6}{*}{\rotatebox{90}{homogeneous}}&\multirow{3}{*}{8}& 8 & \textbf{0.04{\scriptsize$\pm$0.00}} & 0.10{\scriptsize$\pm$0.00} & \textbf{0.04{\scriptsize$\pm$0.00}} & \textbf{0.04{\scriptsize$\pm$0.00}} & 0.64{\scriptsize$\pm$0.01}  & 0.87{\scriptsize$\pm$0.00} & 0.60{\scriptsize$\pm$0.00} & \textbf{0.62{\scriptsize$\pm$0.01}}   \\ 
     & & 16 & \textbf{0.06{\scriptsize$\pm$0.00}} & 0.11{\scriptsize$\pm$0.01} & 0.07{\scriptsize$\pm$0.00} & \textbf{0.06{\scriptsize$\pm$0.00}} & \textbf{0.74{\scriptsize$\pm$0.00}}  & 0.91{\scriptsize$\pm$0.00} & 0.72{\scriptsize$\pm$0.01} & \textbf{0.74{\scriptsize$\pm$0.00}}  \\ 
     & & 32 & 0.09{\scriptsize$\pm$0.00} & 0.12{\scriptsize$\pm$0.00} & 0.19{\scriptsize$\pm$0.01} & \textbf{0.07{\scriptsize$\pm$0.00}} & 0.86{\scriptsize$\pm$0.01}  & 0.90{\scriptsize$\pm$0.00} & 0.95{\scriptsize$\pm$0.02} & \textbf{0.80{\scriptsize$\pm$0.00}}  \\  \cmidrule(r){2-11} 
     &\multirow{3}{*}{16}& 8 & 0.07{\scriptsize$\pm$0.00} & 0.18{\scriptsize$\pm$0.00} & 0.08{\scriptsize$\pm$0.01} & \textbf{0.06{\scriptsize$\pm$0.00}} &  0.79{\scriptsize$\pm$0.01} & 1.06{\scriptsize$\pm$0.01} & 0.76{\scriptsize$\pm$0.00} & \textbf{0.74{\scriptsize$\pm$0.01}}  \\ 
     & & 16 & 0.11{\scriptsize$\pm$0.00} & 0.19{\scriptsize$\pm$0.00} & 0.19{\scriptsize$\pm$0.01} & \textbf{0.09{\scriptsize$\pm$0.00}} & 0.93{\scriptsize$\pm$0.01} & 1.14{\scriptsize$\pm$0.01} & 1.04{\scriptsize$\pm$0.02} & \textbf{0.86{\scriptsize$\pm$0.02}} \\ 
     & & 32 & 0.18{\scriptsize$\pm$0.00} & 0.23{\scriptsize$\pm$0.00} & 0.64{\scriptsize$\pm$0.09} & \textbf{0.16{\scriptsize$\pm$0.01}} & 1.10{\scriptsize$\pm$0.01}  & 1.22{\scriptsize$\pm$0.01} & 1.81{\scriptsize$\pm$0.03} & \textbf{1.08{\scriptsize$\pm$0.01}}  \\ 
    \midrule 
     \multirow{6}{*}{\rotatebox{90}{heterogeneous}}&\multirow{3}{*}{8}& 8 & \textbf{0.04{\scriptsize$\pm$0.00}} & 0.09{\scriptsize$\pm$0.00} & \textbf{0.04{\scriptsize$\pm$0.00}} & \textbf{0.04{\scriptsize$\pm$0.00}} & 0.65{\scriptsize$\pm$0.00} & 0.87{\scriptsize$\pm$0.00} & \textbf{0.61{\scriptsize$\pm$0.00}} & 0.62{\scriptsize$\pm$0.01}    \\
     && 16 & \textbf{0.06{\scriptsize$\pm$0.00}} & 0.11{\scriptsize$\pm$0.00} & 0.07{\scriptsize$\pm$0.00} & \textbf{0.06{\scriptsize$\pm$0.00}}  & 0.74{\scriptsize$\pm$0.01} & 0.89{\scriptsize$\pm$0.01} & \textbf{0.72{\scriptsize$\pm$0.01}} & 0.73{\scriptsize$\pm$0.00} \\
     && 32 & 0.10{\scriptsize$\pm$0.00} & 0.11{\scriptsize$\pm$0.01} & 0.15{\scriptsize$\pm$0.01} & \textbf{0.08{\scriptsize$\pm$0.00}} & 0.86{\scriptsize$\pm$0.01} & 0.90{\scriptsize$\pm$0.00} & 0.95{\scriptsize$\pm$0.02} & \textbf{0.80{\scriptsize$\pm$0.00}} \\ 
    \cmidrule(r){2-11}
     &\multirow{3}{*}{16}& 8 & 0.07{\scriptsize$\pm$0.00} & 0.18{\scriptsize$\pm$0.01} & 0.07{\scriptsize$\pm$0.00} & \textbf{0.06{\scriptsize$\pm$0.00}} & 0.79{\scriptsize$\pm$0.01} &  1.06{\scriptsize$\pm$0.01} & 0.76{\scriptsize$\pm$0.00} & \textbf{0.74{\scriptsize$\pm$0.01}} \\ 
     && 16 & 0.12{\scriptsize$\pm$0.00} & 0.19{\scriptsize$\pm$0.00} & 0.15{\scriptsize$\pm$0.00} & \textbf{0.09{\scriptsize$\pm$0.00}} & 0.93{\scriptsize$\pm$0.01} & 1.13{\scriptsize$\pm$0.00} & 1.01{\scriptsize$\pm$0.01} & \textbf{0.84{\scriptsize$\pm$0.00}} \\ 
     && 32 & 0.18{\scriptsize$\pm$0.01} & 0.22{\scriptsize$\pm$0.01} & 0.47{\scriptsize$\pm$0.02} & \textbf{0.15{\scriptsize$\pm$0.01}} & 1.10{\scriptsize$\pm$0.00} & 1.22{\scriptsize$\pm$0.00} & 1.65{\scriptsize$\pm$0.04} & \textbf{1.07{\scriptsize$\pm$0.00}}  \\ 
    \bottomrule 
  \end{tabular}}
    \caption{Training loss results of the ResNet20 model.} \label{table: loss-resnet20}
\end{table*}

  \begin{table*}[!t] \small
  \centering
  \setlength{\tabcolsep}{0.9mm}{  
  \begin{tabular}{c|c|c|cccc|cccc}
    \toprule   
      \multicolumn{3}{c|}{Datasets} & \multicolumn{4}{c|}{CIFAR10}& \multicolumn{4}{c}{CIFAR100}  \\ 
    \midrule
     & Workers & Local Iterations & PRSGDm & AL-SGD &  local OrMo-DA & OrLoMo & PRSGDm & AL-SGD &  local OrMo-DA & OrLoMo \\
    \midrule
     \multirow{6}{*}{\rotatebox{90}{homogeneous}}&\multirow{3}{*}{8}& 8 & \textbf{0.02{\scriptsize$\pm$0.00}} & 0.06{\scriptsize$\pm$0.00} & \textbf{0.02{\scriptsize$\pm$0.00}} & \textbf{0.02{\scriptsize$\pm$0.00}} & 0.27{\scriptsize$\pm$0.01} & 0.52{\scriptsize$\pm$0.01} & \textbf{0.23{\scriptsize$\pm$0.00}} & 0.24{\scriptsize$\pm$0.01}  \\ 
     & & 16 & \textbf{0.03{\scriptsize$\pm$0.00}} & 0.07{\scriptsize$\pm$0.00} & 0.04{\scriptsize$\pm$0.00} & \textbf{0.03{\scriptsize$\pm$0.00}} & 0.38{\scriptsize$\pm$0.01} & 0.57{\scriptsize$\pm$0.00} & 0.34{\scriptsize$\pm$0.01} & \textbf{0.32{\scriptsize$\pm$0.01}}  \\
     & & 32 & 0.06{\scriptsize$\pm$0.00} & 0.07{\scriptsize$\pm$0.00} & 0.11{\scriptsize$\pm$0.01} & \textbf{0.04{\scriptsize$\pm$0.00}} & 0.53{\scriptsize$\pm$0.01} & 0.60{\scriptsize$\pm$0.01} & 0.68{\scriptsize$\pm$0.02} & \textbf{0.42{\scriptsize$\pm$0.00}}  \\ \cmidrule(r){2-11} 
     &\multirow{3}{*}{16}& 8 & \textbf{0.03{\scriptsize$\pm$0.00}} & 0.13{\scriptsize$\pm$0.00} & 0.04{\scriptsize$\pm$0.00} & \textbf{0.03{\scriptsize$\pm$0.00}} & \textbf{0.37{\scriptsize$\pm$0.00}} & 0.83{\scriptsize$\pm$0.01} & 0.40{\scriptsize$\pm$0.01} & 0.38{\scriptsize$\pm$0.01}  \\
     & & 16 & 0.07{\scriptsize$\pm$0.00} & 0.15{\scriptsize$\pm$0.00} & 0.14{\scriptsize$\pm$0.01} & \textbf{0.05{\scriptsize$\pm$0.00}} & 0.60{\scriptsize$\pm$0.01} & 0.89{\scriptsize$\pm$0.01} & 0.70{\scriptsize$\pm$0.02} & \textbf{0.50{\scriptsize$\pm$0.00}}  \\
     & & 32 & 0.14{\scriptsize$\pm$0.00} & 0.20{\scriptsize$\pm$0.00} & 0.59{\scriptsize$\pm$0.03} & \textbf{0.12{\scriptsize$\pm$0.00}} & 0.87{\scriptsize$\pm$0.01} & 1.05{\scriptsize$\pm$0.02} & 1.76{\scriptsize$\pm$0.02} & \textbf{0.79{\scriptsize$\pm$0.01}}  \\ 
    \midrule 
     \multirow{6}{*}{\rotatebox{90}{heterogeneous}}&\multirow{3}{*}{8}& 8 & \textbf{0.02{\scriptsize$\pm$0.00}} & 0.05{\scriptsize$\pm$0.00} &  \textbf{0.02{\scriptsize$\pm$0.00}} & \textbf{0.02{\scriptsize$\pm$0.00}} & 0.27{\scriptsize$\pm$0.00} & 0.51{\scriptsize$\pm$0.00} & \textbf{0.23{\scriptsize$\pm$0.00}} & 0.24{\scriptsize$\pm$0.00}  \\
     && 16 & \textbf{0.03{\scriptsize$\pm$0.00}} & 0.07{\scriptsize$\pm$0.00} & \textbf{0.03{\scriptsize$\pm$0.00}} & \textbf{0.03{\scriptsize$\pm$0.00}} & 0.38{\scriptsize$\pm$0.00} & 0.58{\scriptsize$\pm$0.00} & \textbf{0.31{\scriptsize$\pm$0.01}} & 0.32{\scriptsize$\pm$0.00}  \\
     && 32 & 0.06{\scriptsize$\pm$0.00} & 0.07{\scriptsize$\pm$0.00} & 0.07{\scriptsize$\pm$0.01} & \textbf{0.04{\scriptsize$\pm$0.00}} & 0.53{\scriptsize$\pm$0.00} & 0.60{\scriptsize$\pm$0.01} & 0.52{\scriptsize$\pm$0.03} & \textbf{0.42{\scriptsize$\pm$0.00}}  \\
    \cmidrule(r){2-11}
     &\multirow{3}{*}{16}& 8 & \textbf{0.03{\scriptsize$\pm$0.00}} & 0.13{\scriptsize$\pm$0.00} & 0.04{\scriptsize$\pm$0.00} & \textbf{0.03{\scriptsize$\pm$0.00}} & 0.37{\scriptsize$\pm$0.00} & 0.84{\scriptsize$\pm$0.01} & \textbf{0.36{\scriptsize$\pm$0.00}} & \textbf{0.36{\scriptsize$\pm$0.00}} \\
     && 16 & 0.07{\scriptsize$\pm$0.00} & 0.15{\scriptsize$\pm$0.00} & 0.11{\scriptsize$\pm$0.00} & \textbf{0.05{\scriptsize$\pm$0.00}} & 0.60{\scriptsize$\pm$0.01} & 0.89{\scriptsize$\pm$0.01} & 0.63{\scriptsize$\pm$0.01} & \textbf{0.49{\scriptsize$\pm$0.00}}  \\
     && 32 & 0.14{\scriptsize$\pm$0.00} & 0.20{\scriptsize$\pm$0.00} & 0.41{\scriptsize$\pm$0.04} & \textbf{0.12{\scriptsize$\pm$0.00}} & 0.88{\scriptsize$\pm$0.01} & 1.04{\scriptsize$\pm$0.01} & 1.46{\scriptsize$\pm$0.03} & \textbf{0.77{\scriptsize$\pm$0.01}}  \\
    \bottomrule 
  \end{tabular}}
    \caption{Training loss results of SqueezeNet model.} \label{table: loss-squeezenet}
\end{table*}
\begin{figure*}[!t]
  \centering
  \subfigure[homogeneous (CIFAR10)]{
    \begin{minipage}[b]{0.24\textwidth}
      \includegraphics[width=1\linewidth]{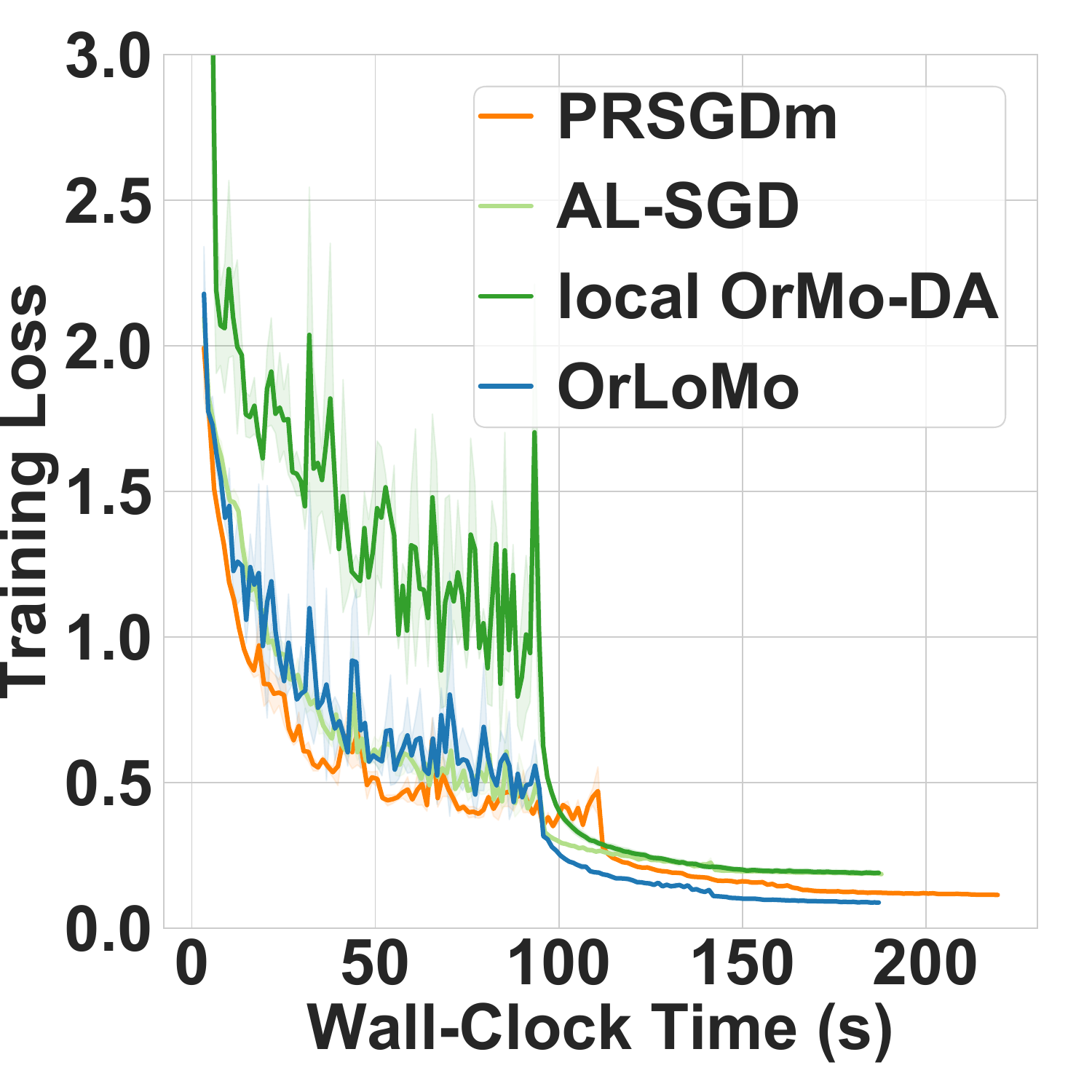}
      \end{minipage}}
  \subfigure[heterogeneous (CIFAR10)]{
    \begin{minipage}[b]{0.24\textwidth}
      \includegraphics[width=1\linewidth]{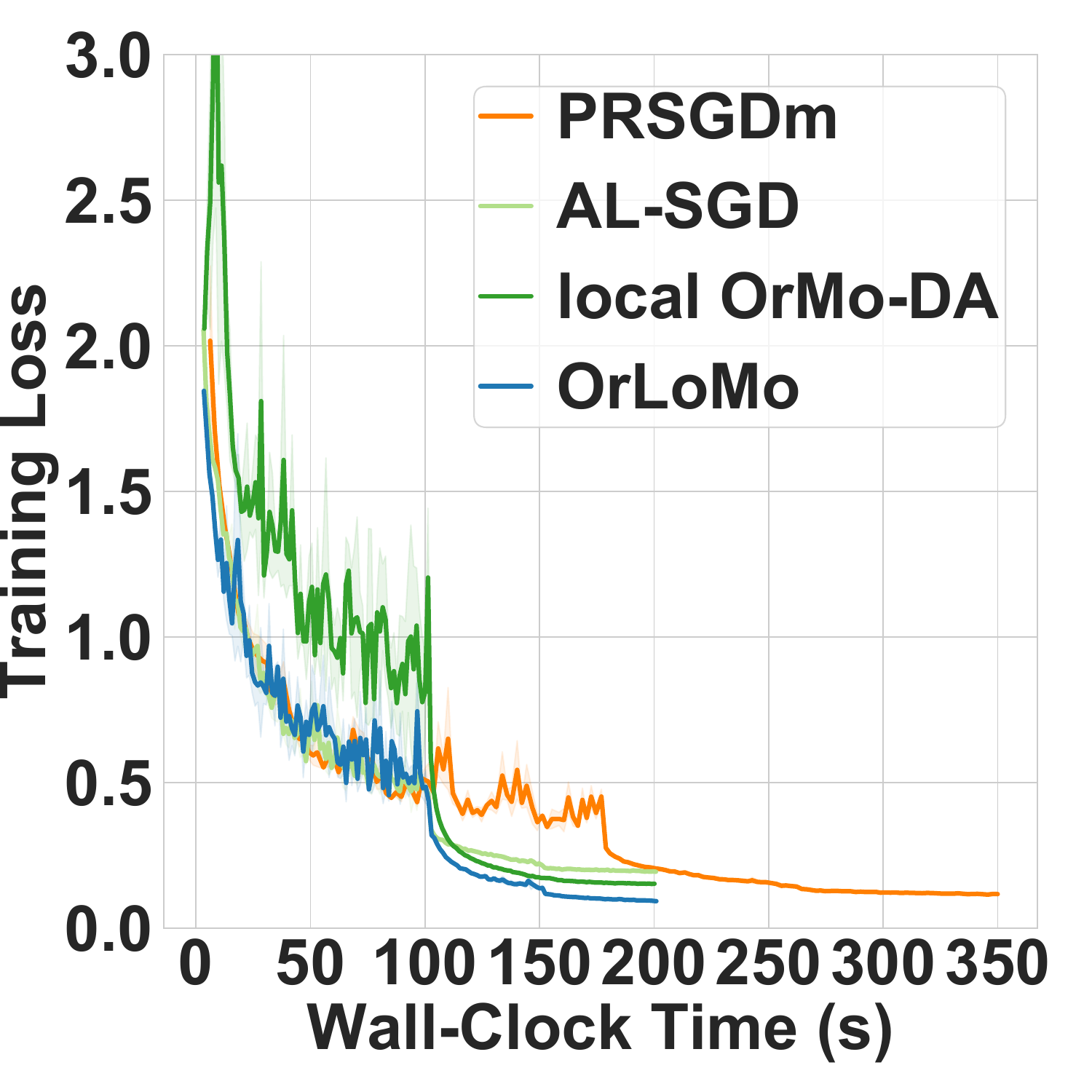}
      \end{minipage}}
  \subfigure[homogeneous (CIFAR100)]{
    \begin{minipage}[b]{0.24\textwidth}
      \includegraphics[width=1\linewidth]{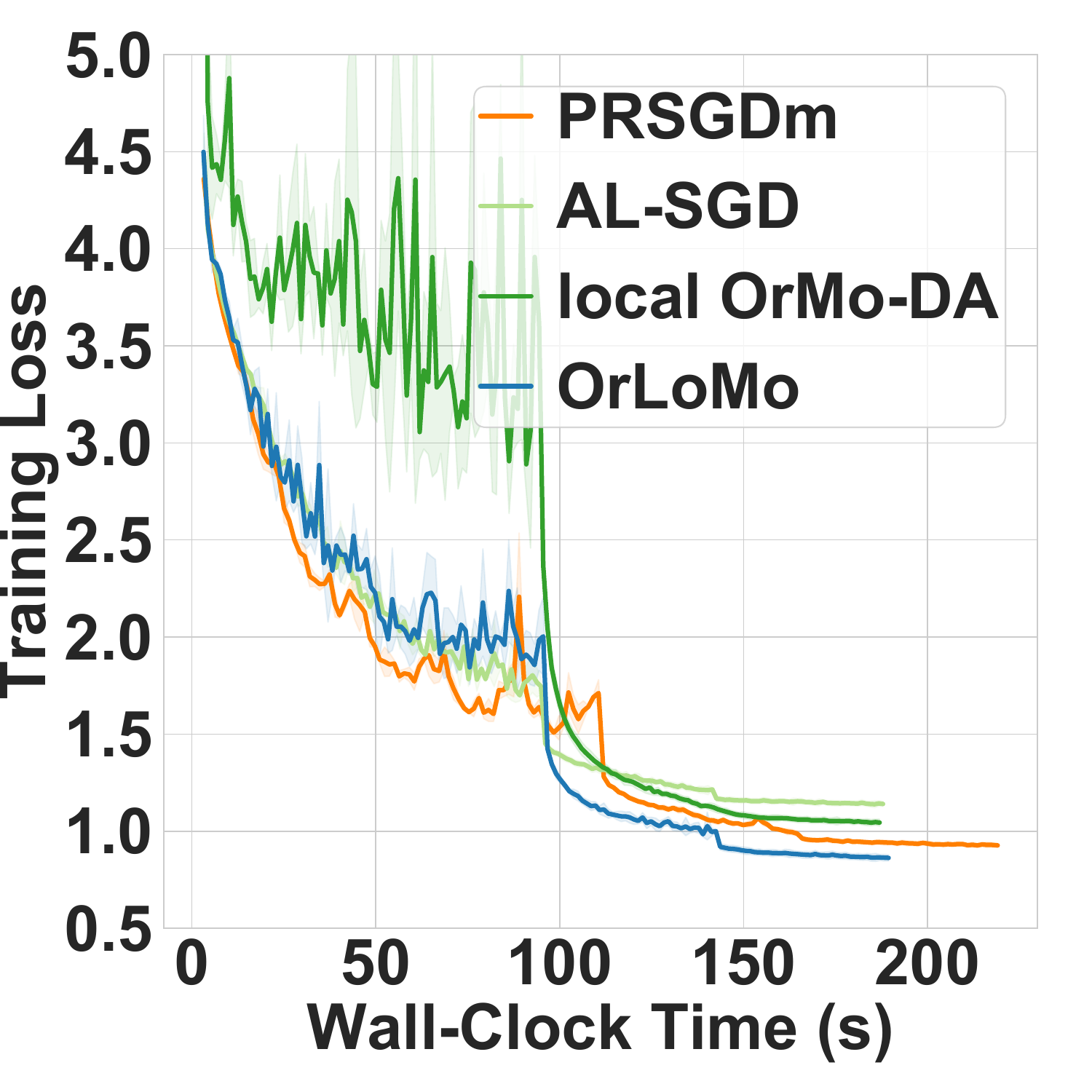}
      \end{minipage}}
  \subfigure[heterogeneous (CIFAR100)]{
    \begin{minipage}[b]{0.24\textwidth}
      \includegraphics[width=1\linewidth]{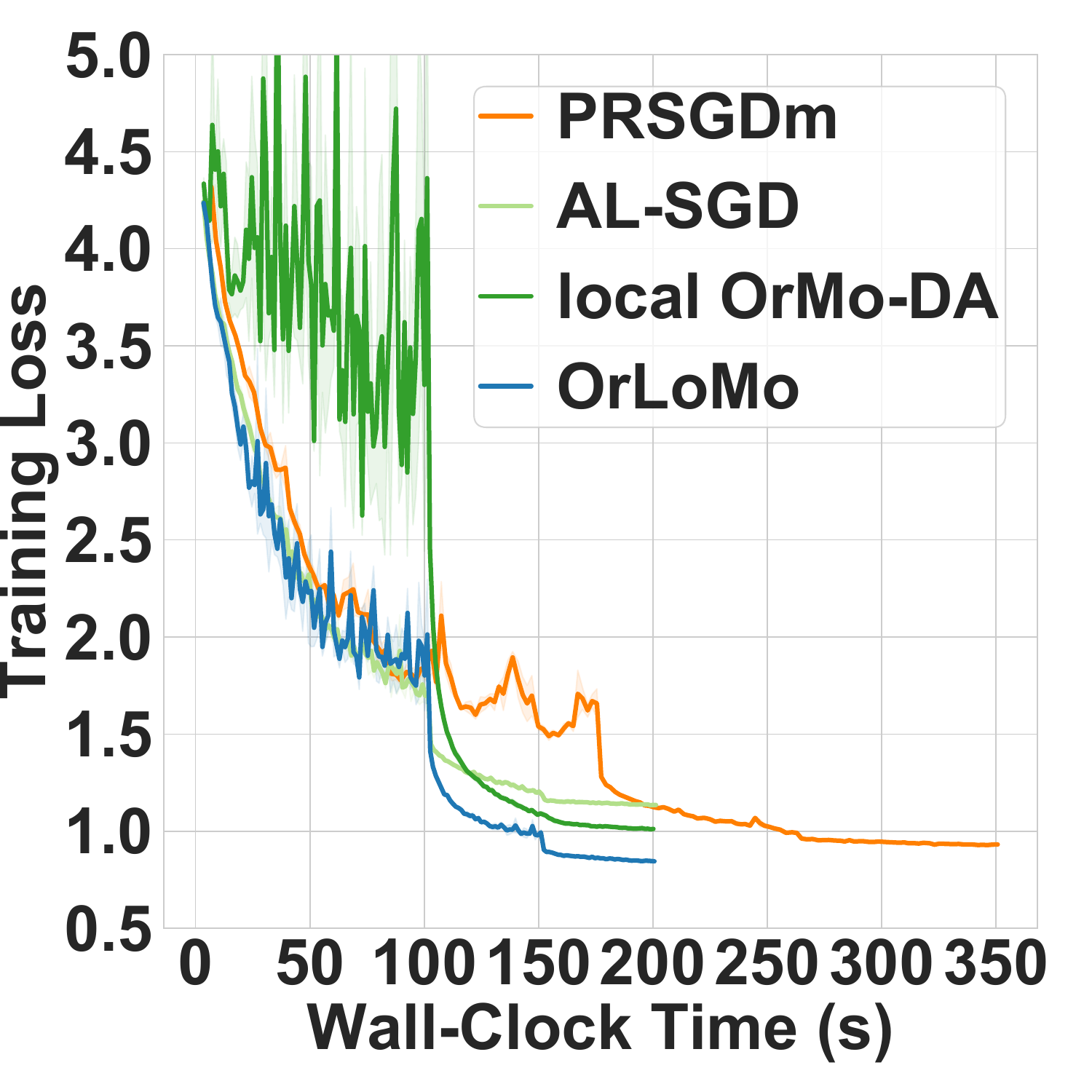}
      \end{minipage}}
\caption{Training loss curves of ResNet20 model when $K=16, S=16$.}\label{fig:resnet20-loss}
\end{figure*}   

\begin{figure*}[!t]
  \centering
  \subfigure[homogeneous (CIFAR10)]{
    \begin{minipage}[b]{0.24\textwidth}
      \includegraphics[width=1\linewidth]{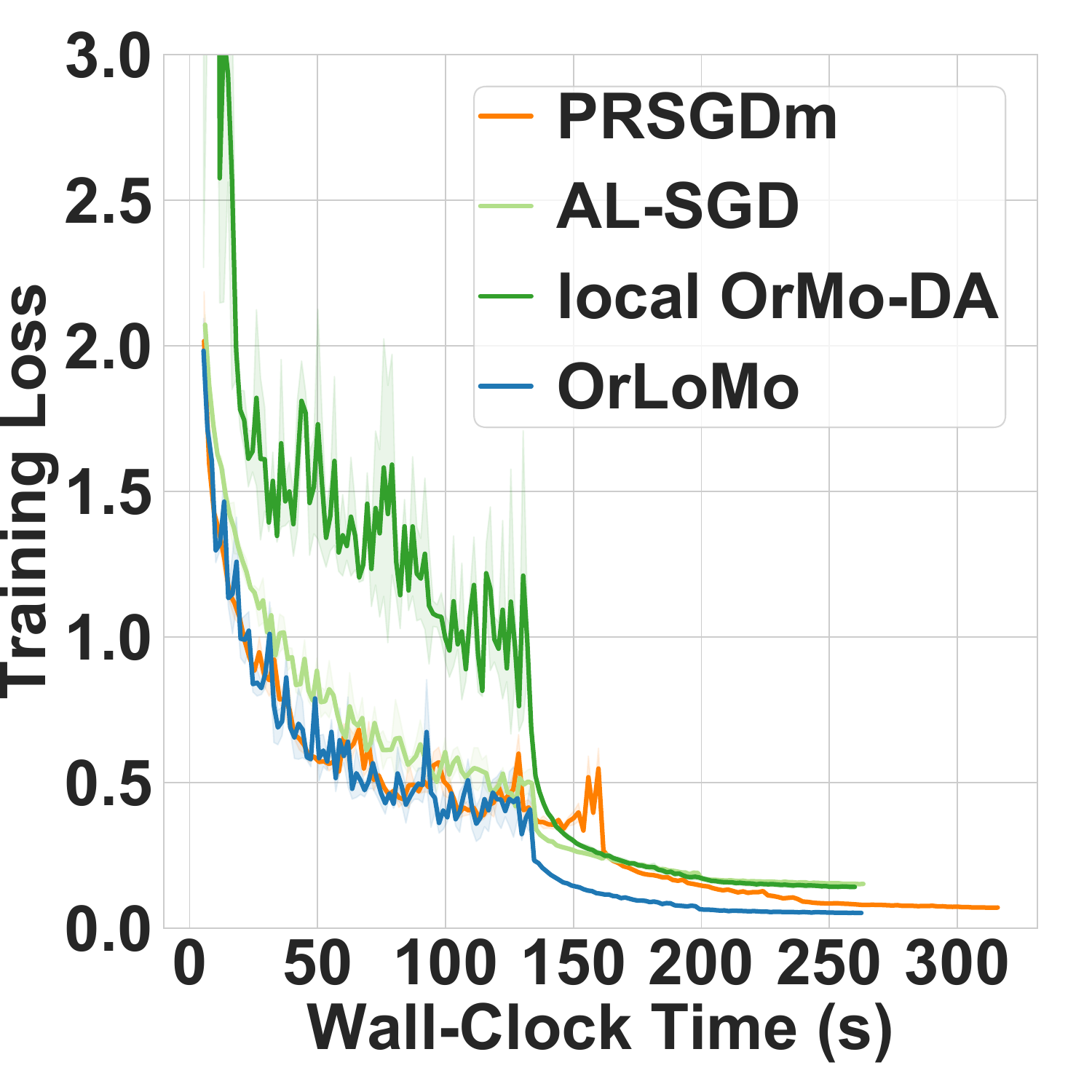}
      \end{minipage}}
  \subfigure[heterogeneous (CIFAR10)]{
    \begin{minipage}[b]{0.24\textwidth}
      \includegraphics[width=1\linewidth]{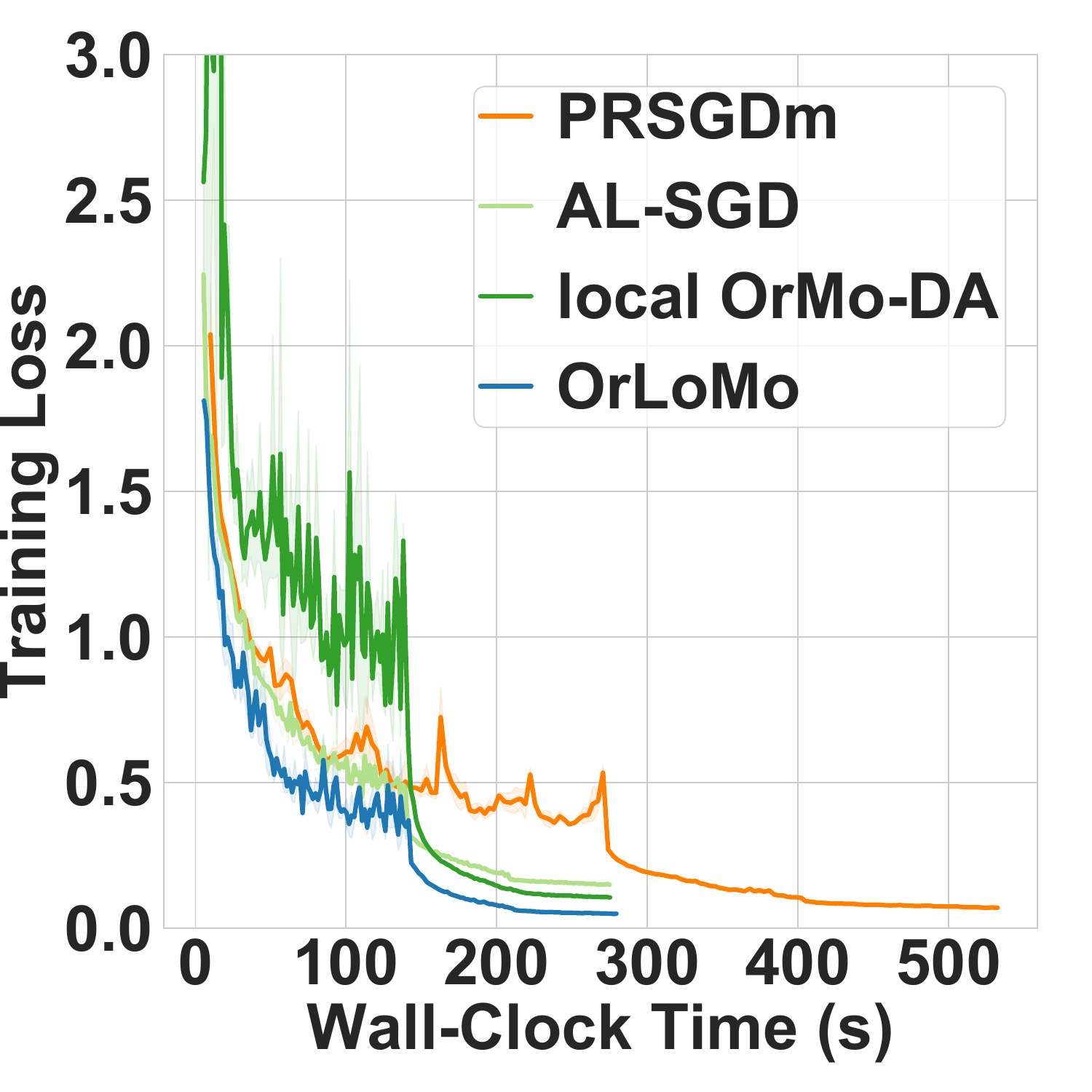}
      \end{minipage}}
  \subfigure[homogeneous (CIFAR100)]{
    \begin{minipage}[b]{0.24\textwidth}
      \includegraphics[width=1\linewidth]{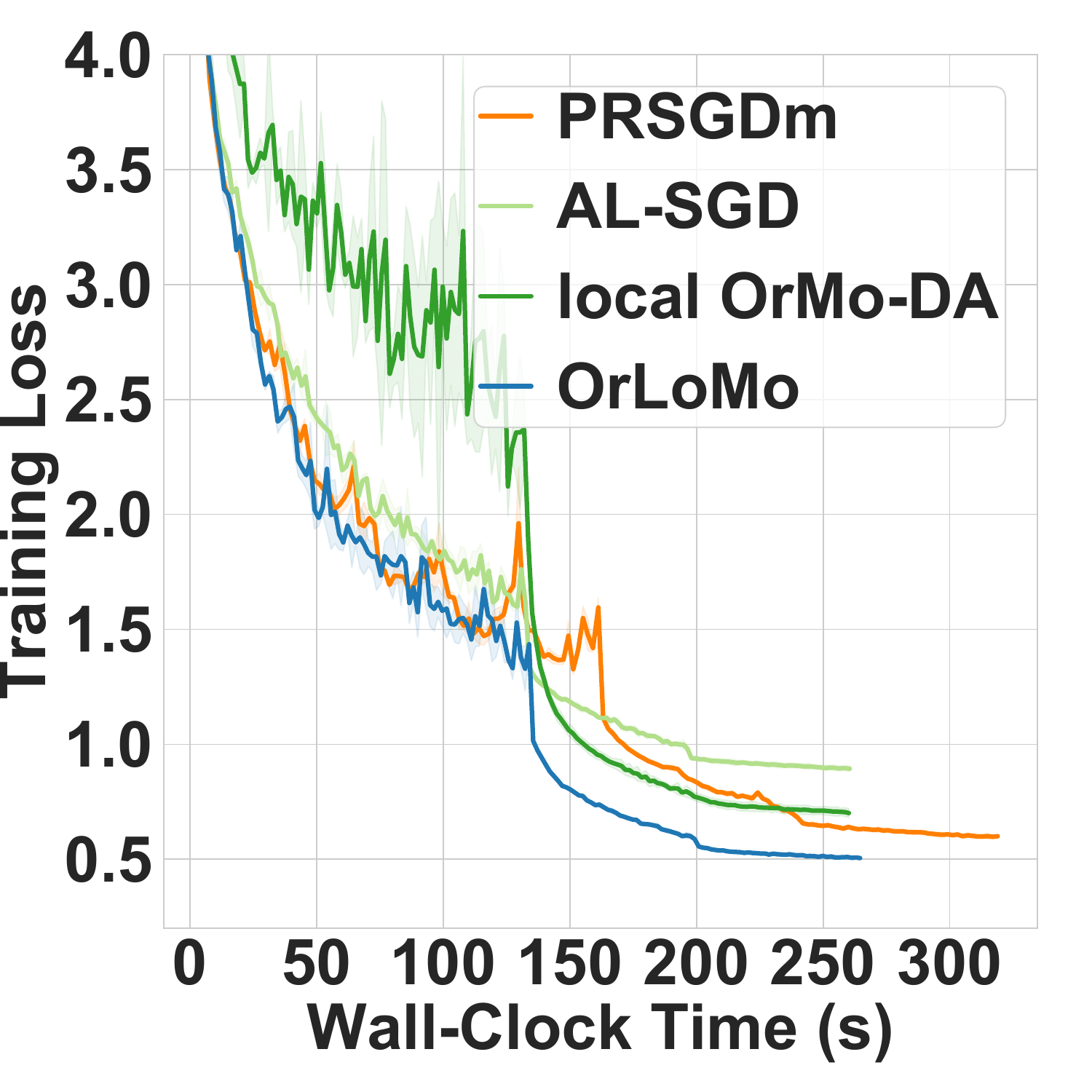}
      \end{minipage}}
  \subfigure[heterogeneous (CIFAR100)]{
    \begin{minipage}[b]{0.24\textwidth}
      \includegraphics[width=1\linewidth]{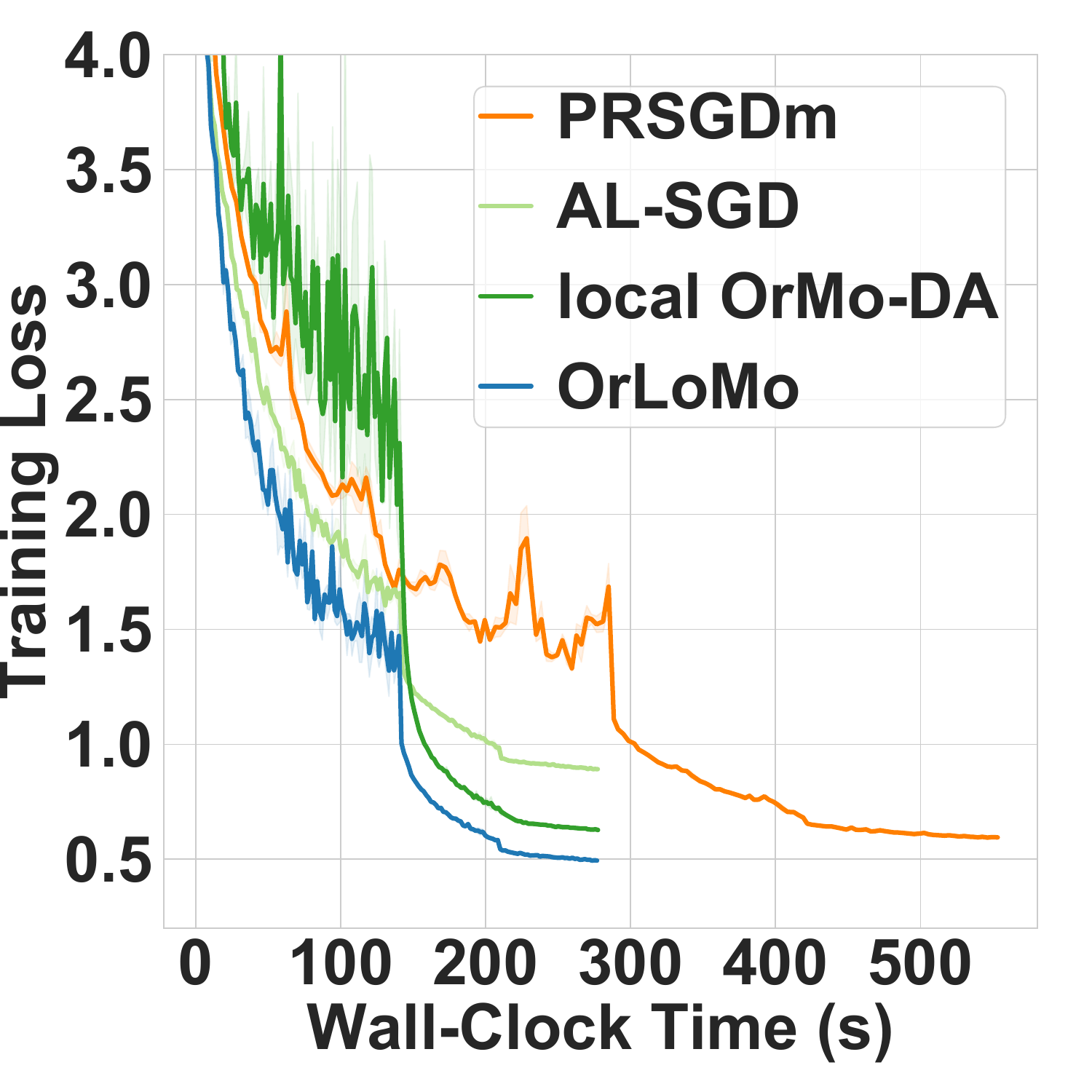}
      \end{minipage}}
\caption{Training loss curves of SqueezeNet model when $K=16, S=16$.}\label{fig:squeezenet-loss}
\end{figure*}

  \begin{table*}[!t] \small
  \centering
  \setlength{\tabcolsep}{1mm}{  
  \begin{tabular}{c|c|c|cccc|cccc}
    \toprule   
      \multicolumn{3}{c|}{Datasets} & \multicolumn{4}{c|}{Test Accuracy}& \multicolumn{4}{c}{Training Loss}  \\ 
    \midrule
     & Workers & Local Iterations & PRSGDm & AL-SGD &  local OrMo-DA & OrLoMo & PRSGDm & AL-SGD &  local OrMo-DA & OrLoMo \\
    \midrule
     \multirow{6}{*}{\rotatebox{90}{homogeneous}}&\multirow{3}{*}{8}& 8 & 53.64{\tiny$\pm$0.40} & 51.45{\tiny$\pm$0.23} & 53.59{\tiny$\pm$0.41} &\textbf{53.69{\tiny$\pm$0.19}} & 1.34{\tiny$\pm$0.01} & 1.60{\tiny$\pm$0.01}  & \textbf{1.30{\tiny$\pm$0.02}} & 1.33{\tiny$\pm$0.01}   \\ 
     & & 16 & 52.89{\tiny$\pm$0.22} & 50.88{\tiny$\pm$0.21} & \textbf{53.13{\tiny$\pm$0.13}} & 52.90{\tiny$\pm$0.13} & 1.46{\tiny$\pm$0.01} & 1.65{\tiny$\pm$0.01} & \textbf{1.38{\tiny$\pm$0.01}} & 1.45{\tiny$\pm$0.01}  \\
     & & 32 & 51.72{\tiny$\pm$0.30} & 50.99{\tiny$\pm$0.21} & 50.47{\tiny$\pm$0.14} & \textbf{51.89{\tiny$\pm$0.30}} & 1.59{\tiny$\pm$0.00} & 1.68{\tiny$\pm$0.01} & 1.64{\tiny$\pm$0.01} & \textbf{1.55{\tiny$\pm$0.01}} \\ \cmidrule(r){2-11} 
     &\multirow{3}{*}{16}& 8 & \textbf{52.75{\tiny$\pm$0.32}} & 47.43{\tiny$\pm$0.02} & 52.29{\tiny$\pm$0.43} & 52.66{\tiny$\pm$0.40} &  \textbf{1.46{\tiny$\pm$0.00}} & 1.89{\tiny$\pm$0.01} & 1.50{\tiny$\pm$0.00} & 1.46{\tiny$\pm$0.01} \\ 
     & & 16 & 50.71{\tiny$\pm$0.21} & 46.90{\tiny$\pm$0.24} & 49.23{\tiny$\pm$0.11} & \textbf{51.16{\tiny$\pm$0.20}} &  1.65{\tiny$\pm$0.01} & 1.94{\tiny$\pm$0.00} & 1.78{\tiny$\pm$0.02} & \textbf{1.63{\tiny$\pm$0.00}}  \\ 
     & & 32 & 48.26{\tiny$\pm$0.37} & 47.26{\tiny$\pm$0.08} & 40.47{\tiny$\pm$0.33} & \textbf{49.32{\tiny$\pm$0.41}} & 1.85{\tiny$\pm$0.02} & 1.94{\tiny$\pm$0.00} & 2.34{\tiny$\pm$0.02} & \textbf{1.78{\tiny$\pm$0.01}} \\ 
    \midrule 
     \multirow{6}{*}{\rotatebox{90}{heterogeneous}}&\multirow{3}{*}{8}& 8 & 53.65{\tiny$\pm$0.30} & 51.84{\tiny$\pm$0.15} &  \textbf{54.25{\tiny$\pm$0.15}} & 53.88{\tiny$\pm$0.04} & 1.34{\tiny$\pm$0.01} & 1.59{\tiny$\pm$0.01} & \textbf{1.26{\tiny$\pm$0.00}} & 1.32{\tiny$\pm$0.01}  \\ 
     && 16 & 52.12{\tiny$\pm$0.15} & 50.90{\tiny$\pm$0.29} & \textbf{53.25{\tiny$\pm$0.13}} & 52.89{\tiny$\pm$0.11}  & 1.49{\tiny$\pm$0.01} & 1.66{\tiny$\pm$0.00} & \textbf{1.35{\tiny$\pm$0.01}} & 1.44{\tiny$\pm$0.00}  \\
     && 32 & 51.92{\tiny$\pm$0.20} & 51.04{\tiny$\pm$0.31} & 51.92{\tiny$\pm$0.14} & \textbf{51.99{\tiny$\pm$0.22}} & 1.59{\tiny$\pm$0.01} & 1.68{\tiny$\pm$0.01} & 1.56{\tiny$\pm$0.02} & \textbf{1.56{\tiny$\pm$0.01}} \\ 
    \cmidrule(r){2-11}
     &\multirow{3}{*}{16}& 8 & \textbf{52.86{\tiny$\pm$0.29}} & 47.61{\tiny$\pm$0.24} & 52.05{\tiny$\pm$0.21} & 52.72{\tiny$\pm$0.21} & 1.47{\tiny$\pm$0.01} & 1.90{\tiny$\pm$0.01} & 1.49{\tiny$\pm$0.01} & \textbf{1.46{\tiny$\pm$0.01}} \\ 
     && 16 & 50.57{\tiny$\pm$0.19} & 47.56{\tiny$\pm$0.19} & 49.81{\tiny$\pm$0.27} & \textbf{51.18{\tiny$\pm$0.14}} & 1.65{\tiny$\pm$0.01} & 1.93{\tiny$\pm$0.01} & 1.74{\tiny$\pm$0.01} & \textbf{1.63{\tiny$\pm$0.01}} \\ 
     && 32 & 48.22{\tiny$\pm$0.17} & 46.88{\tiny$\pm$0.39} & 42.20{\tiny$\pm$0.69} & \textbf{49.35{\tiny$\pm$0.10}} & 1.86{\tiny$\pm$0.02} & 1.94{\tiny$\pm$0.01} & 2.23{\tiny$\pm$0.02} & \textbf{1.79{\tiny$\pm$0.01}} \\ 
    \bottomrule 
  \end{tabular}}
  \caption{Experimental results of ResNet32 model on Tiny-ImageNet dataset.} \label{table: resnet32-imagenet}
\end{table*}

\begin{figure*}[!t]
  \centering
  \subfigure[homogeneous]{
  \label{fig:aa}
      \includegraphics[width=0.23\linewidth]{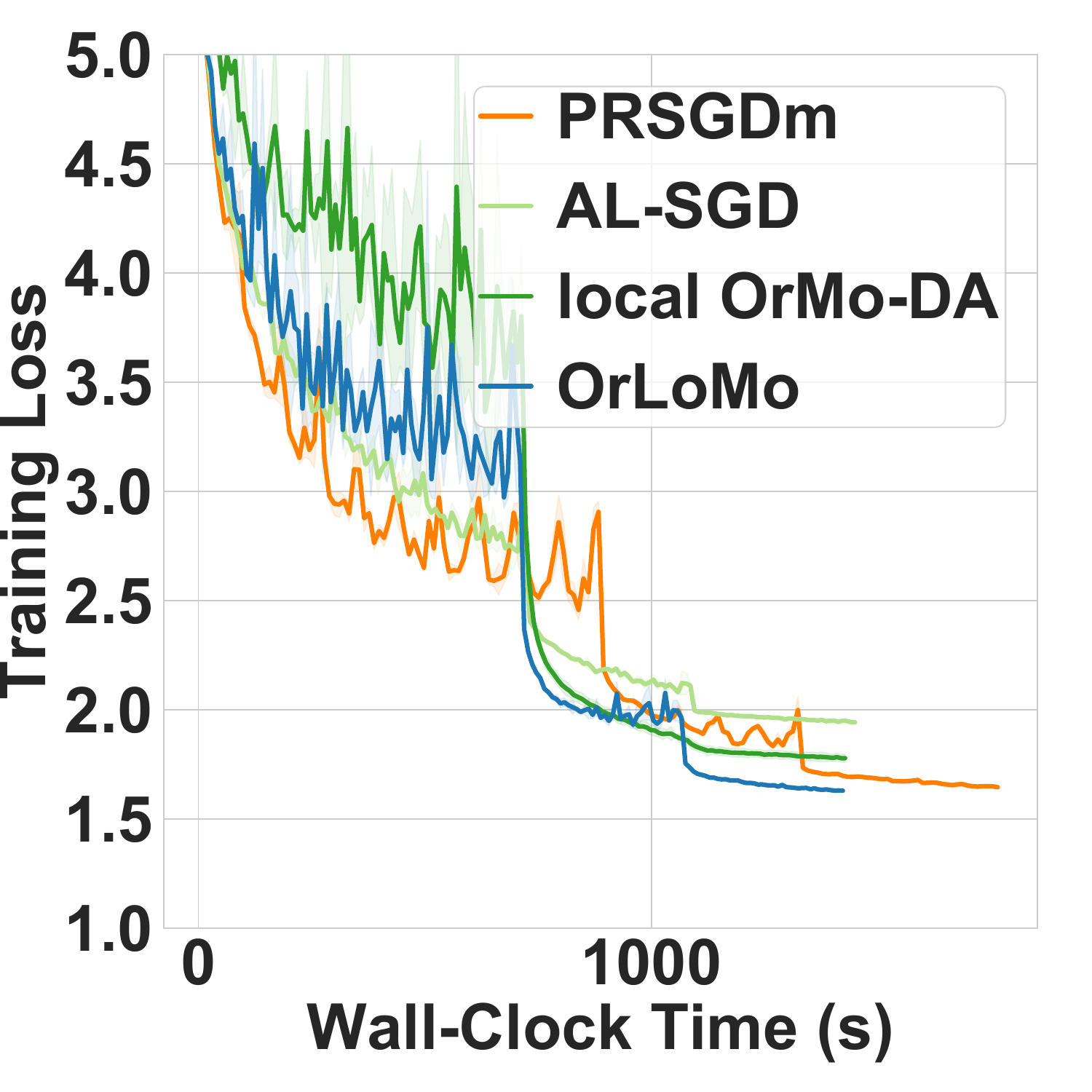}
      \includegraphics[width=0.23\linewidth]{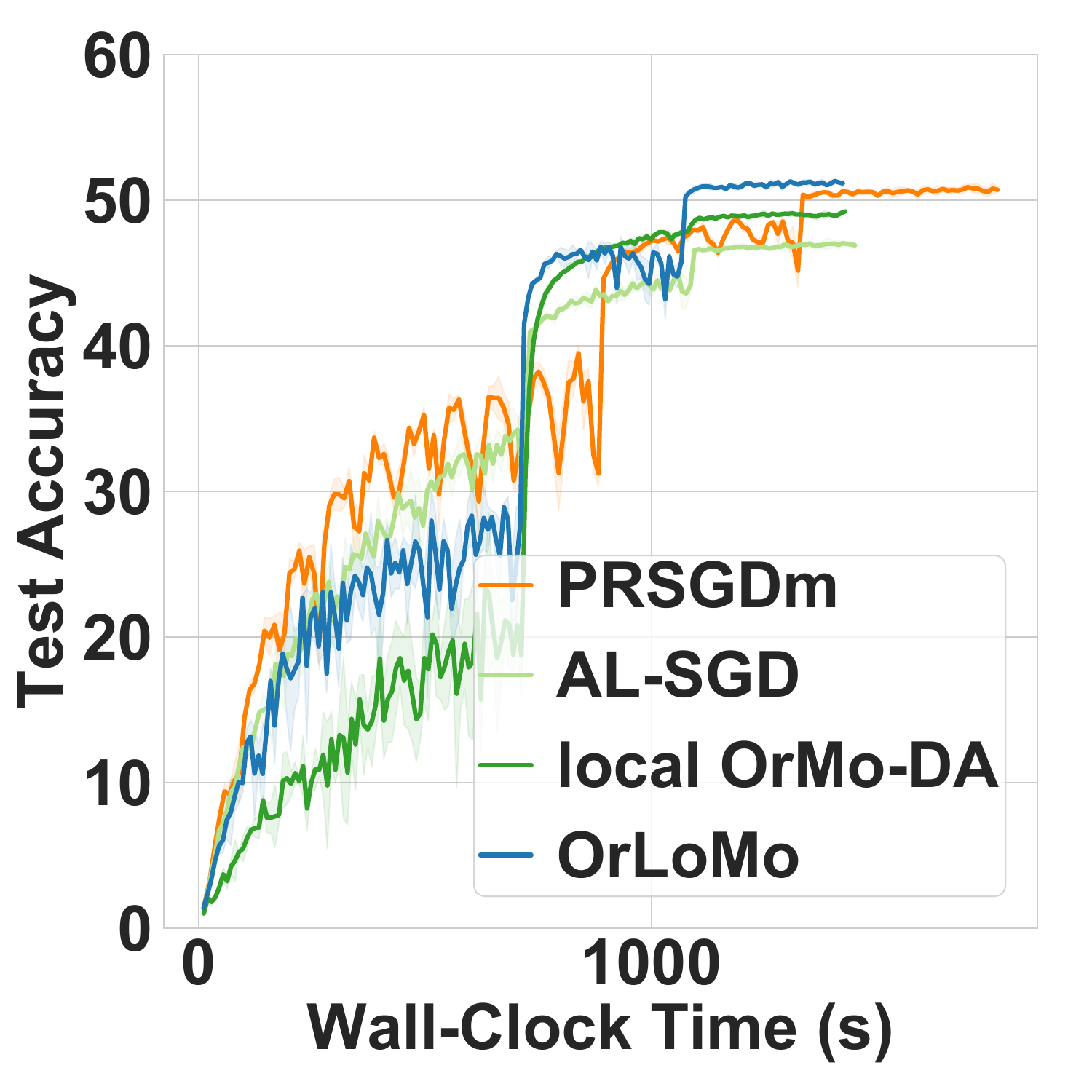}}
  \subfigure[heterogeneous]{
  \label{fig:bb}
      \includegraphics[width=0.23\linewidth]{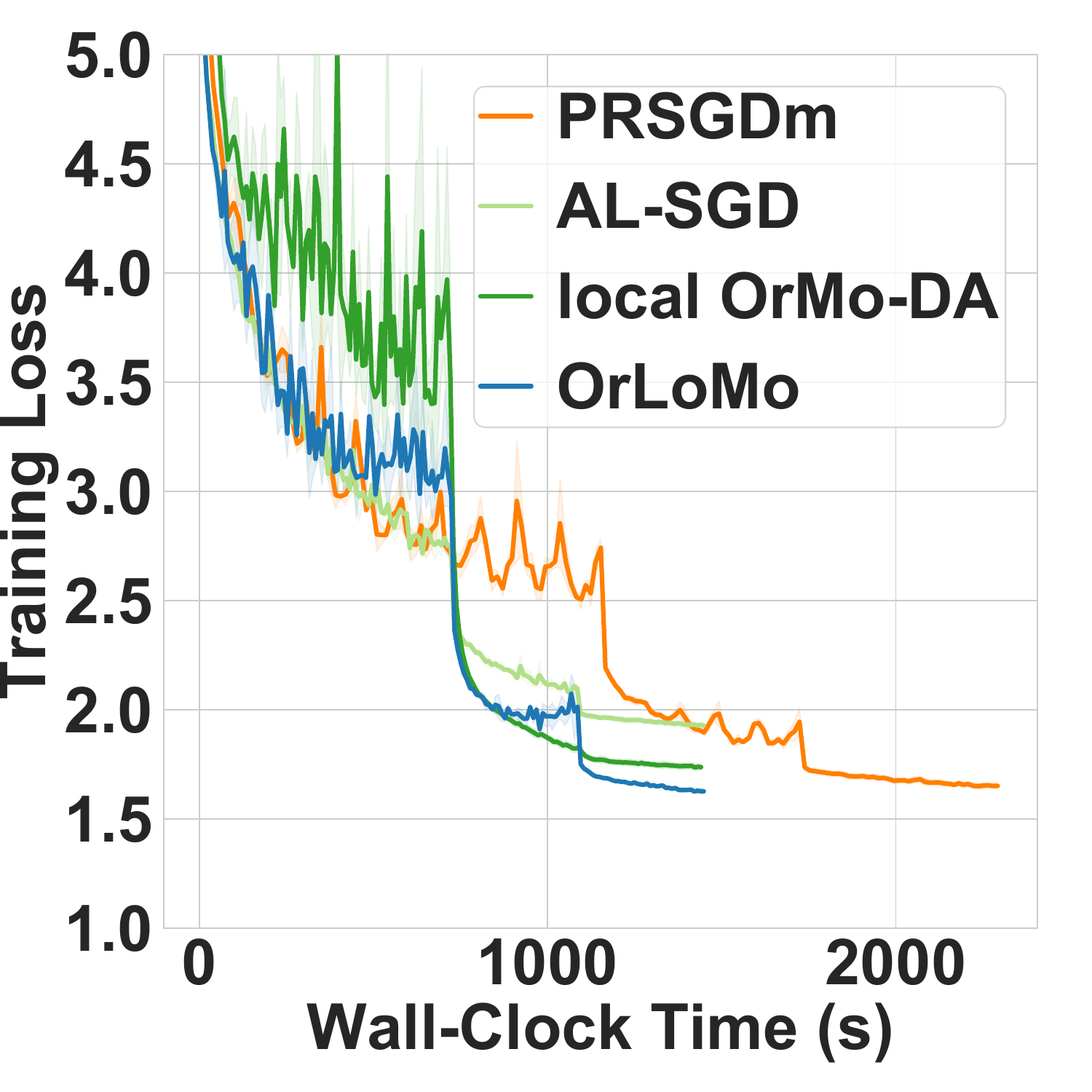}
      \includegraphics[width=0.23\linewidth]{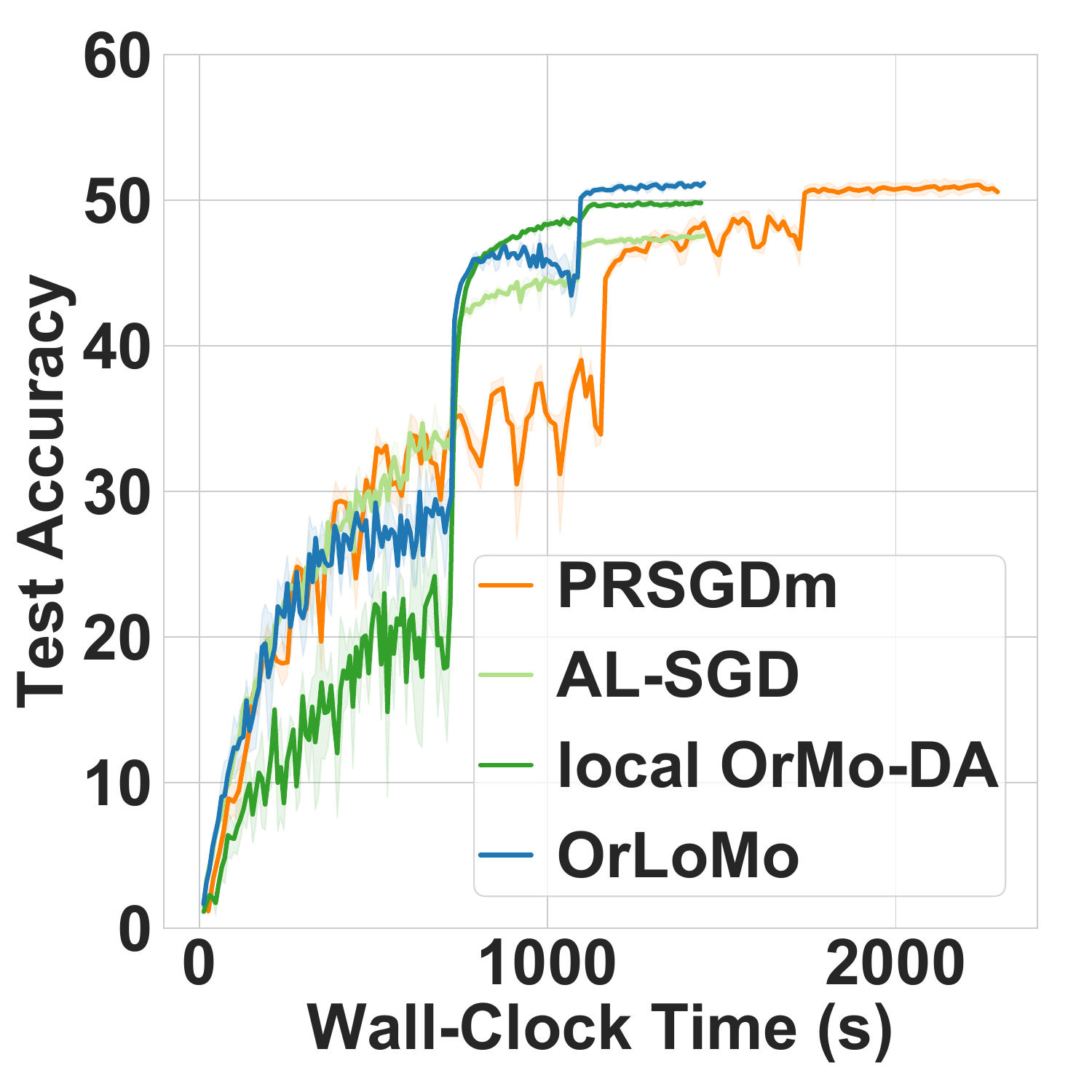}}
\caption{Experimental curves of ResNet32 model on Tiny-ImageNet dataset when $K=16, S=16$.}\label{fig:resnet32-loss}
\end{figure*}  
\subsection{Algorithm Details}
The details of local OrMo-DA are shown in Algorithm \ref{alg:local OrMo-DA}.
\begin{algorithm}[!t]
    \caption{local OrMo-DA}
    \begin{algorithmic}[1]
 \label{alg:local OrMo-DA}   
    \STATE \textbf{Input}: number of workers $K$, number of global iterations $T$, number of local iterations $S$, learning rate $\gamma$, momentum coefficient $\beta \in [0,1)$;
    \STATE \underline{\textbf{Server:}}  
    \STATE \textbf{Initialization}: parameter $\w_0$, momentum $\u_0 = \0$;  
    \STATE Send $\w_{0}$ to all workers;
    \FOR {$t=0$  \textbf{to}  $T-1$}
    \IF{$\lceil \frac{t}{K}\rceil > \lceil \frac{t-1}{K}\rceil$}
    \STATE $\w_{t+\frac{1}{2}} = \w_{t}-\beta \u_{t}$, $\u_{t+\frac{1}{2}}=\beta \u_t$;
    \ELSE 
    \STATE $\w_{t+\frac{1}{2}} = \w_t$, $\u_{t+\frac{1}{2}}= \u_t$; 
    \ENDIF    
    \STATE  Receive $\Delta \w_{ite (t,k_t)}^{k_t}$ from some worker $k_t$;
    \STATE $\u_{t+1} = \u_{t+\frac{1}{2}} + \beta^{\lceil \frac{t}{K}\rceil - \lceil \frac{ite (t,k_t)}{K}\rceil}\eta_t{\Delta \w_{ite (t,k_t)}^{k_t}}$;
    \STATE $\w_{t+1} = \w_{t+\frac{1}{2}} - \frac{1-\beta^{\lceil \frac{t}{K}\rceil - \lceil \frac{ite (t,k_t)}{K}\rceil+1}}{1-\beta} \eta_t {\Delta \w_{ite (t,k_t)}^{k_t}}$;
    \STATE Send $\w_{t+1}$ to the worker;
    \ENDFOR
    \STATE Notify all workers to stop;
    \STATE \underline{\textbf{Worker $k:$}} $(k \in [K])$  
    \REPEAT
    \STATE Wait until receiving $\w_{t'}$ from the server;
    \STATE Initialize local variables $\tilde{\w}_{t',0}^k=\w_{t'}$;  
    \FOR {$s=0$  \textbf{to}  $S-1$}
    \STATE Randomly sample $\xi \sim \DM$ and then compute the stochastic gradient $\g_{t',s}^{k} = \nabla f(\tilde{\w}_{t',s}^{k}; \xi)$;
    \STATE $\tilde{\w}_{t',s+1}^{k} = \tilde{\w}_{t',s}^{k} - \gamma \g_{t',s}^{k}$;   
    \ENDFOR    
    \STATE $\Delta \w_{t'}^k = \tilde{\w}_{t',0}^k- \tilde{\w}_{t',S}^k$;    
    \STATE Send $\Delta \w_{t'}^k$ to the server;  
    \UNTIL{receive server's notification to stop}
    \end{algorithmic}
    \end{algorithm}
\end{document}